\documentclass{article}

\usepackage[numbers,sort&compress]{natbib}
\usepackage[preprint]{neurips_2026}

\usepackage[utf8]{inputenc}
\usepackage[T1]{fontenc}
\usepackage{bm}
\usepackage{hyperref}
\usepackage{url}
\usepackage{booktabs}
\usepackage{amsfonts}
\usepackage{nicefrac}
\usepackage{microtype}
\usepackage{xcolor}
\usepackage{amsmath}
\usepackage{amssymb}
\usepackage{mathtools}
\usepackage{amsthm}
\usepackage{float}

\usepackage{enumitem}

\usepackage{caption}
\usepackage{subcaption}
\usepackage{csquotes}
\MakeOuterQuote{"}

\theoremstyle{plain}
\newtheorem{theorem}{Theorem}[section]

\newtheorem{lemma}[theorem]{Lemma}
\newtheorem{corollary}[theorem]{Corollary}
\theoremstyle{definition}

\newtheorem{remark}[theorem]{Remark}
\newtheorem{example}[theorem]{Example}
 
\title{Learnability and Competition in High-Dimensional Multi-Component ICA}

\author{%
  \textbf{Eser İlke Genç\textsuperscript{1}} \hspace{1.5cm} 
  \textbf{Samet Demir\textsuperscript{1}} \hspace{1.5cm} 
  \textbf{Zafer Doğan\textsuperscript{1,2}\thanks{Corresponding author}} \\[0.3cm]
  \textsuperscript{1}MLIP Research Group, KUIS AI Center, Koç University \quad
  \textsuperscript{2}Department of EEE, Koç University
 \\
  \texttt{\{egenc25,sdemir20,zdogan\}@ku.edu.tr}
}

\usepackage{amsmath}
\usepackage{amssymb}
\usepackage{mathtools}
\usepackage{amsthm}

\usepackage[capitalize,noabbrev]{cleveref}

\usepackage{float}

\usepackage{enumitem}

\usepackage{caption}
\usepackage{subcaption}
\usepackage{csquotes}

\usepackage{amsmath,amsfonts,bm}

\def\eqref#1{(\ref{#1})}

\def\1{\bm{1}}

\DeclareMathAlphabet{\mathsfit}{\encodingdefault}{\sfdefault}{m}{sl}
\SetMathAlphabet{\mathsfit}{bold}{\encodingdefault}{\sfdefault}{bx}{n}

\begin{document}

\maketitle

\begin{abstract}
Independent Component Analysis (ICA) is a foundational tool for unsupervised representation learning, yet its high-dimensional theory remains largely limited to single-component recovery. We develop an asymptotically exact mean-field theory for multi-component online ICA, capturing the coupling induced by simultaneous learning and orthogonalization. In the high-dimensional limit, the joint empirical distribution of learned estimates and ground-truth components converges to a deterministic process, yielding a closed ODE system for the overlap matrix between learned directions and true components. This characterization reveals a genuinely multi-component, initialization-driven phase structure: a \emph{decoupled regime}, where estimates align with distinct components and evolve nearly independently, and a \emph{competition regime}, where overlapping initializations induce orthogonality-driven conflicts, slow reorientation, and delayed convergence. Our steady-state analysis gives explicit learnability boundaries and competition conditions linking step size, data moments, and initialization. These conditions show that larger higher-order moments and competition shrink the stable learning-rate window, increase convergence times, and predict a staircase phenomenon in which the number of recoverable components changes discretely with the learning rate. Experiments on synthetic data and hyperspectral remote sensing data validate the predicted trajectories and phase behavior.
\end{abstract}

\section{Introduction}
Independent Component Analysis (ICA) is a foundational \emph{feature learning} framework for recovering latent independent components from linear mixtures \cite{fransizICA1985,comon1994independent, amari1995new, hyvarinen2000independent}. After whitening, second-order statistics are removed by centering and decorrelating the data, so the remaining identifiable structure is necessarily \emph{non-Gaussian}. As a result, ICA relies on nonlinear score functions that extract information from \emph{higher-order} moments \cite{ bell1995information,fastica,hyvarinen_oja_2001_book}. In modern streaming settings, ICA is naturally implemented by stochastic gradient updates \cite{ablin2019stochastic, li2021stochastic}, placing it within the broader theory of high-dimensional stochastic gradient descent (SGD) \cite{Saad1995Dynamics,wang2017scaling,JMLR:v21:19-245,paquette21Large,arous2023highdimensionallimittheoremssgd, jagannath2026highdimensional, collins2024hitting}. Understanding ICA in this regime is therefore not only a classical problem in unsupervised learning, but also a testbed for developing exact theories of high-dimensional online learning.

Despite substantial progress, a rigorous high-dimensional theory of online ICA remains largely limited to the single-component setting. Existing analyses either track the evolution of a single vector \cite{li2016online,wang2017ica} or study the recovery of one non-Gaussian direction \cite{li2021stochastic,ricci2025}. A prior high-dimensional analysis of multi-component online ICA studied transient trapping in a particular Hebbian algorithm \cite{basalyga2003}, but did not provide a full-trajectory limit for coupled online learning dynamics. This leaves open a central question: \emph{what are the exact high-dimensional dynamics when multiple ICA components are learned simultaneously?} The question is not a straightforward extension of the single-component case. Practical ICA estimates many directions at once, and orthogonalization couples their updates. Consequently, the learning dynamics depend not only on the marginal evolution of each component, but also on competition between estimates, their initialization geometry, the step size, and the higher-order moments of the data.

This paper develops an asymptotically exact high-dimensional theory of multi-component online ICA. We prove that projected SGD converges, in the high-dimensional limit, to a deterministic mean-field PDE governing the orthogonality-coupled trajectory. Our first contribution is an exact mean-field characterization of the joint training dynamics. We introduce the joint empirical measure over the coordinates of the learned and ground-truth vectors and prove that, as the ambient dimension diverges, this empirical measure converges to a deterministic limit governed by a partial differential equation (PDE). A key consequence is that the overlap matrix between the learned directions and the true components satisfies a closed system of ordinary differential equations (ODEs). These ODEs provide a finite-dimensional, tractable description of the macroscopic learning trajectories.

The limiting ODEs reveal a phase structure that is invisible in single-component analyses. Depending on the initialization, the dynamics separate into two qualitatively different regimes: \emph{decoupling} and \emph{competition}. In the decoupling regime, orthogonality constraints are inactive at leading order: each estimate aligns with a distinct component, and the macroscopic dynamics mimic independent single-component learning. In the competition regime, by contrast, multiple estimates initially pursue the same component. Orthogonalization then creates conflicts between the estimates, forcing reorientation, slowing convergence, and increasing sensitivity to the step size and higher-order moments. Thus, multi-component ICA exhibits genuinely collective behavior that cannot be reduced to parallel copies of the single-component theory.

Beyond transient dynamics, our ODEs characterize steady states and phase transitions. We derive explicit \emph{learnability boundaries} relating the learning rate, initialization, and higher-order moments, giving the precise functional dependence of the maximum stable learning rate on the data moments. For the canonical case of cubic nonlinearities, these boundaries admit closed-form expressions involving fourth- and sixth-order moments that determine when nontrivial fixed points exist. The resulting theory predicts a \emph{staircase-like} phenomenon: as the learning rate increases, the number of recoverable components drops discretely. In the competition regime, the staircase persists but shifts to smaller permissible learning rates, quantifying the additional instability induced by competition.

Finally, we validate the theory both synthetically and on real data. In simulations with controllable higher-order moments, Monte Carlo trajectories closely match the mean-field ODEs in both the decoupling and competition regimes, confirming the asymptotic predictions at finite dimension. We further evaluate the theory on hyperspectral remote sensing data from the Indian Pines dataset \cite{Baumgardner15}, which is naturally modeled as a linear mixture of non-Gaussian components corresponding to distinct land-cover classes \cite{lupu2022stochastic}. Despite the moderate dimensionality of this dataset, the ODE predictions accurately track the empirical learning dynamics and explain the observed recovery behavior.

Overall, our contributions are as follows:

\begin{enumerate}[nosep, leftmargin=*]

\item We give an exact mean-field theory for the \textit{joint training dynamics of multiple ICA components}: the joint empirical measure converges to a deterministic weak-form PDE, and the component overlaps obey a closed ODE system.

\item We reveal an initialization-dependent phase structure: a \textit{decoupled regime}, where estimates align with distinct components and evolve nearly independently, and a \textit{competition regime}, where overlapping initializations induce orthogonality-driven conflicts and slow learning.

\item We derive explicit steady-state \textit{learnability boundaries} and \textit{competition conditions} in terms of data moments, learning rate, and initialization, showing when larger higher-order moments or competition force smaller learning rates and longer convergence times.
\end{enumerate}

\paragraph{Notation.} We denote conditional expectation with respect to the filtration $\{\mathcal{F}_k\}_{k\ge 0}$ by $\mathbb{E}_k[\cdot] := \mathbb{E}[\cdot \mid \mathcal{F}_k]$, where $\mathcal{F}_k$ is the $\sigma$-algebra generated by all randomness up to iteration $k$, including the initialization and the samples $\{\bm{y}_s\}_{s < k}$. For a matrix $\bm{A}_t \in \mathbb{R}^{m \times n}$ indexed by iteration $t$, we denote its $(i,j)$-th entry by $A_{t,i,j}$, and its $i$-th row and $j$-th column by $\bm{A}_{t,i,:}$ and $\bm{A}_{t,:,j}$, respectively. We let $\mathrm{diag}(\bm{A}_t)$ be the diagonal matrix whose diagonal entries coincide with those of the matrix $\bm{A}_t$ or $\mathrm{diag}(\boldsymbol{\upsilon}_t)$ for the diagonal matrix formed by the entries of the vector $\boldsymbol{\upsilon}_t$. We define the operator $\mathcal{T}(\bm{A}) \coloneqq \operatorname{tril}(\bm{A} + \bm{A}^\top) - \operatorname{diag}(\bm{A})$, where $\operatorname{tril}(\cdot)$ extracts the lower-triangular matrix by zeroing out all strictly upper-triangular entries of $\bm{A}$. The first derivative of a function $f$ is denoted $f^{\prime}$. Finally, $\text{sgn}(x)$ is used for the sign function and $\operatorname{Tr}(\cdot)$ is the trace operator for matrices.

\section{Related work}

\paragraph{Independent component analysis.}
ICA originated in blind source separation \cite{fransizICA1985} and was formalized through independence, mutual information, and cumulant-based contrast functions \cite{comon1994independent}, including fourth-order cumulant-tensor methods \cite{frieze1996learninglinear}. This line of work led to Infomax and neural learning rules \cite{linskerinfomax1988,bell1995information}, online mutual-information and natural-gradient algorithms \cite{amari1995new,amari1998natural}, higher-order algebraic and cumulant-based contrasts \cite{DELFOSSE199559,Cardoso1999HighOrderCF}, and negentropy-based fixed-point methods, most notably FastICA \cite{hyvarnien1997difentropy,hyv_oja_fastfixed,HYVARINEN1998,hyvarnien1997_familyof,fastica,hyvarinen2000independent, hyvarinen_oja_2001_book}. Despite this extensive literature, the high-dimensional streaming theory of ICA remains largely restricted to the single-component setting: prior works establish diffusion limits \cite{li2016online}, finite-sample guarantees \cite{li2021stochastic}, scaling limits without orthogonalization-induced coupling \cite{wang2017ica}, and recent analyses of FastICA and online SGD for recovering one non-Gaussian direction \cite{ricci2025}. Complementary batch results characterize computational--statistical tradeoffs in large-dimensional ICA \cite{auddy2023large}, but do not describe online learning trajectories. The closest multi-component dynamical analysis \cite{basalyga2003} studies an online Hebbian ICA algorithm \cite{HYVARINEN1998} locally near metastable fixed points. In contrast, we provide a full-trajectory high-dimensional limit for orthogonality-coupled multi-component online ICA, yielding an empirical-measure PDE and closed overlap ODEs that capture both transient competition and steady-state learnability.

\paragraph{High-dimensional dynamics of SGD.}
Our work is also connected to recent advances in deriving closed dynamical descriptions of stochastic gradient methods in high dimensions. Depending on the model and scaling, these descriptions take the form of deterministic ODE or PDE systems for order parameters, homogenized stochastic processes, or empirical-measure limits \cite{paquette2022homogenizationsgdhighdimensionsexact,veiga2022phase,collins2024hitting,arous2023highdimensionallimittheoremssgd,wang2017scaling}. Empirical-measure approaches trace back to the mean-field theory of interacting particle systems \cite{mckean1967propagation,snitzman1991propcgaos}, and have recently been used to analyze SGD and online-learning dynamics in linear regression \cite{balasubramanian2025high,li2025statistical}, streaming PCA and subspace tracking \cite{wang2016online,demir2025implicitly,balzano2018,wang2018}, contrastive learning \cite{meng2024training}, high-dimensional GANs \cite{NEURIPS2019_6b3c49bd,bond2024}, and neural-network training \cite{Goldt_2020,songmei_meanfieldview_twolayer,sirignano2019mean,arnaboldi2023high}. We bring this perspective to multi-component ICA, where the central challenge is resolving the orthogonalization-induced interaction between learned directions. This interaction produces an initialization-dependent transition between decoupled learning and competition, a phenomenon that cannot arise in the single-component setting.

\section{Problem setting}
\subsection{Data model}

We consider a high-dimensional online ICA model with ambient dimension $n$ and a fixed number $p$ of latent components, with $p$ finite as $n\to\infty$. This scaling captures settings in which the number of sources is moderate while the observations are high-dimensional \cite{Calhoun2001,lupu2022stochastic}. At each iteration $k$, we observe
\begin{align}\label{data_generation}
\bm{y}_k=\frac{1}{\sqrt{n}}\bm{U}\bm{c}_{k}+\bm{a}_k ,
\end{align}
where $\bm{U}=[\bm{u}_1,\dots,\bm{u}_p]\in\mathbb{R}^{n\times p}$ contains the unknown components to be recovered, and $\bm{c}_k=(c_{k,1},\dots,c_{k,p})^\top\in\mathbb{R}^p$ contains the latent source coefficients. The component vectors are deterministic, mutually orthogonal, and normalized as $\|\bm{u}_i\|=\sqrt{n}$. The coefficients $\{c_{k,i}\}$ are independent across both $i$ and $k$, have zero mean and unit variance, and are non-Gaussian; their distributions may differ across components and are distinguished by higher-order moments.

The Gaussian term $\bm{a}_k\sim\mathcal{N}(\bm{0},\bm{I}-\tfrac{1}{n}\bm{U}\bm{U}^\top)$ lies in the orthogonal complement of the signal subspace and serves only to whiten the observations. In particular, $\mathbb{E}_k[\bm{y}_k\bm{y}_k^\top]=\bm{I}$, so second-order statistics are uninformative about $\bm{U}$. Consequently, the components are identifiable only through the higher-order non-Gaussian structure of the latent coefficients. This construction extends the standard single-spike high-dimensional ICA model to a finite multi-component setting while preserving the whitening assumption central to classical ICA.

\subsection{Online ICA via projected SGD}

We estimate the components with a strictly online projected-SGD algorithm, processing one sample at a time without mini-batching. Let $\bm{X}_k\in\mathbb{R}^{p\times n}$ collect the estimate vectors row-wise, with rows $\bm{x}_{k,i}$ satisfying $\|\bm{x}_{k,i}\|=\sqrt{n}$ for all $i\in\{1,\dots,p\}$. Given $\bm{y}_k$, we first take the SGD step
\begin{equation}\label{eq:online_sgd}
    \tilde{\bm{X}}_k
    =
    \bm{X}_k
    +
    \frac{\tau}{\sqrt{n}}\, f\Big(\frac{1}{\sqrt{n}}\bm{X}_k\bm{y}_k\Big)\, \bm{y}_k^{\top} - \frac{\tau}{n}\phi(\bm{X}_k),
\end{equation}
where $f$ is applied entrywise to the normalized projections $\frac{1}{\sqrt{n}}\bm{X}_k\bm{y}_k$ and extracts higher-order non-Gaussian structure, e.g., $f(x)=\pm x^3$ for symmetric sources and $f(x)=\pm x^2$ for asymmetric sources. The map $\phi$ is applied entrywise and is the gradient of the regularizer $\Phi(x)=\int \phi(x)\,dx$, allowing structural priors such as sparsity to enter the dynamics; $\tau$ is the learning rate. We then project back onto the row-orthogonal constraint,
\begin{equation}\label{eq:seq_GS}
    \bm{X}_{k+1} = \mathrm{Orthogonalize}(\tilde{\bm{X}}_k),
\end{equation}
where $\mathrm{Orthogonalize}(\cdot)$ returns mutually orthogonal rows, each with norm $\sqrt{n}$. This projection is the source of the multi-component coupling: even though the stochastic-gradient step acts row-wise, orthogonalization makes each estimate interact with the others.

\begin{remark}
For concreteness, we analyze Gram--Schmidt orthogonalization \cite{gram_schmidt1,schmidt1907theorie}. The resulting decoupling and competition regimes, however, are not artifacts of this choice: they persist under both sequential and symmetric orthogonalization schemes, as shown in Appendix \ref{app:sym_orth}.
\end{remark}

\section{Main results}
We begin by defining the central object of our analysis: the joint empirical measure of the learned components and the ground-truth feature vectors. For each coordinate index \(\alpha \in \{1,\ldots,n\}\), denote the stacked estimate  $\bar{\bm{x}}_{k}^{(\alpha)} \coloneqq \bm{X}_{k,:,\alpha}$ and ground-truth vectors $\bar{\bm{u}}^{(\alpha)} \coloneqq \bm{U}_{\alpha,:}$. Then, we define the joint empirical measure \(\mu_k^n\) as a probability measure on
\(\mathbb{R}^p \times \mathbb{R}^p\) by
\begin{equation}
\mu_k^n(\bm{x},\bm{u})
:=
\frac{1}{n}
\sum_{\alpha=1}^n
\delta\!\left(\bm{x}-\bar{\bm{x}}_{k}^{(\alpha)},\,\bm{u}-\bar{\bm{u}}^{(\alpha)}\right),
\label{eq:mu_stacked_dirac}
\end{equation}
where \(\delta(\bm{x}-\bar{\bm{x}}_{k}^{(\alpha)},\bm{u}-\bar{\bm{u}}^{(\alpha)})\) denotes the Dirac measure at \((\bar{\bm{x}}_{k}^{(\alpha)},\bar{\bm{u}}^{(\alpha)})\).
The random measure \(\mu_k^n\) takes values in \(\mathcal{M}(\mathbb{R}^{2p})\), the space of probability measures on \(\mathbb{R}^{2p}\). Intuitively, \(\mu_k^n\) encodes the joint empirical law of the coordinates of the current estimates and the ground truth.
This object serves as a convenient device: many macroscopic quantities of interest can be expressed as expectations with respect to the empirical measure \(\mu_k^n\). The following theorem characterizes the time evolution of the empirical measure, which is the main theoretical result of our analysis.
\begin{figure}[t]
  \centering
  \begin{subfigure}[t]{0.30\textwidth}
    \centering
    \includegraphics[width=\linewidth]{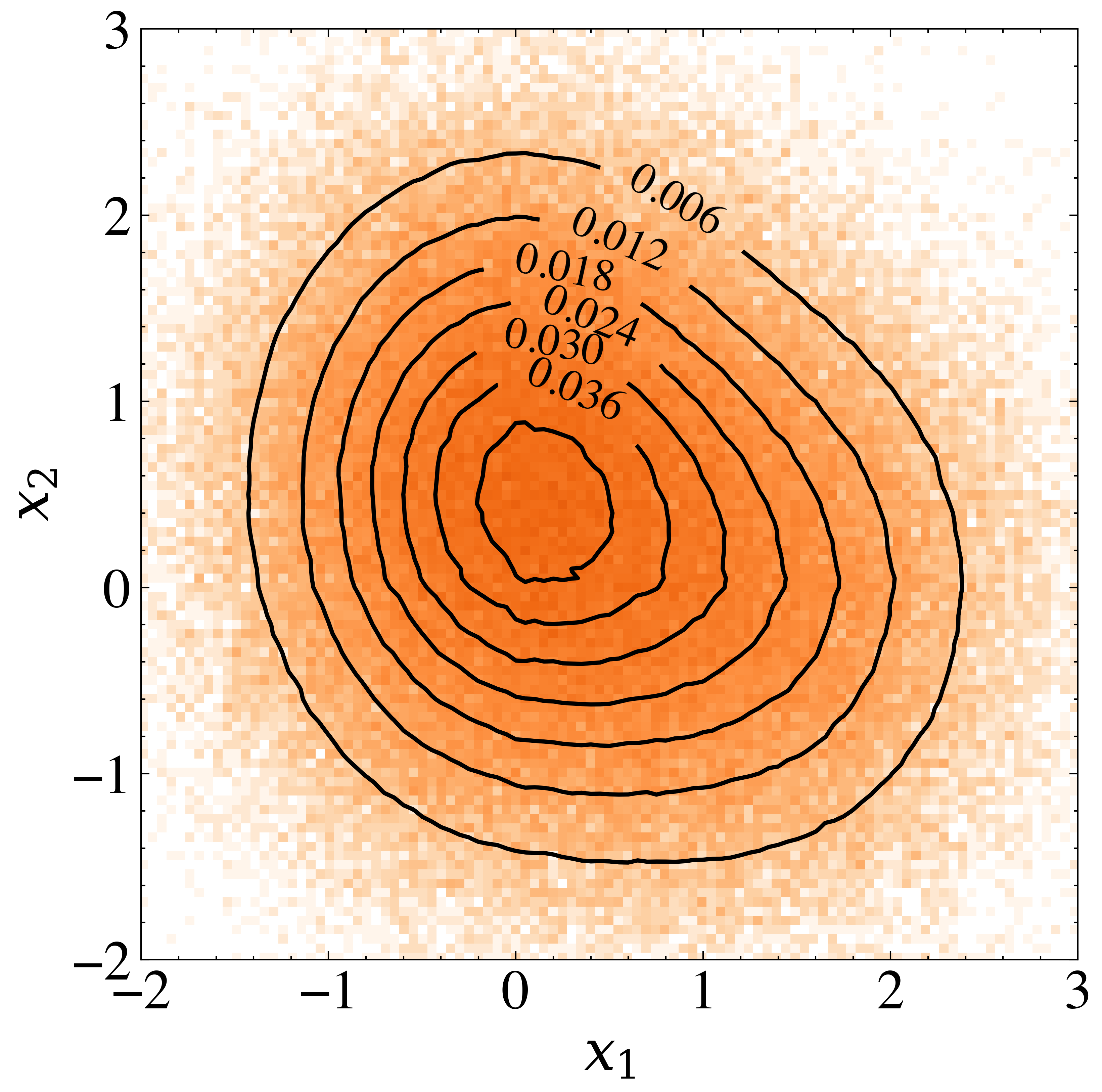}
    \caption{$t=0$}
    \label{fig:1a}
  \end{subfigure}\hfill
  \begin{subfigure}[t]{0.30\textwidth}
    \centering
    \includegraphics[width=\linewidth]{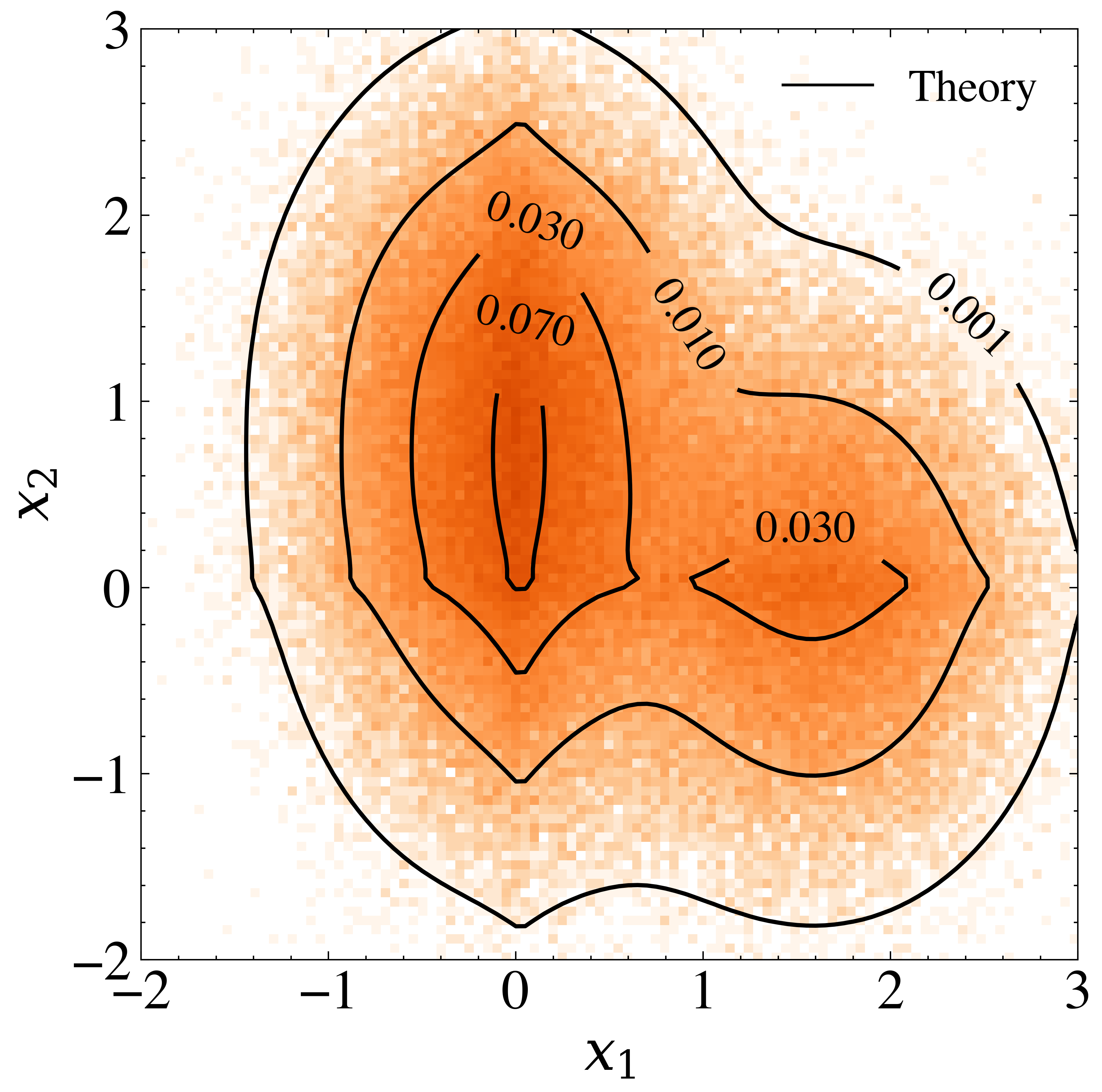}
    \caption{$t=100$}
    \label{fig:1b}
  \end{subfigure}\hfill
  \begin{subfigure}[t]{0.385\textwidth}
    \centering
    \includegraphics[width=\linewidth]{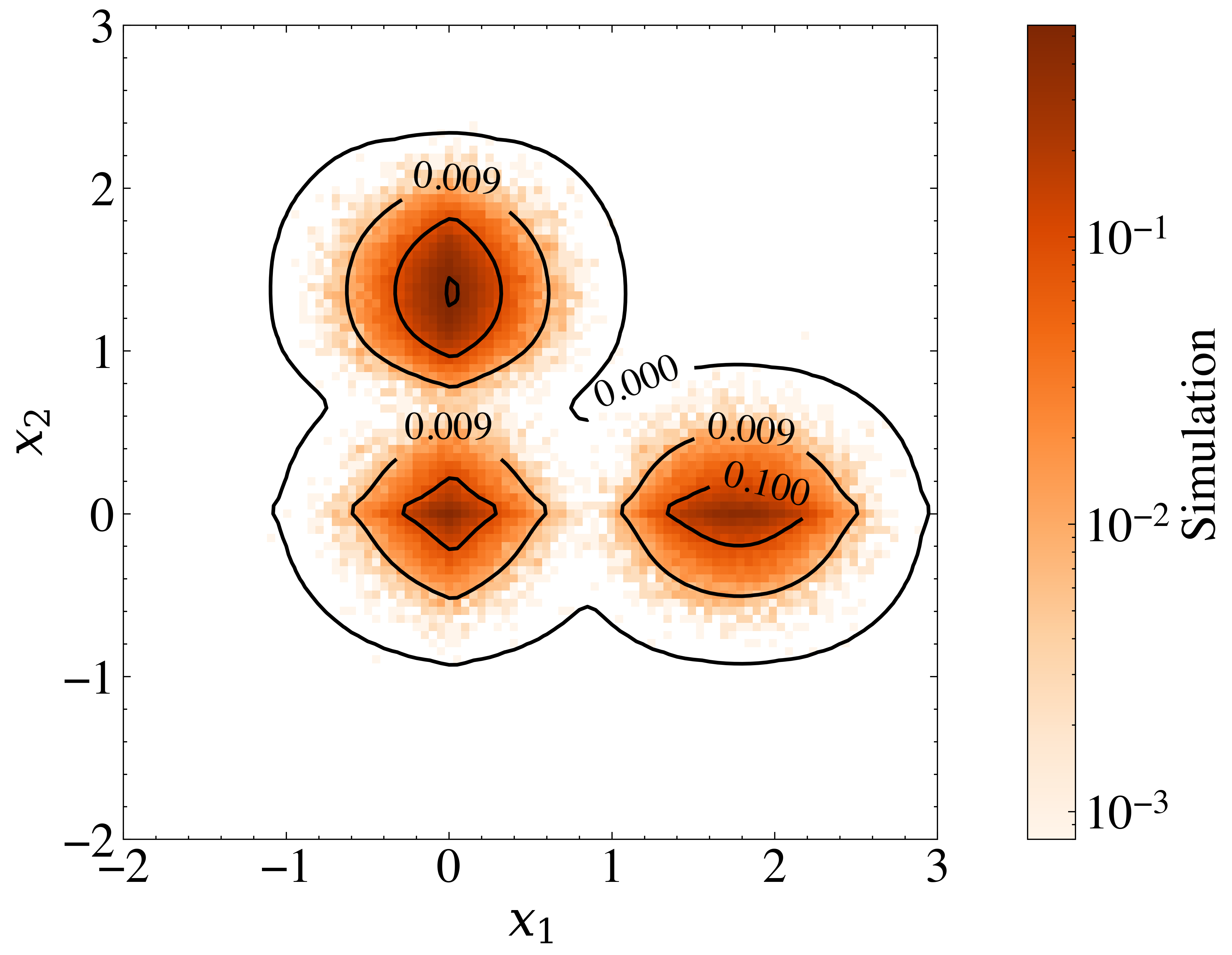}
    \caption{$t=200$}
    \label{fig:1c}
  \end{subfigure}
  \caption{Evolution of the joint probability limiting density for $p=2$. Comparison between theoretical predictions (contours) and Monte Carlo simulations (heatmaps) for $P_t(x_1,x_2,u_1,u_2)$ at times $t=0,100,200$. Here, we use the setting of Example \ref{example:cubic}. We set $\beta_1=1$ for $c_1$ while $\beta_2$ is set to $0$ for $c_2$. In simulations, component vectors $\bm{u}_1$ and $\bm{u}_2$ are drawn from sparse distributions: $\mathbb{P}(u=1/\sqrt{\rho_i}) = \rho_i$ and $\mathbb{P}(u=0) = 1-\rho_i$ with sparsity levels $\rho_1 = 0.5$ and $\rho_2 = 0.3$. This choice enables a clear visualization of the convergence behavior, with the components evolving toward three distinct regions in the $(\mathbf{x}_1,\mathbf{x}_2)$ space. We employ $\phi(x) = 0.1\,\operatorname{sgn}(x)$, corresponding to $L_1$ regularization commonly used in practice to induce sparsity. Here, $n=5000$, and $\tau = 0.01$.}
  \label{figure1}
\end{figure}
\begin{theorem}
\label{theorem}
Suppose that the initial empirical measure $\mu_0^n$ converges weakly to a deterministic measure $\mu_0$ as $n \rightarrow \ \infty$, the function $f$ and its first derivative are at most of polynomial growth, and $\phi$ is Lipschitz. We further assume that all moments of  $\bm{u}_i$ and $\bm{c}_k$ are bounded, as well as the moments of the initial estimates $\bm{x}_{0,i}$. Then, as $n \xrightarrow{} \infty$, the empirical measure process $\{\mu_{\lfloor tn\rfloor}^n\}_{t\ge 0}$ converges weakly to a deterministic limiting measure-valued process $\{\mu_t\}_{t\ge 0}$. 

Using the limiting process, we define 
\begin{equation}
\bm{Q}_t
\coloneqq
\int_{\mathbb{R}^{2p}} \bm{x}\,\bm{u}^{\top}\,\mu_t\mathrm{d}\bm{x}\,\mathrm{d}\bm{u}, \qquad \bm{R}_t \coloneqq \int_{\mathbb{R}^{2p}} \bm{x}\phi(\bm{x})^\top \mu_t\mathrm{d}\bm{x}\,\mathrm{d}\bm{u}.
\end{equation}
Then, for any bounded test function $\varphi$ that is three-times continuously differentiable on $\mathbb{R}^{2p}$, the limiting dynamics can be written in the weak integral form
\begin{equation}
\label{eq:weak_pde_main}
\begin{aligned}
\langle \varphi,\mu_t\rangle
=
\langle \varphi,\mu_0\rangle
+ \int_0^t
\Big\langle
\nabla_{\bm{x}}\varphi^{\top}\,\boldsymbol{\omega}_s, \mu_s \Big\rangle\,\mathrm{d}s + \int_0^t\frac{1}{2}\,\Big\langle\mathrm{Tr}\!\left(\mathbf{\Lambda}_s\,\nabla_{\bm{x}}^2\varphi\right),
\mu_s
\Big\rangle\,\mathrm{d}s,
\end{aligned}
\end{equation}
where $\langle \varphi,\mu_t\rangle := \int \varphi(\bm{x},\bm{u})\,\mu_t \,\mathrm{d}\bm{x}\,\mathrm{d}\bm{u}$.
The time-dependent coefficients $\boldsymbol{\omega}_t$ and $\mathbf{\Lambda}_t$ can be written in compact form as
\begin{align}
\bm{\omega}_t
&=
- \frac{\tau^2}{2}
\, \mathcal{T}(\bm{C_t})
\bm{x}  
- \tau \mathcal{T}(\bm{M_t})\bm{x} + \tau \mathcal{T}(\bm{R_t})\bm{x} + \tau\ \bm{\Psi}_t^\top\ \bm{u} - \tau \phi(\bm{x}), \\
\bm{\Lambda}_t &= \tau^2 \bm{C}_t,
\end{align}
where
\begin{align}
\bm{\Psi}_{t} &\coloneqq \mathbb{E}_{\bm{c}_t,\bm{e}} \Big[\bm{c}_t f(\bm{\upsilon}_{t})^\top - \bm{Q}_{t}^\top \mathrm{diag}(f^\prime(\bm{\upsilon}_{t})) \Big],  \ \quad 
\bm{M}_t \coloneqq \bm{Q}_t\bm{\Psi}_t, \ \quad \bm{C}_{t} \coloneqq \mathbb{E}_{\bm{c}_t, \bm{e}} \Big[ f(\bm{\upsilon}_{t})f(\bm{\upsilon}_{t})^\top \Big], \nonumber\\
\bm{\upsilon}_{t} &\coloneqq \bm{Q}_{t} \bm{c}_t+ \Big(\mathrm{diag}(\bm{I} - \bm{Q}_t \bm{Q}_t^\top)\Big)^{1/2} \bm{e}, \quad  \text{with}\quad \bm{e} \sim\mathcal N(\bm{0},\bm{I}) \nonumber 
\end{align}
and $\bm{c}_t$ denotes the mean-field scaled $\bm{c}_k$ in \eqref{data_generation}. 
\begin{proof}
We first derive drift $\boldsymbol{\omega}_t$ and diffusion $\mathbf{\Lambda}_t$ coefficients by computing the first two moments of the incremental step, conditioned on the filtration $\mathcal{F}_k$. After establishing exchangeability and the necessary uniform bounds, we show convergence to the deterministic process as $n\rightarrow\infty$. Explicit coefficient derivations and the formal proof are provided in Appendices~\ref{app:driftanddiffusion} and \ref{app:proof-of-theorem}, respectively.
\end{proof}
\end{theorem}
In the theorem, the used time-embedding $k=\lfloor tn\rfloor$ with $t\ge 0$ and the considered assumptions are consistent with the prior mean-field literature \cite{wang2017ica, wang2017scaling}. Furthermore, we note that the Lipschitz assumption on $\phi$ is a byproduct of our proof technique, whereas our experiments show that our results remain accurate beyond the Lipschitz class. Theorem \ref{theorem} yields the following two corollaries, which are useful to track the time-evolutions of the joint density $P_t$ and overlap matrix $\bm{Q}_t$, respectively.
\begin{corollary}
\label{corollary:strong_form}
If $\mu_t$ admits a density $P_t(\bm{x},\bm{u})$ with respect to Lebesgue measure on $\mathbb{R}^{2p}$, then $P_t$ satisfies the strong form of the PDE:
\begin{align}
\frac{\partial P_t}{\partial t}
=
-\nabla_{\bm{x}}\big(\boldsymbol{\omega}_t\,P_t\big)
+ \frac{1}{2}\,\mathrm{Tr}\!\left(\mathbf{\Lambda}_t\,\nabla_{\bm{x}}^2 P_t\right).
\end{align}
\end{corollary}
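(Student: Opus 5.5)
The plan is to derive the strong form from the weak formulation \eqref{eq:weak_pde_main} of Theorem~\ref{theorem} by integration by parts, followed by the fundamental lemma of the calculus of variations. Write $\langle\varphi,\mu_t\rangle=\int\varphi P_t\,\mathrm{d}\bm{x}\,\mathrm{d}\bm{u}$. We restrict first to test functions $\varphi\in C_c^\infty(\mathbb{R}^{2p})$, which are admissible in the theorem since they are bounded and three-times continuously differentiable. The coefficients $\boldsymbol{\omega}_s$ and $\mathbf{\Lambda}_s$ depend on $(\bm{x},\bm{u})$ only through the explicit affine-plus-$\phi$ form given in the theorem, with deterministic matrices $\bm{C}_s,\bm{M}_s,\bm{R}_s,\bm{\Psi}_s$ that are continuous in $s$. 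Moreover, $\mathbf{\Lambda}_s=\tau^2\bm{C}_s$ does not depend on $(\bm{x},\bm{u})$ at all.

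The first step is to handle the drift term. For compactly supported $\varphi$,
\[
\int \nabla_{\bm{x}}\varphi^\top\boldsymbol{\omega}_s P_s = -\int \varphi\,\nabla_{\bm{x}}\!\cdot(\boldsymbol{\omega}_s P_s),
\]
with no boundary terms. Since $\phi$ is Lipschitz, it is differentiable almost everywhere, so this divergence is understood in the distributional sense. The diffusion term is handled the same way, using two integrations by parts and the fact that $\mathbf{\Lambda}_s$ is constant in $\bm{x}$:
\[
\int \mathrm{Tr}(\mathbf{\Lambda}_s\nabla_{\bm{x}}^2\varphi)P_s = \int \varphi\,\mathrm{Tr}(\mathbf{\Lambda}_s\nabla_{\bm{x}}^2P_s).
\]
Substituting both identities into \eqref{eq:weak_pde_main} gives
\[
\int\varphi(P_t-P_0)=\int_0^t\!\int\varphi\Big[-\nabla_{\bm{x}}\!\cdot(\boldsymbol{\omega}_sP_s)+\tfrac12\mathrm{Tr}(\mathbf{\Lambda}_s\nabla^2_{\bm{x}}P_s)\Big]\,\mathrm{d}s .
\]

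The second step removes the test function and the time integral. The identity holds for a countable dense family of $\varphi$, so the measure $P_t-P_0$ coincides, as a distribution in $(\bm{x},\bm{u})$, with the time integral of the bracketed expression. The integrand in $s$ is continuous in $s$ when paired with each $\varphi$, because $s\mapsto\mu_s$ is weakly continuous (as the weak limit of the process) and the coefficient matrices are continuous. Differentiating in $t$, by the fundamental theorem of calculus, therefore gives the stated PDE in the distributional sense. If $P_t$ is regular enough, for example $C^{1}$ in $t$ and $C^2$ in $\bm{x}$, the equation holds pointwise.

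The main obstacle is justifying the integrations by parts when only existence of a density is assumed. This requires reading the derivatives of $P_t$ weakly, and making sure that $\boldsymbol{\omega}_sP_s$ is locally integrable. Local integrability follows because $\boldsymbol{\omega}_s$ has at most linear growth in $(\bm{x},\bm{u})$, using the Lipschitz property of $\phi$, and because $\mu_s$ has bounded moments, which are inherited from the moment bounds assumed in Theorem~\ref{theorem}. With compactly supported test functions, this is all that is needed, and the corollary is exactly the distributional restatement of \eqref{eq:weak_pde_main}.
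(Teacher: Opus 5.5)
Your argument is correct and is essentially the paper's route: the paper gives no separate proof, simply stating at the end of Appendix~\ref{app:proof-of-theorem} that the strong form holds once a density exists, which amounts to integrating the weak form \eqref{eq:weak_pde_main} by parts against compactly supported test functions, using that $\mathbf{\Lambda}_t=\tau^2\bm{C}_t$ is constant in $\bm{x}$. Your distributional reading (pointwise validity only under extra regularity of $P_t$) is more careful than the paper, and it simplifies slightly: local integrability of $\boldsymbol{\omega}_sP_s$ already follows from $P_s\in L^1$ and local boundedness of $\boldsymbol{\omega}_s$, and continuity in $s$ is unnecessary, since absolute continuity of $t\mapsto\langle\varphi,\mu_t\rangle$ already gives the equation in $\mathcal{D}'((0,T)\times\mathbb{R}^{2p})$.
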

\begin{figure}[t]
  \centering
  \begin{subfigure}[t]{0.32\textwidth}
    \centering
    \includegraphics[width=\linewidth]{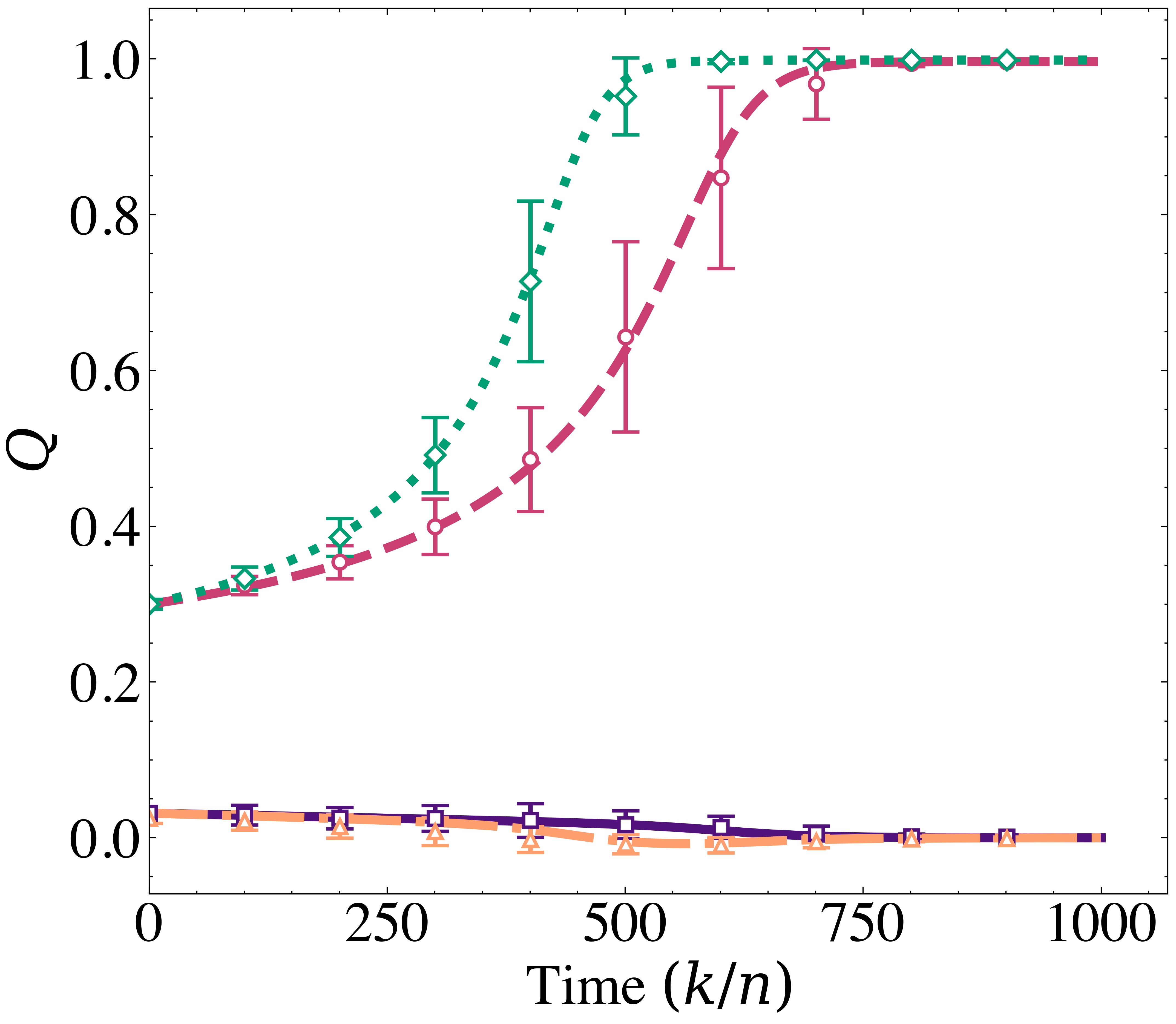}

    \caption{Decoupled alignment dynamics of estimates to true components. } 
    \label{fig:2a}
  \end{subfigure}\hfill
  \begin{subfigure}[t]{0.32\textwidth}
    \centering
    \includegraphics[width=\linewidth]{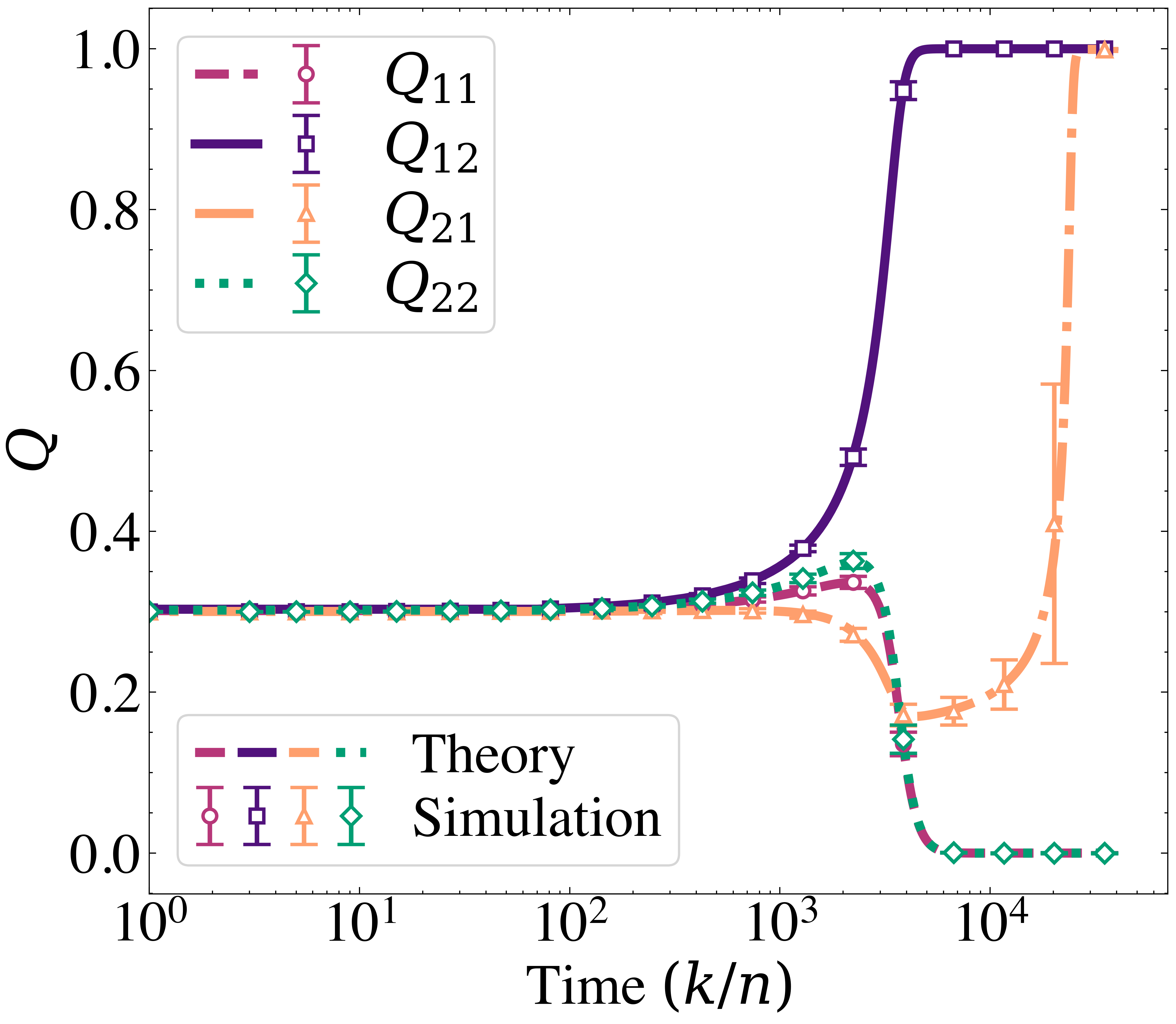}
    \caption{Competition between different estimates for the same component.}
    \label{fig:2b}
  \end{subfigure}\hfill
  \begin{subfigure}[t]{0.32\textwidth}
    \centering
    \includegraphics[width=\linewidth]{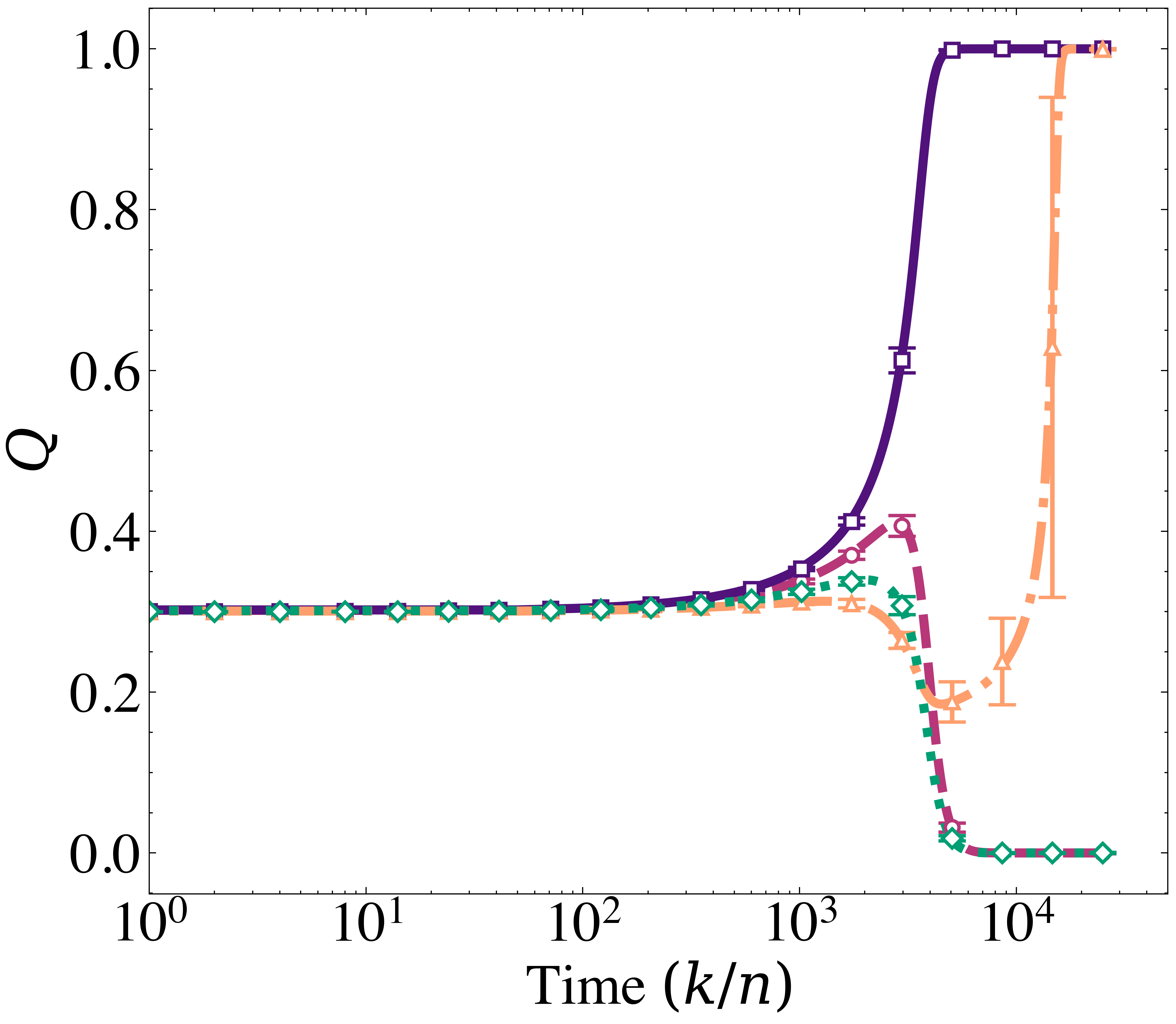}
    \caption{One of the estimates is trying to learn different components.}
    \label{fig:2c}
  \end{subfigure}
  \caption{Learning dynamics in decoupled and competition regimes. Lines denote ODE predictions, and markers denote Monte Carlo simulations with error bars indicating two standard deviations across 20 runs for the overlap matrix $\bm{Q}_t \in \mathbb{R}^{2 \times 2}$. In (a), initialization with $Q_{0,i,i}=0.3$ yields decoupled dynamics at $\tau=0.01$, $\beta_1 = 0.95$, $\beta_2 = 1$. In (b), competitive regime with $Q_{0,i,j}=0.3$; $Q_{1,2}$ and $Q_{2,2}$ compete for the second component, $Q_{1,2}$ wins, at $\tau=0.001$, $\beta_1 = 0.2, \beta_2 = 1$. Finally, in (c), we use the same initialization as (b) but different higher-order moments: first estimate competes across both components, $Q_{1,2}$ wins, at $\tau=0.001$, $\beta_1 = 0.95, \beta_2 = 1$.}
  \label{fig:learning_curves}
\end{figure}

Figure~\ref{figure1} demonstrates that the strong-form PDE in Corollary \ref{corollary:strong_form} accurately tracks the empirical evolution of the joint density in simulations. Marginal density plots are deferred to  Appendix~\ref{marginaldensities}.

In the setting we consider, the overlap matrix $\bm{Q}_t$ captures all the learning dynamics of interest. For example, it can be used to answer which row of our estimate matrix $\bm{X}_k$ learns which component. Therefore, in the rest of this paper, we analyze the learning dynamics through the lens of $\mathbf{Q}_t$.

\begin{corollary}\label{corrolaryode}
For $\phi(x)=0$, the weak formulation above immediately yields a closed evolution for the overlap matrix by choosing the appropriate test functions. In particular, the macroscopic dynamics of the limiting overlap matrix $\bm{Q}_t$ form a closed system of ordinary differential equations:
\begin{equation}
\begin{aligned}
\frac{d\bm{Q}_t}{dt}\,
&=
-\frac{\tau^2}{2}
\mathcal{T}(\bm{C}_t)\bm{Q}_t
-\tau
\mathcal{T}(\bm{M}_t)\bm{Q}_t + \tau\,\boldsymbol{\Psi}_t^\top,
\end{aligned}
\label{eq:Q_ODE}
\end{equation}
where $\bm{C}_t$, $\bm{M}_t$, and $\bm{\Psi}_t$ are defined in Theorem \ref{theorem}.
\end{corollary}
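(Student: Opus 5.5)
The plan is to take test functions in the weak form \eqref{eq:weak_pde_main} that read off individual entries of $\bm{Q}_t$. For fixed indices $i,j\in\{1,\dots,p\}$ the natural choice is $\varphi_{ij}(\bm{x},\bm{u}) = x_i u_j$, because then $\langle \varphi_{ij},\mu_t\rangle = Q_{t,i,j}$ by the definition of $\bm{Q}_t$ in Theorem~\ref{theorem}. The derivatives are simple. The gradient is $\nabla_{\bm{x}}\varphi_{ij} = u_j\,\bm{e}_i$. The Hessian $\nabla^2_{\bm{x}}\varphi_{ij}$ vanishes because $\varphi_{ij}$ is linear in $\bm{x}$. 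So the diffusion term $\tfrac12\langle \mathrm{Tr}(\bm{\Lambda}_s\nabla_{\bm{x}}^2\varphi_{ij}),\mu_s\rangle$ drops out, and only the drift term remains:
\begin{equation*}
Q_{t,i,j} = Q_{0,i,j} + \int_0^t \big\langle u_j\,\omega_{s,i}(\bm{x},\bm{u}),\mu_s\big\rangle\,\mathrm{d}s .
\end{equation*}

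Next I substitute $\bm{\omega}_s$ with $\phi\equiv 0$, which also makes $\bm{R}_s = 0$. Each remaining drift term is linear in $(\bm{x},\bm{u})$ with coefficient matrices $\mathcal{T}(\bm{C}_s)$, $\mathcal{T}(\bm{M}_s)$ and $\bm{\Psi}_s$. These matrices depend on $\mu_s$ only through $\bm{Q}_s$ and the source law, so they are deterministic functions of $s$. Pulling them out of the integral gives
\begin{equation*}
\big\langle u_j\,[\bm{A}\bm{x}]_i,\mu_s\big\rangle = \sum_k A_{ik}Q_{s,k,j},
\qquad
\big\langle u_j\,[\bm{\Psi}_s^\top \bm{u}]_i,\mu_s\big\rangle = \sum_k \Psi_{s,k,i}\,\langle u_k u_j,\mu_s\rangle .
\end{equation*}
The second moment $\langle u_ku_j,\mu_s\rangle$ equals $\frac1n\bm{u}_k^\top\bm{u}_j = \delta_{kj}$ for every $s$. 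This holds because the $\bm{u}$-marginal of $\mu_s$ does not evolve (the weak form contains only $\bm{x}$-derivatives) and the components are orthogonal with $\|\bm{u}_i\|=\sqrt n$. That term therefore reduces to $\Psi_{s,j,i} = (\bm{\Psi}_s^\top)_{ij}$. Assembling all entries gives the integral form of \eqref{eq:Q_ODE}, and differentiating in $t$ gives the ODE.

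The main obstacle is technical. The test functions $\varphi_{ij}$ are unbounded, while Theorem~\ref{theorem} only covers bounded $C^3$ test functions. I would handle this by truncation. Set $\varphi^{R}_{ij} = \chi_R(x_i)\chi_R(u_j)$, where $\chi_R$ is a smooth cutoff equal to the identity on $[-R,R]$ with bounded derivatives, apply the weak form, and let $R\to\infty$. Uniform moment bounds on $\mu_s$ on compact time intervals then justify dominated convergence in each term. These bounds should be inherited from the bounded moments of $\bm{u}$, $\bm{x}_0$ and $\bm{c}$, together with the norm constraint $\|\bm{x}_{k,i}\|=\sqrt n$, which gives $\langle x_i^2,\mu_s\rangle = 1$. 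The cutoff also produces extra terms involving $\chi_R''$, and these vanish in the limit by the same moment bounds.

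Closedness of the system follows from the fact that $\bm{\upsilon}_t$, and hence $\bm{C}_t$, $\bm{\Psi}_t$ and $\bm{M}_t$, depend only on $\bm{Q}_t$. Finally, continuity of $s\mapsto \bm{Q}_s$ makes the integrand continuous, so the integral identity can be differentiated.
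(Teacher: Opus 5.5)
This is correct and is essentially the paper's argument: the paper also takes $\varphi_{l,m}=x_lu_m$, uses $\nabla_{\bm{x}}^2\varphi_{l,m}\equiv 0$ to drop the diffusion term, and reads the ODE off the drift with $\phi\equiv 0$ and $\bm{R}\equiv 0$. Your truncation argument and your justification of $\langle u_ku_j,\mu_s\rangle=\delta_{kj}$ via the time-invariant $\bm{u}$-marginal are extra rigor the paper omits, since it applies the unbounded test function directly. One small correction: weak convergence only gives $\langle x_i^2,\mu_s\rangle\le 1$ by lower semicontinuity, not equality, but that bound is all your domination step needs.
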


\begin{example}
\label{example:cubic}
While Theorem \ref{theorem} holds for any nonlinearity satisfying the stated assumptions, we focus on $f(x) = \pm x^3$ and $p = 2$ as an analytically tractable example.  Furthermore, as the dynamics depend explicitly on high-order moments, we adopt a data model used in the literature \cite{Gultekin2025Learning} that allows precise control of higher-order statistics through a single parameter. Specifically, the data samples  $ \bm{y}_k$ are generated using coefficients of the form:
$
    c_{k,i} = \beta_i \varepsilon_{k,1} + \sqrt{1-\beta_i^2}\, \varepsilon_{k,2},
$
where $\varepsilon_{k,1} \sim \mathrm{Rademacher}$ and $\varepsilon_{k,2} \sim \mathcal{U}(-\sqrt{3},\sqrt{3})$ for $\beta_1, \beta_2 \in [0,1]$.
\end{example}
\begin{remark}
 In the setting of Example \ref{example:cubic}, the ODE \eqref{eq:Q_ODE} for $\bm{Q}_t$ can be written explicitly in terms of the moments of $\bm{c}_t$. The resulting system is derived in Appendix \ref{app:derivationforcube}.
\end{remark}

Figure \ref{fig:learning_curves} confirms a strong agreement between our theoretical ODE \eqref{eq:Q_ODE} in Corollary \ref{corrolaryode} and empirical simulations across all evaluated initial conditions. Furthermore, this figure highlights two distinct regimes, defined by the initial overlap matrix $\bm{Q}_0$ as follows. 
\paragraph{Regime I (decoupled dynamics).}

If each row and column of the initial overlap matrix $\bm{Q}_0$ has a single dominant $\mathcal{O}(1)$ entry, with all others negligible, then each estimator locks onto one ground-truth component. Consequently, $\bm{Q}_t$ becomes permutationally diagonal, and the coupled $p\times p$ mean-field dynamics reduce, to leading order, to $p$ independent one-dimensional ODEs.
\paragraph{Regime II (competition dynamics).}

If some row or column of $\bm{Q}_0$ contains multiple $\mathcal{O}(1)$ entries, Gram--Schmidt coupling remains active at leading order and the $p\times p$ ODE system does not decouple. Multiple estimators may chase the same component, or one estimator may chase multiple components, producing competition, reorientation, and slower transients.

\begin{figure}[t]
  \centering
  \begin{subfigure}[t]{0.245\textwidth}
    \centering
    \includegraphics[width=\linewidth]{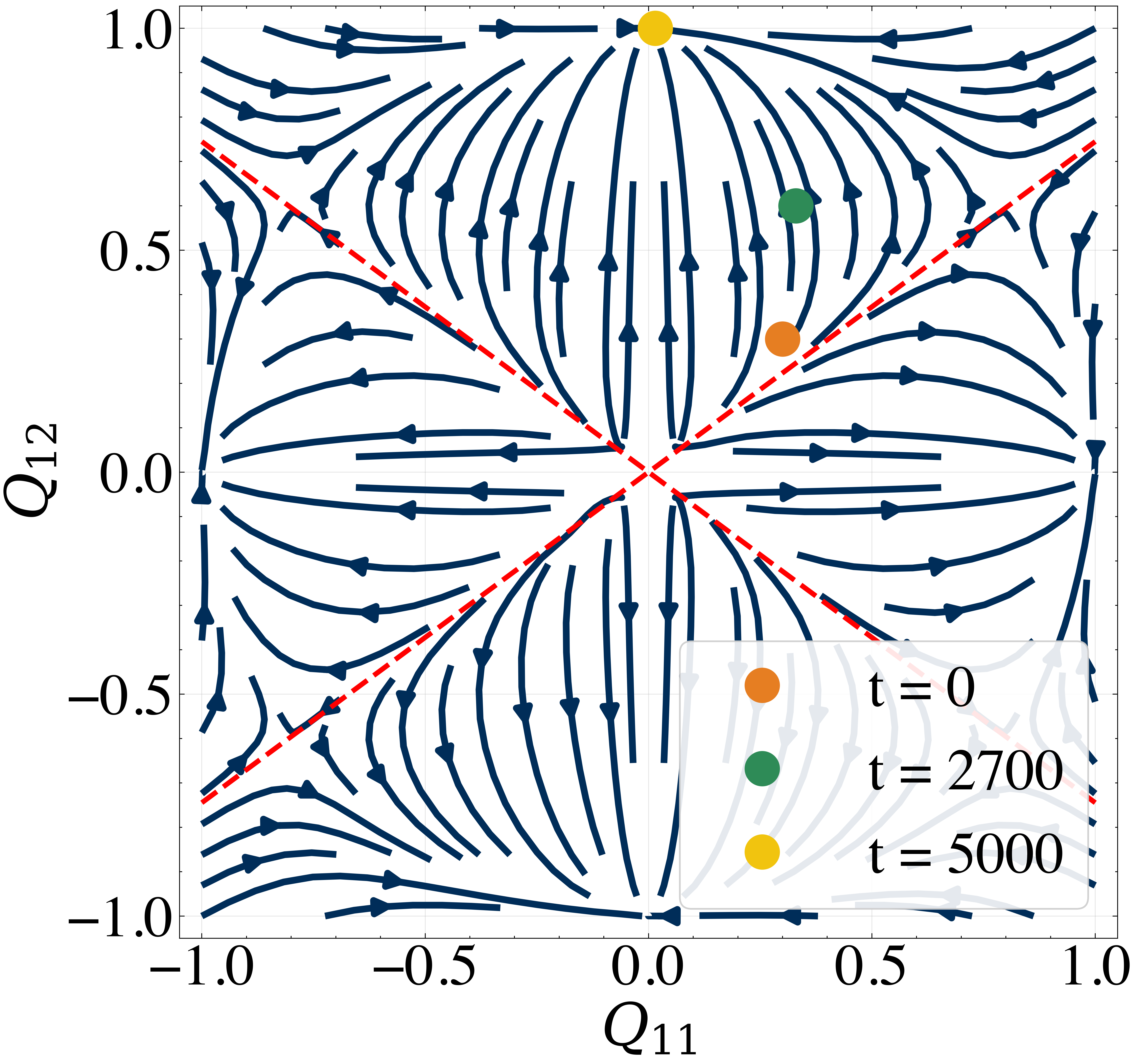}
    \caption{}
    \label{fig:3a}
  \end{subfigure}\hfill
  \begin{subfigure}[t]{0.245\textwidth}
    \centering
    \includegraphics[width=\linewidth]{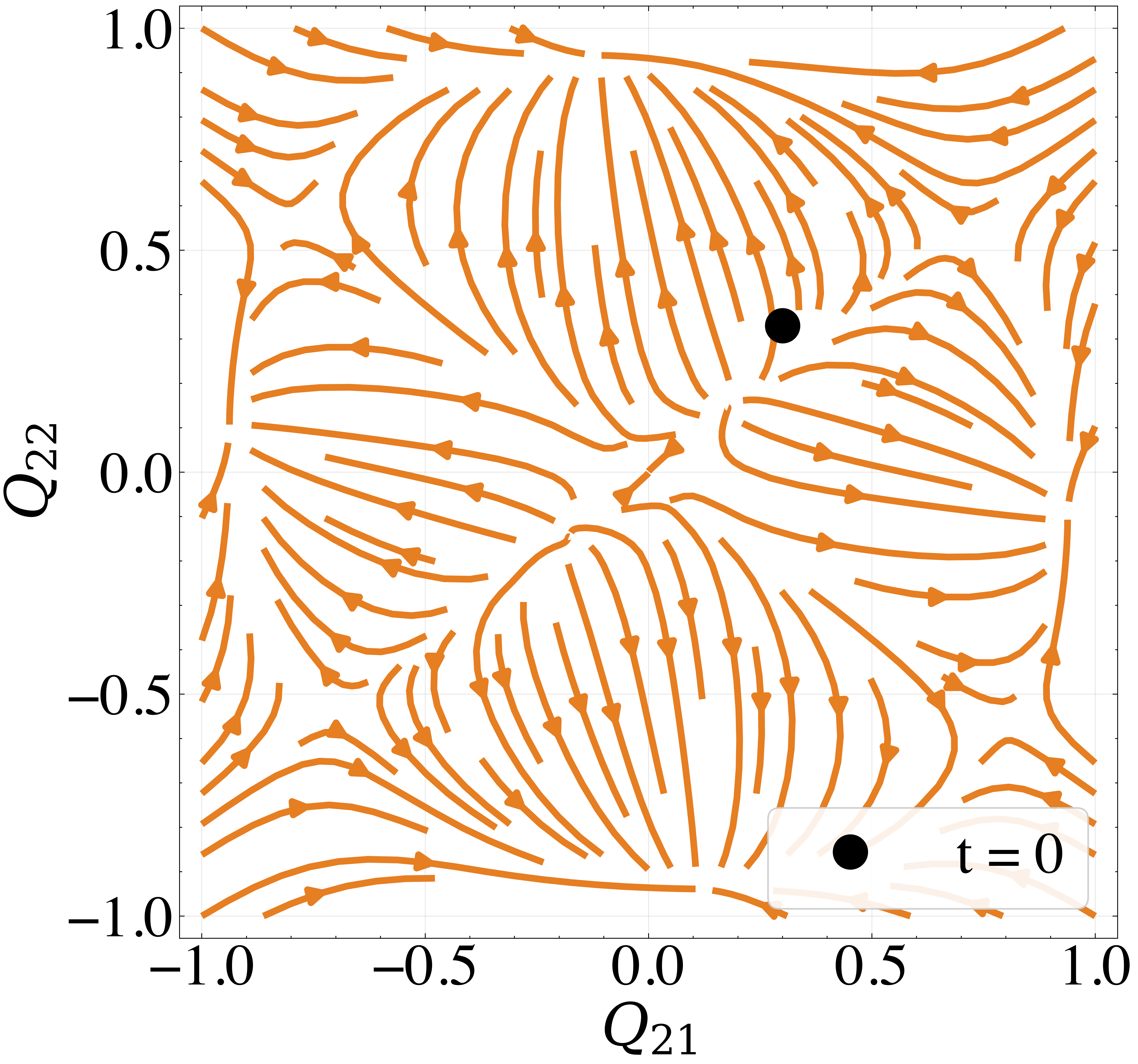}
        \caption{}
    \label{fig:3b}
  \end{subfigure}\hfill
  \begin{subfigure}[t]{0.245\textwidth}
    \centering
    \includegraphics[width=\linewidth]{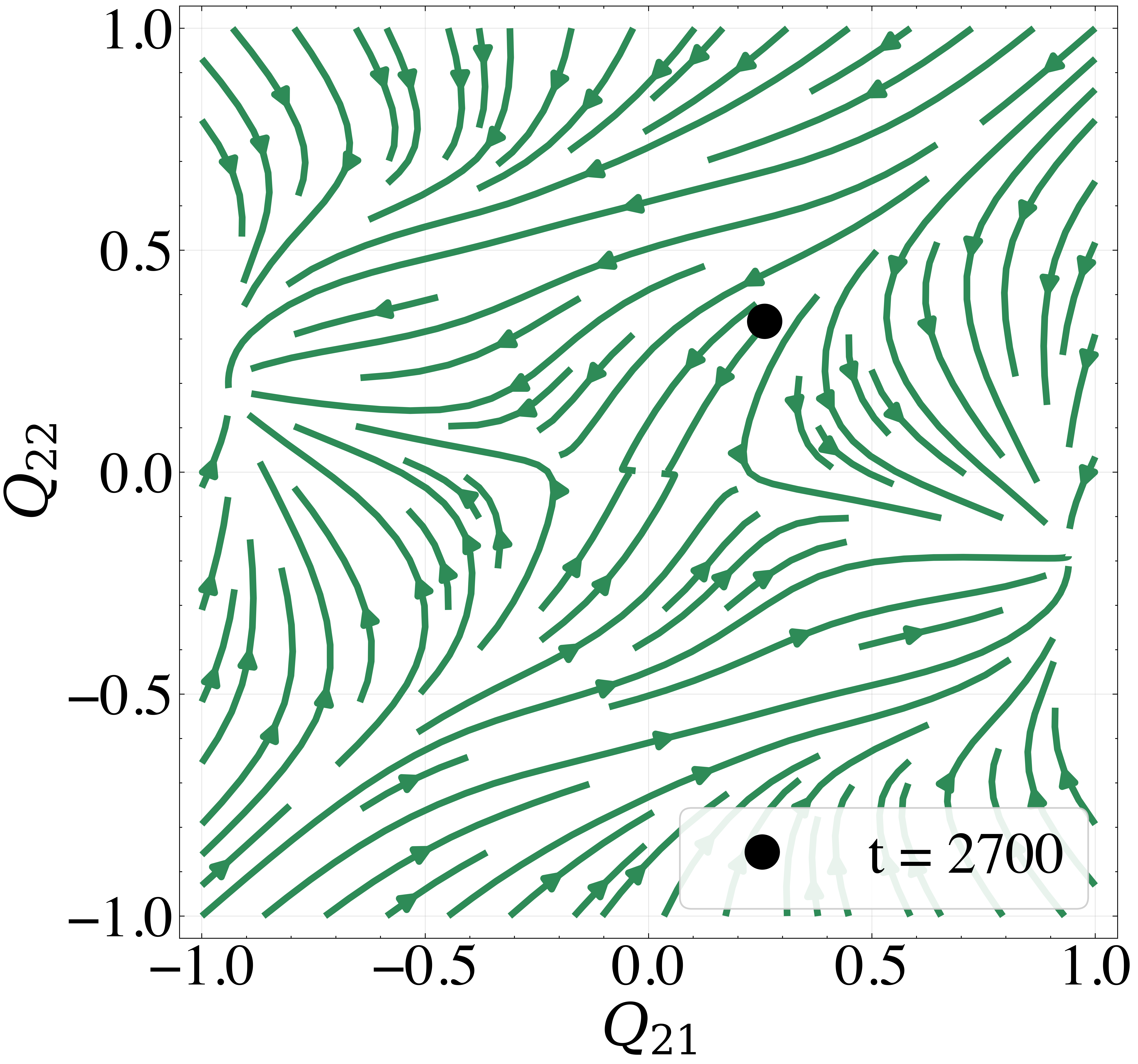}
    \caption{}
    \label{fig:3c}
  \end{subfigure}\hfill
  \begin{subfigure}[t]{0.245\textwidth}
    \centering
    \includegraphics[width=\linewidth]{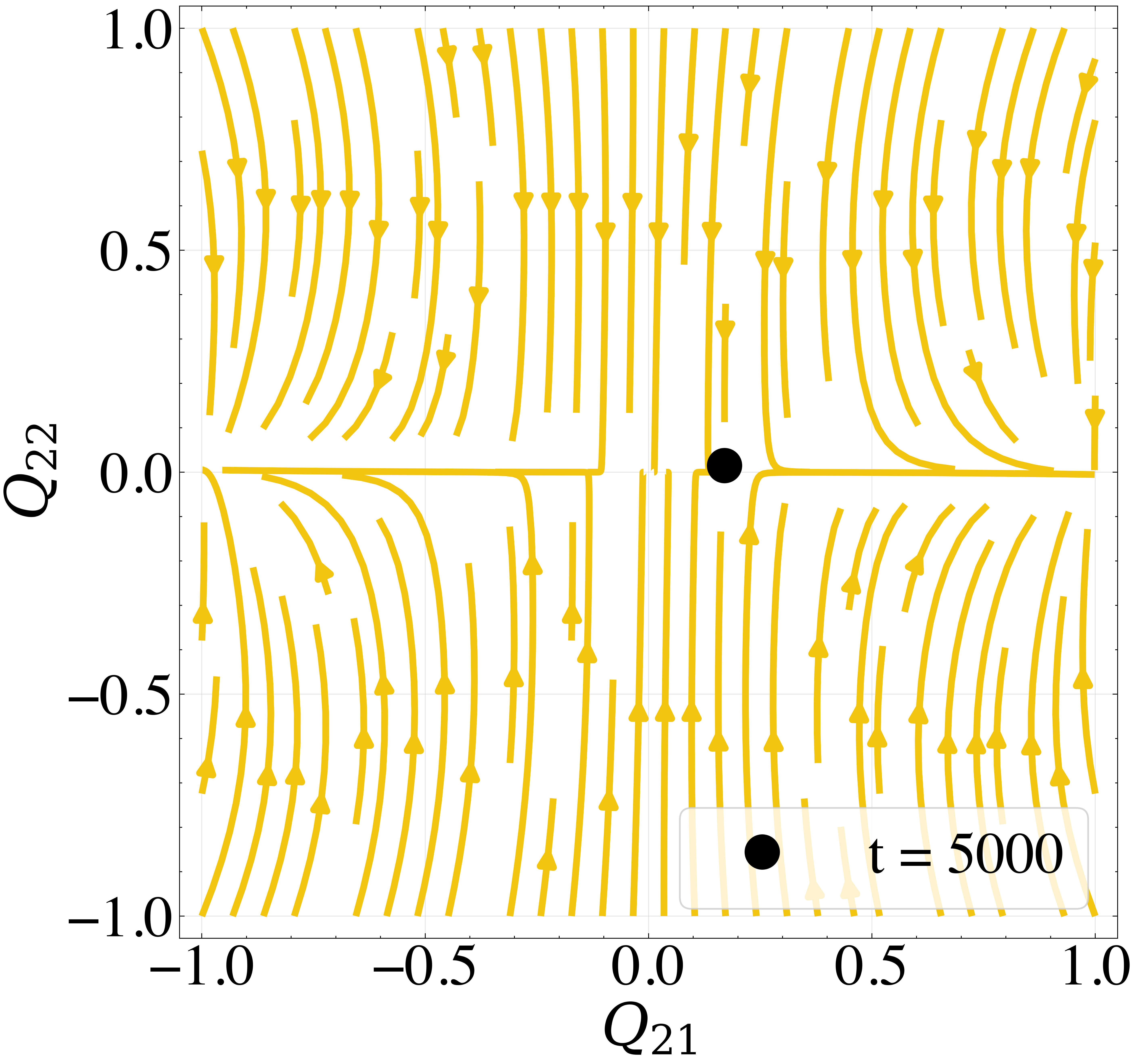}
    \caption{}
    \label{fig:3d}
  \end{subfigure}
  \caption{Phase portraits of the coupled $2 \times 2$ system in Figure \ref{fig:2b}. First row dynamics ($Q_{1,1}, Q_{1,2}$) evolve independently in (a), with markers representing specific timestamps: $t=0$ (orange), $t=2700$ (green), and $t=5000$ (yellow), and the dashed red line represents the theoretically predicted competition boundary in \eqref{competitionboundaryforcubic}. In (b), (c), and (d), we plot the second  row dynamics ($Q_{2,1}, Q_{2,2}$) at the corresponding times, with black markers indicating the ODE-predicted state. As the first row evolves, it progressively tilts the stability landscape of the second row. Vector field colors in (b–d) match the first-row temporal states for visual alignment.}
  
  \label{fig:phase_portrait}
\end{figure}

A diagonal initial overlap matrix (Figure~\ref{fig:2a}) yields a decoupled phase, where each estimate locks onto a distinct true feature without interference. In contrast, uniform initialization induces competition (Figure~\ref{fig:2b}): both estimates initially target the same component, and after one aligns strongly, orthogonalization forces the other to unlearn it before recovering the remaining feature. This coupling requires a smaller learning rate $\tau$ and slows convergence. Changing the data moments under the same initialization changes the competition pattern (Figure~\ref{fig:2c}): one estimate first aligns with \emph{both} components before specializing, while the other captures the remaining feature asymptotically.

The competition dynamics in Figure~\ref{fig:2b} are clarified by the phase portraits in Figure~\ref{fig:phase_portrait}. Figure~\ref{fig:3a} shows that initialization determines the attraction basin of the first estimator. For the second estimator, Figure~\ref{fig:3b} initially points toward the second component, but the later snapshot at $t=2700$ (Figure~\ref{fig:3c}) shows that the vector field redirects it toward the first component, forcing it to unlearn the second component before converging, as seen in Figure~\ref{fig:3d}.

\section{Steady state analysis}

Next, we derive insights from the ODEs describing the learning dynamics. Specifically, steady-state analysis of the limiting ODEs reveals a nontrivial interplay between the learning rate $\tau$, initialization, and the higher-order moment structure of the components. By analyzing the equilibrium $\frac{d\bm{Q}}{dt} = 0$ in \eqref{eq:Q_ODE}, we give general conditions for learnability and competition. In the rest of this section, we first study the learnability boundary, then show a staircase behavior in the total number of learned components, derive a condition (a boundary) for competition to occur, and finally, we demonstrate the instability induced by competition.

\subsection{Learnability boundary (decoupled initialization).}

We first consider the decoupled regime, where $\bm{Q}_t$ remains nearly diagonal (up to some permutations and negligible off-diagonal entries). Consequently, for the steady state of the overlap matrix denoted by $\bm{Q}_s$, the equilibrium condition for \eqref{eq:Q_ODE} reduces to 
\begin{align}\label{eq:Q1,1_dynamics}
\tau \bm{\Psi} (\bm{I} - \bm{Q}_s^2) - \frac{\tau^2}{2} \mathrm{diag}(\mathbb{E}_{\bm{c_t},\bm{e}} \left[ f^2(\bm{\upsilon}) \right])\bm{Q}_s = 0. 
\end{align}

Equation~\eqref{eq:Q1,1_dynamics} holds for any nonlinearity $f$, under our assumptions given in Theorem \ref{theorem}. However, since the boundary depends explicitly on $f$, a closed-form condition in terms of the learning rate and the higher-order moments is intractable. To obtain an explicit criterion, we revisit the cubic $f(x) = \pm x^3$; which reduces the dependence to finitely many moments, yielding an exact learnability boundary.

For the remainder of this analysis, we adopt $f(x) = -x^3$ as an example case. 
Following the detailed derivation in Appendix~\ref{app:boundaryproof}, we arrive at the following learnability boundary. Define

        \begin{align}
            \xi &:= (m_6 -15) - 15(m_4 - 3), \ \ 
            \eta  := (m_4 - 3) \left(15 - \frac{2}{\tau}\right), \ \ 
            \zeta := \frac{2(m_4 - 3)}{\tau}, \ \ 
            \varpi := 15. \notag
        \end{align}
    Then, \emph{learning occurs if}
    \begin{align}
        \eta^2 \zeta^2 - 4\xi\zeta^3 - 4\eta^3\varpi - 27\xi^2\varpi^2 + 18\xi\eta\zeta\varpi > 0 \qquad \quad \text{and}\qquad \quad 3 \varpi + 2 \zeta + \eta < 0.
    \end{align}

The simultaneous satisfaction of these inequalities defines \emph{the learnable region}, a subset of the $(\tau,m_4,m_6)$ space. To isolate the role of $\tau$, Figure~\ref{fig:4a} illustrates cross-sections in the $(m_4,m_6)$ plane for various values of $\tau$. As $\tau$ increases, the learnable region becomes increasingly constrained, persisting only for a narrow set of moment combinations. Conversely, as $\tau\to 0$, the constraint relaxes and the stability boundary approaches a vertical asymptote $m_4 = 3$, so that the boundary is controlled solely by the sign of the fourth cumulant, $\kappa_4 = m_4 -3$, (alternative characterization of non-Gaussianity to raw moments \cite{novak2014three, fisher1932derivation}).

\begin{figure}[t]
  \centering
  \begin{subfigure}[t]{0.31\textwidth}
    \centering
    \includegraphics[width=\linewidth]{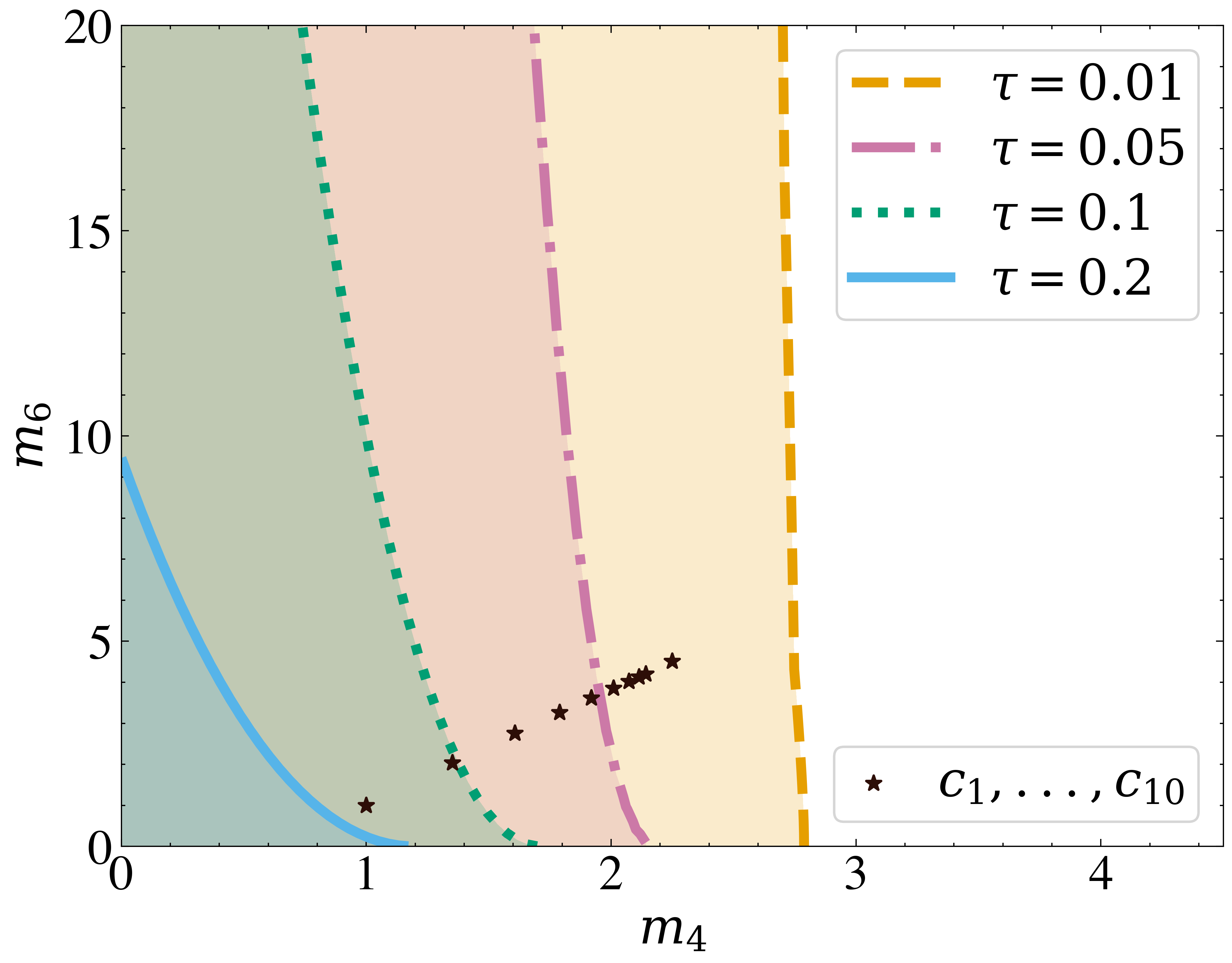}
    \caption{Theoretical learnability boundaries for different learning rates.}

    \label{fig:4a}
  \end{subfigure}\hfill
  \begin{subfigure}[t]{0.31\textwidth}
    \centering
    \includegraphics[width=\linewidth]{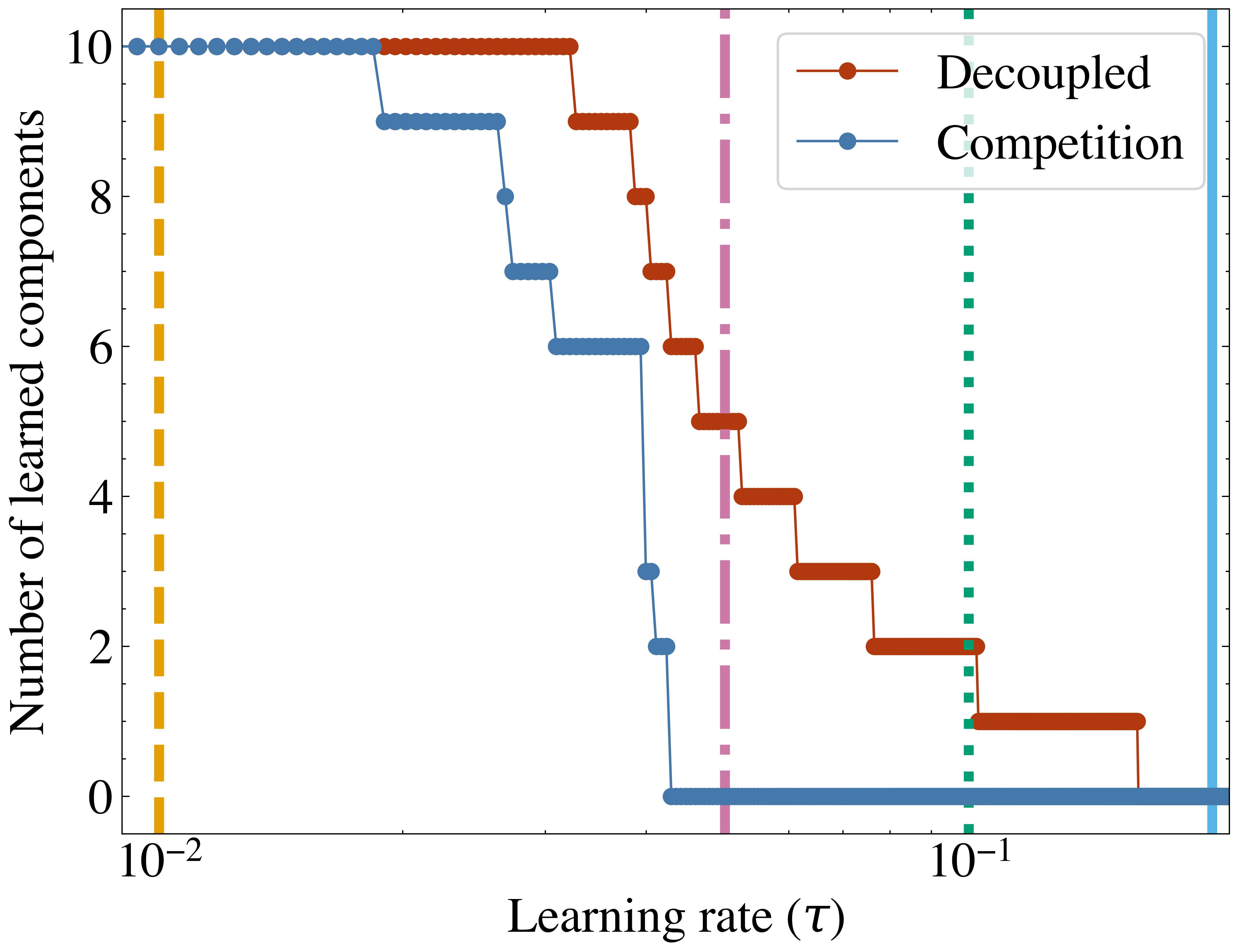}
    \caption{Number of learned components as a function of the learning rate.}

    \label{fig:4b}
  \end{subfigure}\hfill
  \begin{subfigure}[t]{0.33\textwidth}
    \centering
    \includegraphics[width=\linewidth]{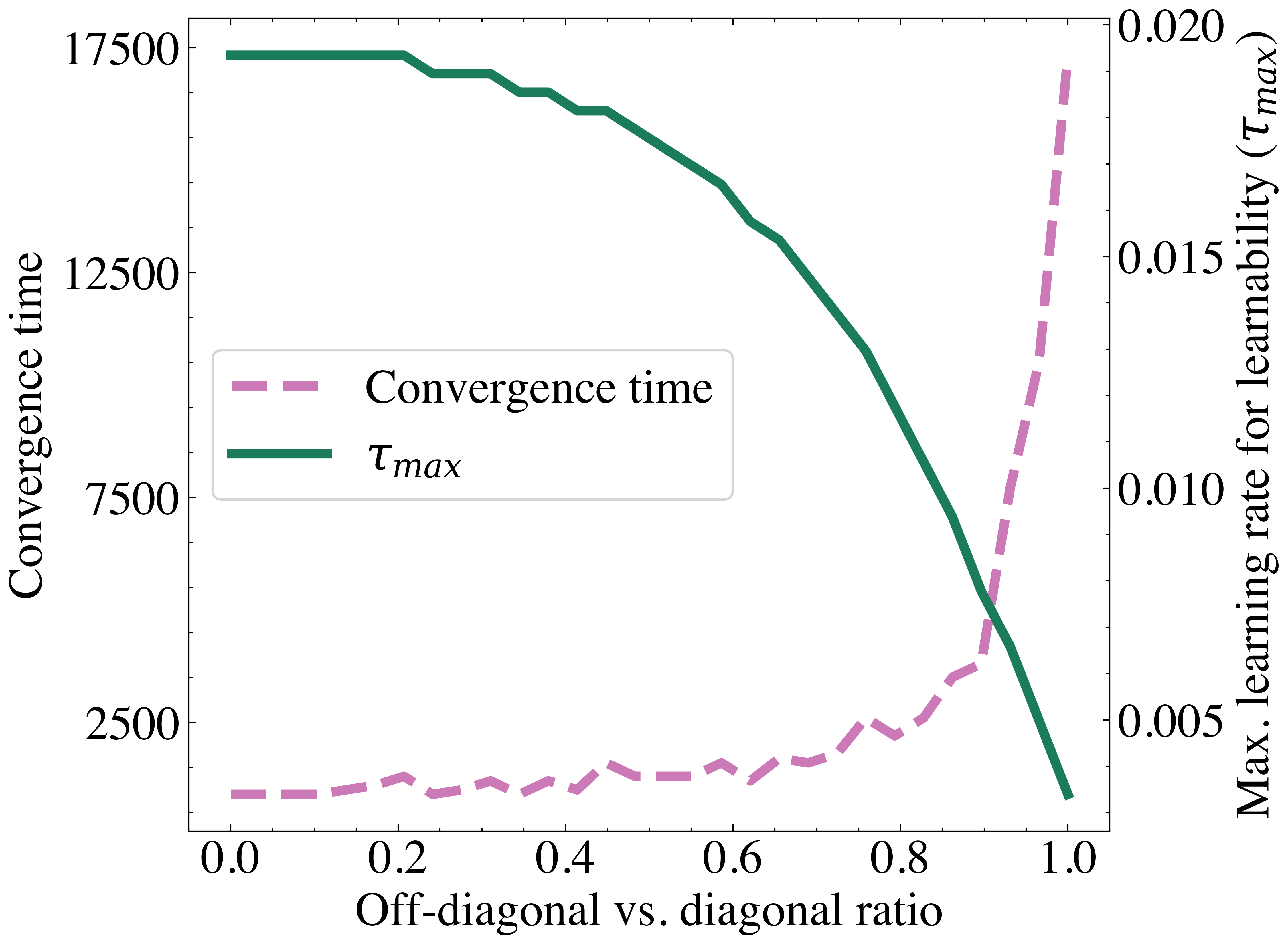}
    \caption{Instability of competition: effect of initial off-diagonal/diagonal ratio.}

    \label{fig:4c}
  \end{subfigure}
  
  \caption{Steady-state analysis reveals learnability boundaries, staircase behavior, and competition-induced instability. In (a)--(b), component moments $m_4,m_6$ decrease with the component index. (a) Learnability regions for different learning rates, with scattered points showing the moments of $c_1,\ldots,c_{10}$; curves denote phase boundaries. (b) Number of recovered components versus $\tau$ for decoupled and competition regimes with $p=10$; vertical lines mark the theoretical boundaries from Figure~\ref{fig:4a}. (c) As the off-diagonal/diagonal initialization ratio increases, the largest learning rate recovering all components, $\tau_{\max}$, decreases, while convergence time increases.}
\end{figure}

\subsection{Staircase behavior}

Figure~\ref{fig:4b} shows that, as the component moment pairs $(m_4,m_6)$ cross the learnability boundaries, the number of recoverable components changes in discrete steps with the learning rate $\tau$. The thresholds in Figure~\ref{fig:4b} match the boundary crossings in Figure~\ref{fig:4a}, confirming that the mean-field theory captures the phase transitions. Competition shifts the staircase to smaller values of $\tau$: although the discrete structure persists, stable recovery requires more conservative learning rates. Thus, learning more components requires smaller $\tau$, with the precise thresholds determined jointly by the data moments and the initialization regime.

\subsection{Competition boundary (coupled initialization).}
To analytically characterize the precise conditions for the competition regime, we return to the example case of $p=2$. Following the derivation detailed in Appendix \ref{app:subsec_sompeition_boundary}, we identify the boundary at which the ratio between components is stationary, given by the condition:
\begin{align}
        \frac{\mathbb{E}_{\bm{c},\bm{e}}[c_1f(\upsilon_1)]}{Q_{1,1}} =\frac{\mathbb{E}_{\bm{c},\bm{e}}[c_2f(\upsilon_1)]}{Q_{1,2}}.
    \label{eq:competition_condition}
\end{align}
For $f(x) = \pm x^3$, once the learnability conditions hold, the learned component is selected by initialization through a boundary in the $(Q_{1,1},Q_{1,2})$ phase plane. Specifically, Equation \eqref{eq:competition_condition} yields the theoretical competition boundary:
\begin{align}\label{competitionboundaryforcubic}
    Q_{1,2} = \pm \sqrt{\frac{(m_{1,4}-3)}{(m_{2,4}-3)}} \, Q_{1,1},
\end{align}
where $m_{j,4}$ denotes the fourth moment of component $c_j$. This relation defines an implicit cross-shaped separatrix, which can be observed in Figure~\ref{fig:3a} as the red dashed lines, partitioning the phase space into distinct basins of attraction. While the general competition condition holds for complex nonlinearities like $\tanh(x)$, the contribution of all higher-order moments of $c_j$ yields more intricate boundaries, whose analysis lies beyond the scope of this paper.

\subsection{Instability induced by competition}
The degree of coupling in the initialization governs both the efficiency and the permissible learning rate of the recovery process. In Figure~\ref{fig:4b}, we already observed that coupled initialization shifts the staircase curve toward smaller values of $\tau$ relative to the decoupled case, indicating that the competition regime necessitates more conservative learning rates to recover all components. Figure~\ref{fig:4c} demonstrates this effect directly. As the ratio off-diagonal to diagonal entries in the initial overlap matrix increases (transitioning from decoupled to fully coupled regime) the maximum permissible learning rate $\tau_{\max}$ required for full recovery decreases significantly. This required reduction in learning rate, along with component competition, results in a sharp nonlinear increase in convergence time. The close agreement between our theoretical predictions and these empirical trajectories confirms that the high-dimensional dynamics are fundamentally constrained by an initialization-induced trade-off. Stronger initial coupling necessitates smaller learning rates, which in turn prolongs the time required for all components to converge.

\section{Multi-component online ICA for hyperspectral remote sensing}\label{section:hyperspectrexepr}
To validate the predictive power of our theory in a practical setting, we conduct an experiment on hyperspectral remote sensing \citep{lupu2022stochastic}. Specifically, we use the Indian Pines hyperspectral dataset \citep{Baumgardner15}, where each pixel is represented by a 200-dimensional spectral vector. Given this dimensionality, the system operates near the high-dimensional regime, allowing for a robust test of our theory. Following the pre-processing steps detailed in Appendix \ref{app:experimental}, we extract canonical signal profiles by averaging the ground-truth masked vectors within each class and these spectral signatures serve as the latent components that we then mix for the ICA problem. As illustrated in Figure \ref{fig:5a}, the ODE predictions accurately characterize the learning dynamics. Figure \ref{fig:5b} and \ref{fig:5c} provide a comparison between the learned masks and the corresponding ground-truth masks. The empirical trajectories remain well within the theoretically predicted regions, affirming our theoretical results. 
\begin{figure}[t]
  \centering
  \begin{subfigure}[t]{0.29\textwidth}
    \centering
    \includegraphics[width=\linewidth]{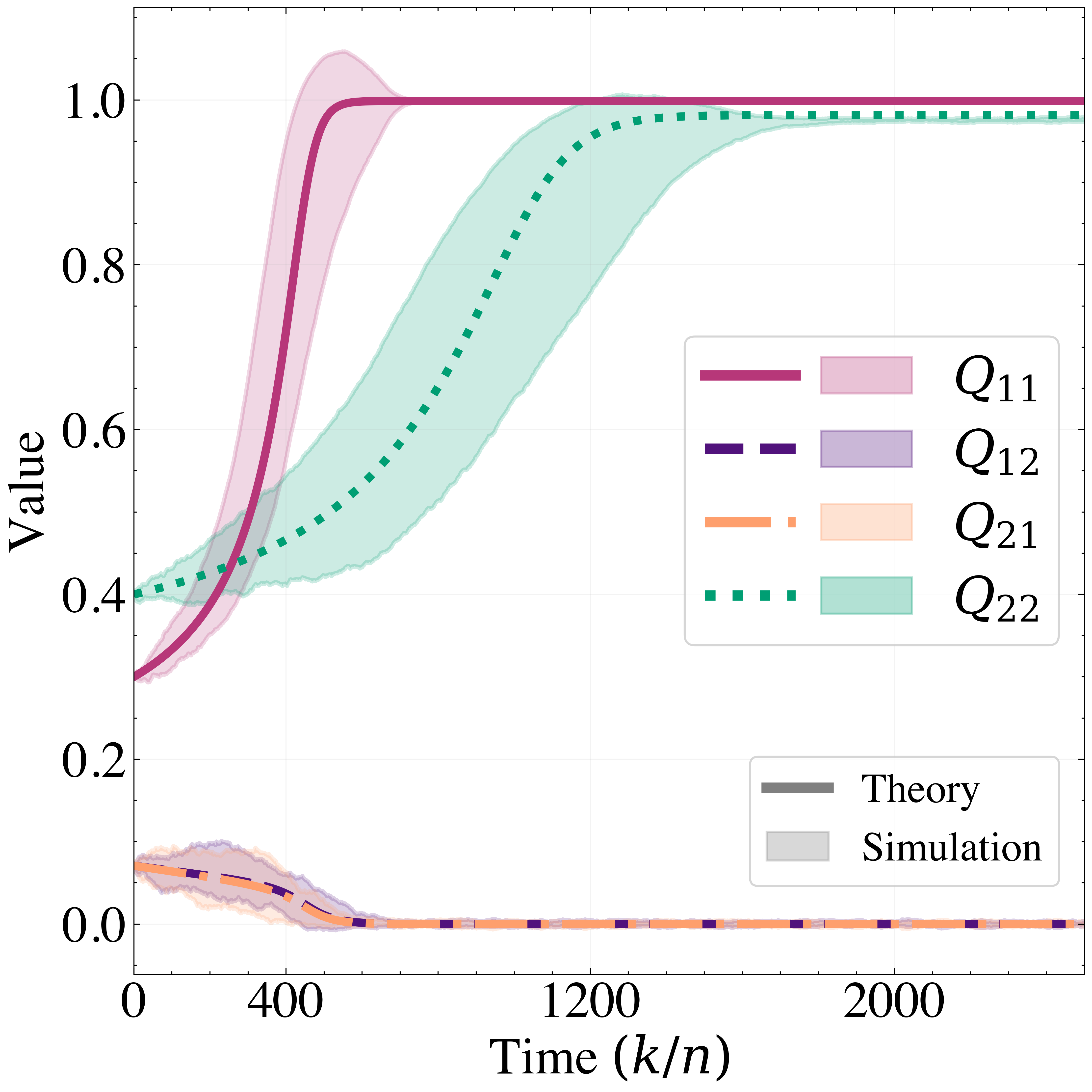}    
    \caption{Cosine similarity dynamics.}
    \label{fig:5a}
  \end{subfigure}\hfill
  \begin{subfigure}[t]{0.29\textwidth}
    \centering
    \includegraphics[width=\linewidth]{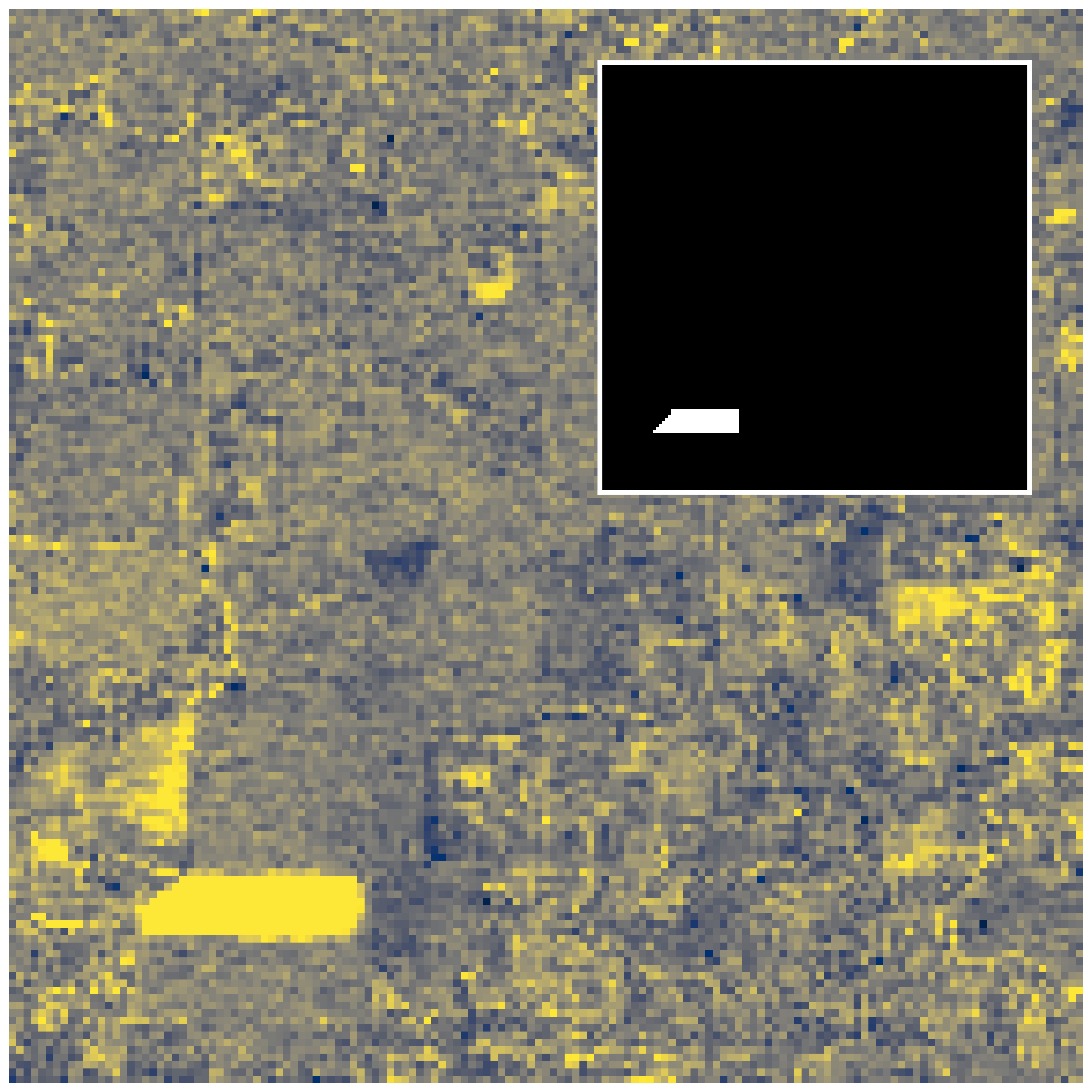}
    \caption{Learned mask — Class 1.}

    \label{fig:5b}
  \end{subfigure}\hfill
  \begin{subfigure}[t]{0.345\textwidth}
    \centering
    \includegraphics[width=\linewidth]{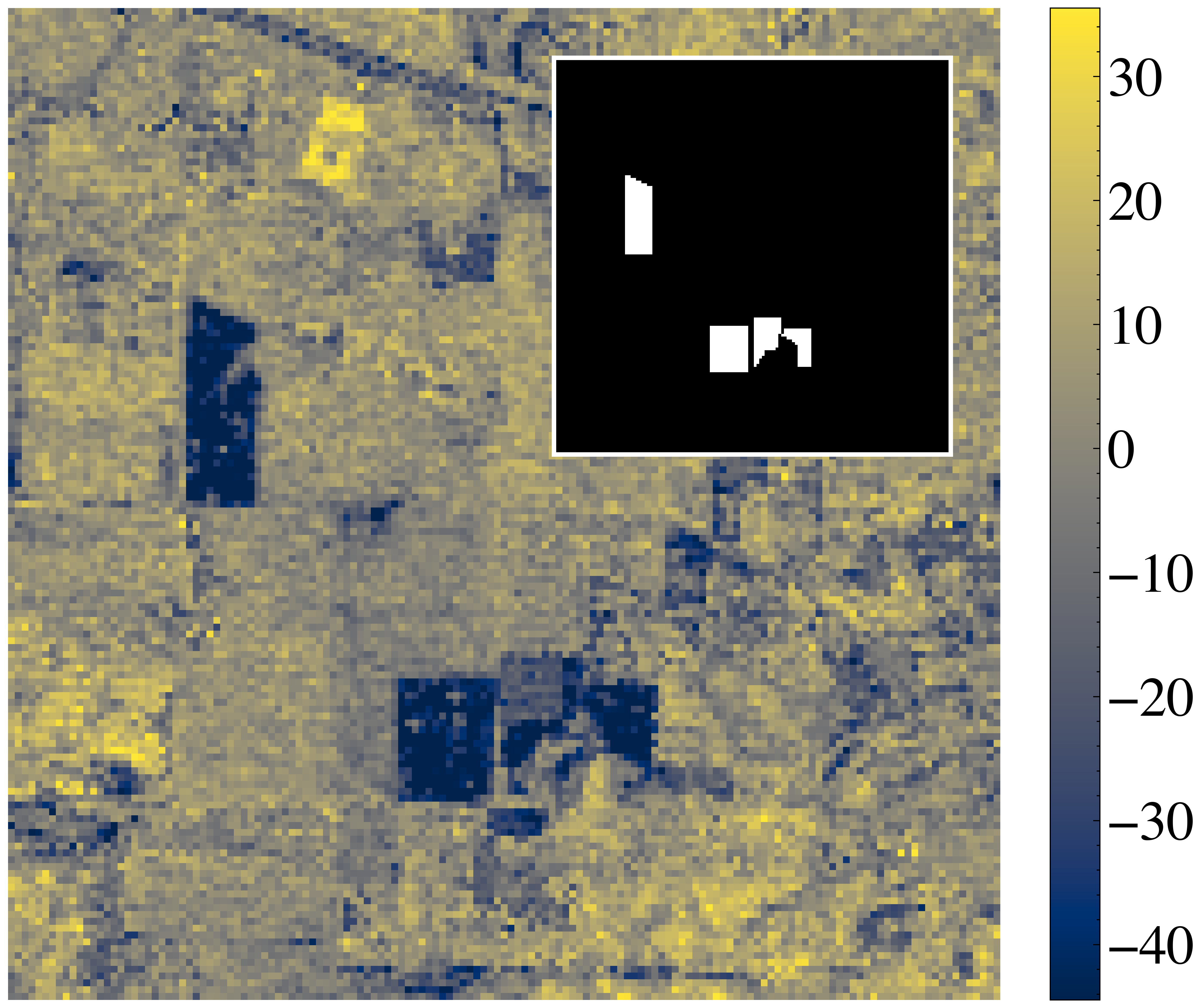}
    \caption{Learned mask — Class 2.}

    \label{fig:5c}
  \end{subfigure}
  \caption{Multi-component ICA on the Indian Pines dataset.
In (a) ODE predictions accurately track the empirical dynamics for $\tau=0.01$, $\beta_1=1$, and $\beta_2=0.6$. The shaded regions indicate two standard deviations across 20 runs. In (b), learned spatial masks align with the underlying true classes 1 and 2; ground-truth annotations are shown as black\&white insets (top right).}
\end{figure}

\section{Conclusion}
We developed an asymptotically exact mean-field theory for multi-component online ICA in high dimensions. Our main theorem shows that the joint empirical measure of learned and ground-truth coordinates converges to a deterministic limit governed by a nonlinear PDE, from which we derive a closed ODE system for the overlap matrix between estimates and true components. This characterization reveals an initialization-dependent phase structure with two distinct regimes: a decoupled regime, where components evolve nearly independently, and a competition regime, where orthogonality constraints induce conflicts that slow learning.
Our steady-state analysis yields explicit closed-form learnability boundaries linking the step size, initialization, and higher-order moments. These boundaries predict a staircase phenomenon in which the number of recoverable components decreases discretely as the learning rate increases. They also imply that competition and larger higher-order moments shrink the stable learning-rate window for recovering all components and lead to longer convergence times. Experiments on synthetic data and hyperspectral remote sensing data confirm that the mean-field predictions remain accurate even at moderate dimensions.
\section*{Acknowledgements}
We acknowledge that this work is supported partially by TUBİTAK under project 124E063 in the ARDEB 1001 program. E.İ.G. is supported by an MS Scholarship (BIDEB 2210) from TUBİTAK and a Graduate Fellowship provided by Koç University. S.D. is supported by an AI Fellowship provided by KUIS AI Research Center and a PhD Scholarship (BIDEB 2211) from TUBİTAK.
\bibliographystyle{unsrt}
\bibliography{ref}

@inproceedings{Gultekin2025Learning,
  author    = {M. O\u{g}uzhan G\"ultekin and Samet Demir and Zafer Do\u{g}an},
  title     = {{Learning Rate Should Scale Inversely with High-Order Data Moments in High-Dimensional Online Independent Component Analysis}},
  booktitle = {Proceedings of the IEEE International Workshop on Machine Learning for Signal Processing},
  year      = {2025},
}

@misc{Baumgardner15,
        title = {220 Band {AVIRIS} Hyperspectral Image Data Set: June 12, 1992 Indian Pine Test Site 3},
        year = {2015},
        author = {Marion F. Baumgardner  and Larry L. Biehl and David A. Landgrebe },
}

@article{wang2017scaling,
  title={Scaling limit: Exact and tractable analysis of online learning algorithms with applications to regularized regression and PCA},
  author={Wang, Chuang and Mattingly, Jonathan and Lu, Yue M},
  journal={arXiv preprint arXiv:1712.04332},
  year={2017}
}

@article{Edelman1998,
  title   = {The geometry of algorithms with orthogonality constraints},
  author  = {Edelman, Alan and Arias, Tom{\'a}s A. and Smith, Steven T.},
  journal = {SIAM Journal on Matrix Analysis and Applications},
  volume  = {20},
  number  = {2},
  pages   = {303--353},
  year    = {1998}
}

@article{WenYin2013,
author = {Wen, Zaiwen and Yin, Wotao},
title = {A feasible method for optimization with orthogonality constraints},
year = {2013},
issue_date = {2013},
publisher = {Springer-Verlag},
volume = {142},
number = {1–2},
journal = {Math. Program.},
pages = {397–434},
numpages = {38},
}

@book{Boumal2023,
  title     = {An Introduction to Optimization on Smooth Manifolds},
  author    = {Boumal, Nicolas},
  publisher = {Cambridge University Press},
  year      = {2023}
}

@ARTICLE{balzano2018,
  author={Balzano, Laura and Chi, Yuejie and Lu, Yue M.},
  journal={Proceedings of the IEEE}, 
  title={Streaming {PCA} and Subspace Tracking: The Missing Data Case}, 
  year={2018},
  volume={106},
  number={8},
  pages={1293-1310}}

@article{wang2018,
author = {Wang, Chuang and Eldar, Yonina C. and Lu, Yue M.},
year={2018},
  volume={12},
  number={6},
  pages={1240-1252},
title = {Subspace Estimation From Incomplete Observations: A High-Dimensional Analysis},
journal = {IEEE Journal of Selected Topics in Signal Processing},
}

@inproceedings{bond2024,
      title={Exploring the Precise Dynamics of Single-Layer {GAN} Models},
      booktitle={Advances in Neural Information Processing Systems},
      author={Bond, Andrew and Dogan, Zafer},
      year={2024}
}

@inproceedings{wang2017ica,
  title={The scaling limit of high-dimensional online independent component analysis},
  author={Wang, Chuang and Lu, Yue M.},
  booktitle={Advances in Neural Information Processing Systems},
  year={2017}
}

@inproceedings{ricci2025,
  title={Feature learning from non-{G}aussian inputs: the case of {I}ndependent {C}omponent {A}nalysis in high dimensions},
  author={Ricci, Fabiola and Bardone, Lorenzo and Goldt, Sebastian},
  booktitle={Proceedings of International Conference on Machine Learning},
  year={2025}
}

@article{comon1994independent,
  title={Independent component analysis, a new concept?},
  author={Comon, Pierre},
  journal={Signal processing},
  volume={36},
  number={3},
  pages={287--314},
  year={1994}
}

@article{hyvarinen2000independent,
  title={Independent component analysis: algorithms and applications},
author={Hyv{\"a}rinen, A and Oja, E},
  journal={Neural networks},
  volume={13},
  number={4-5},
  pages={411--430},
  year={2000},
  publisher={Elsevier}
}

@ARTICLE{fastica,
  author={Hyv{\"a}rinen,, A.},
  journal={IEEE Transactions on Neural Networks}, 
  title={Fast and robust fixed-point algorithms for independent component analysis}, 
  year={1999},
  volume={10},
  number={3},
  pages={626-634}}

@inproceedings{li2016online,
  author    = {Li, Chris Junchi and Wang, Zhaoran and Liu, Han},
  title     = {Online {ICA}: Understanding Global Dynamics of Nonconvex Optimization via Diffusion Processes},
  booktitle = {Advances in Neural Information Processing Systems},
  year      = {2016}
}

@inproceedings{ablin2019stochastic,
  title     = {Stochastic algorithms with descent guarantees for {ICA}},
  author    = {Ablin, Pierre and Gramfort, Alexandre and Cardoso, Jean-Fran{\c{c}}ois and Bach, Francis},
  booktitle = {Proceedings of International Conference on Artificial Intelligence and Statistics},
  year      = {2019},
}

@article{lupu2022stochastic,
  author    = {Lupu, Daniela and Necoara, Ion and Garrett, Joseph L. and Johansen, Tor Arne},
  title     = {Stochastic Higher-Order Independent Component Analysis for Hyperspectral Dimensionality Reduction},
  journal   = {IEEE Transactions on Computational Imaging},
  volume    = {8},
  pages     = {1184--1194},
  year      = {2022},
  publisher = {IEEE}
}

@inproceedings{li2021stochastic,
  title     = {Stochastic Approximation for Online Tensorial Independent Component Analysis},
  author    = {Li, Chris Junchi and Jordan, Michael I.},
  booktitle = {Proceedings of Conference on Learning Theory},
  year      = {2021}
}

@article{collins2024hitting,
  author    = {Collins-Woodfin, Elizabeth and Paquette, Courtney and Paquette, Elliot and Seroussi, Inbar},
  title     = {Hitting the High-dimensional notes: an {ODE} for {SGD} learning dynamics on {GLMs} and multi-index models},
  journal   = {Information and Inference: A Journal of the IMA},
  volume    = {13},
  number    = {4},
  year      = {2024},
}

@article{auddy2023large,
author = {Arnab Auddy and Ming Yuan},
title = {{Large-dimensional independent component analysis: Statistical optimality and computational tractability}},
volume = {53},
journal = {The Annals of Statistics},
number = {2},
publisher = {Institute of Mathematical Statistics},
pages = {477 -- 505},
year = {2025},

}

@article{sirignano2019mean,
    author = {Sirignano, Justin and Spiliopoulos, Konstantinos},
    title = {Mean Field Analysis of Neural Networks: A Law of Large Numbers},
    journal = {SIAM Journal on Applied Mathematics},
    volume = {80},
    number = {2},
    pages = {725-752},
    year = {2020},
}

@article{demir2025implicitly,
  title={Implicitly Normalized Online {PCA}: A Regularized Algorithm with Exact High-Dimensional Dynamics},
  author={Demir, Samet and Dogan, Zafer},
  journal={arXiv preprint arXiv:2512.01231},
  year={2025},
  url={https://arxiv.org/abs/2512.01231}
}

@inproceedings{wang2016online,
  author={Chuang Wang and Lu, Yue M.},
  booktitle={2016 IEEE Information Theory Workshop}, 
  title={Online learning for sparse {PCA} in high dimensions: Exact dynamics and phase transitions}, 
  year={2016},
}

@inproceedings{arnaboldi2023high,
  title={From high-dimensional \& mean-field dynamics to dimensionless {ODEs}: A unifying approach to {SGD} in two-layers networks},
  author={Arnaboldi, Luca and Stephan, Ludovic and Krzakala, Florent and Loureiro, Bruno},
  booktitle={Proceedings of Conference on Learning Theory},
  year={2023}
}

@inproceedings{arous2023highdimensionallimittheoremssgd,
title={High-dimensional limit theorems for {SGD}: Effective dynamics and critical scaling},
author={Gerard Ben Arous and Reza Gheissari and Aukosh Jagannath},
booktitle={Advances in Neural Information Processing Systems},
year={2022},
}

@inproceedings{li2025statistical,
  title={Statistical Guarantees for High-Dimensional Stochastic Gradient Descent},
  author={Li, Jiaqi and Lou, Zhipeng and Schmidt-Hieber, Johannes and Wu, Wei Biao},
  booktitle={Advances in Neural Information Processing Systems},
  year={2025}
}

@inproceedings{amari1995new,
  title={A New Learning Algorithm for Blind Signal Separation},
  author={Amari, Shun-ichi and Cichocki, Andrzej and Yang, Howard Hua},
  booktitle={Advances in Neural Information Processing Systems},
  year={1995}
}

@incollection{mckean1967propagation,
  title={Propagation of chaos for a class of non-linear parabolic equations},
  author={McKean, Henry P.},
  booktitle={Stochastic Differential Equations (Lecture Series in Differential Equations, Session 7)},
  pages={41--57},
  year={1967},
  publisher={Catholic University}
}

@article{meng2024training,
  title={Training Dynamics of Nonlinear Contrastive Learning Model in the High Dimensional Limit},
  author={Meng, Linghuan and Wang, Chuang},
  journal={IEEE Signal Processing Letters},
  volume={31},
  pages={2535--2539},
  year={2024},
  publisher={IEEE},
  doi={10.1109/LSP.2024.3465133},
  url={https://ieeexplore.ieee.org/document/10685361}
}

@article{bell1995information,
  author    = {Bell, Anthony J. and Sejnowski, Terrence J.},
  title     = {An Information-Maximization Approach to Blind Separation and Blind Deconvolution},
  journal   = {Neural Computation},
  volume    = {7},
  number    = {6},
  pages     = {1129--1159},
  year      = {1995}
}

@article{chen2020rosenthal,
  title     = {Rosenthal Type Inequalities for Random Variables},
  author    = {Chen, Pingyan and Sung, Soo Hak},
  journal   = {Journal of Mathematical Inequalities},
  volume    = {14},
  number    = {2},
  pages     = {305--318},
  year      = {2020},
  doi       = {10.7153/jmi-2020-14-20}
}

@article{lodvin,
  title={Singular Value Decomposition and the Centrality of {L}\"{o}wdin Orthogonalizations},
  author={Annavarapu, Ramesh Naidu},
  journal={American Journal of Computational and Applied Mathematics},
  volume={3},
  number={1},
  pages={33--35},
  year={2013},
}

@InProceedings{paquette21Large,
  title = 	 {{SGD} in the Large: Average-case Analysis, Asymptotics, and Stepsize Criticality},
  author =       {Paquette, Courtney and Lee, Kiwon and Pedregosa, Fabian and Paquette, Elliot},
  booktitle = 	 {Proceedings of Conference on Learning Theory},
  year = 	 {2021},

}

@inproceedings{Saad1995Dynamics,
  author    = {David Saad and Sara A. Solla},
  title     = {Dynamics of On-Line Gradient Descent Learning for Multilayer Neural Networks},
  year = {1995}, 
  booktitle = {Advances in Neural Information Processing Systems},
}

@inproceedings{
jagannath2026highdimensional,
title={High-dimensional limit theorems for {SGD}: Momentum and Adaptive Step-sizes},
author={Aukosh Jagannath and Taj Jones-McCormick and Varnan Sarangian},
booktitle={Proceedings of International Conference on Learning Representations},
year={2026},
}

@article{JMLR:v21:19-245,
  author  = {Yazhen Wang and Shang Wu},
  title   = {Asymptotic Analysis via Stochastic Differential Equations of Gradient Descent Algorithms in Statistical and Computational Paradigms},
  journal = {Journal of Machine Learning Research},
  year    = {2020},
  volume  = {21},
  number  = {199},
  pages   = {1--103},
}

@article{paquette2022homogenizationsgdhighdimensionsexact,
      title={Homogenization of {SGD} in high-dimensions: Exact dynamics and generalization properties}, 
      author={Courtney Paquette and Elliot Paquette and Ben Adlam and Jeffrey Pennington},
      journal={Mathematical Programming},
      volume={214},
      pages={1--90},
      year={2025},
      publisher={Springer},
}

@inproceedings{veiga2022phase,
  title={Phase diagram of Stochastic Gradient Descent in high-dimensional two-layer neural networks},
  author={Veiga, Rodrigo and Stephan, Ludovic and Loureiro, Bruno and Krzakala, Florent and Zdeborov{\'a}, Lenka},
  booktitle={Advances in Neural Information Processing Systems},
  year={2022}
}

@article{Goldt_2020,
   title={Dynamics of stochastic gradient descent for two-layer neural networks in the teacher–student setup},
   journal={Journal of Statistical Mechanics: Theory and Experiment},
   author={Goldt, Sebastian and Advani, Madhu S and Saxe, Andrew M and Krzakala, Florent and Zdeborová, Lenka},
volume={2020},
  number={12},
  pages={124010},
  year={2020},
}

@article{Rosenthal_1970,
   title={On the subspaces of $\mathrm{L}^p (p>2)$ spanned by sequences of independent random variables},
   journal={Israel J. Math.},
   author={Rosenthal, H.P.},
volume={8},
  number={3},
  pages={273-303},
  year={1970},
}

@article{Calhoun2001,
  author          = {Calhoun, V. D. and Adali, T. and Pearlson, G. D. and Pekar, J. J.},
  title           = {Spatial and temporal independent component analysis of functional {MRI} data containing a pair of task-related waveforms},
  journal         = {Human Brain Mapping},
  volume          = {13},
  number          = {1},
  pages           = {43--53},
  year            = {2001},
}

@article{gram_schmidt1,
author = {Leon, Steven J. and Björck, Åke and Gander, Walter},
title = {Gram-Schmidt orthogonalization: 100 years and more},
journal = {Numerical Linear Algebra with Applications},
volume = {20},
number = {3},
pages = {492-532},
year = {2013}
}

@article{schmidt1907theorie,
  title={Theorie der linearen und nichtlinearen Integralgleichungen I. Teil: Entwicklung willkürlicherFunktionen nach Systemen vorgeschriebener},
  author={Schmidt, Erhard},
  journal={Mathematische Annalen},
  volume={63},
  pages={433--476},
  year={1907},
  publisher={Springer Berlin/Heidelberg}
}

@incollection{novak2014three,
  author    = {Novak, Jonathan},
  title     = {Three lectures on free probability},
  booktitle = {Random Matrices},
  series    = {MSRI Publications},
  volume    = {65},
  year      = {2014},
  pages     = {309--383},
}

@article{fisher1932derivation,
  author  = {Fisher, Ronald A and Wishart, John},
  title   = {The derivation of the pattern formulae of two-way partitions from those of simpler patterns},
  journal = {Proceedings of the London Mathematical Society},
  pages   = {195--208},
  year    = {1931},
  
}

@inproceedings{fransizICA1985,
  author    = {Ans, Bernard and H{\'e}rault, Jeanny and Jutten, Christian},
  title     = {Architectures neuromim{\'e}tiques adaptatives~: D{\'e}tection de primitives},
  booktitle = {Cognitiva 85},
  volume    = {2},
  pages     = {593--597},
  year      = {1985},
}

@article{amari1998natural,
  author    = {Amari, Shun-ichi},
  title     = {Natural Gradient Works Efficiently in Learning},
  journal   = {Neural Computation},
  volume    = {10},
  number={2},
  pages     = {251--276},
  year      = {1998},
}

@inproceedings{linskerinfomax1988,
 author = {Linsker, Ralph},
 booktitle = {Advances in Neural Information Processing Systems},
 title = {An Application of the Principle of Maximum Information Preservation to Linear Systems},
 year = {1988},
}

@article{Cardoso1999HighOrderCF,
  title={High-Order Contrasts for Independent Component Analysis},
  author={J.-F. Cardoso},
  journal={Neural Computation},
  year={1999},
  volume={11},
  number={1},
  pages={157-192},
}

@inproceedings{NEURIPS2019_6b3c49bd,
 author = {Wang, Chuang and Hu, Hong and Lu, Yue},
 booktitle = {Advances in Neural Information Processing Systems},
 title = {A Solvable High-Dimensional Model of {GAN}},
 year = {2019}
}

@article{ songmei_meanfieldview_twolayer,
author = {Song Mei  and Andrea Montanari  and Phan-Minh Nguyen },
title = {A mean field view of the landscape of two-layer neural networks},
journal = {Proceedings of the National Academy of Sciences {U.S.A}},
volume = {115},
number = {33},
pages = {E7665-E7671},
year = {2018},
}

@article{balasubramanian2025high,
  author  = {Balasubramanian, Krishnakumar and Ghosal, Promit and He, Ye},
  title   = {High-dimensional scaling limits and fluctuations of online least-squares {SGD} with smooth covariance},
  journal = {The Annals of Applied Probability},
  volume  = {35},
  number  = {5},
  pages   = {2983--3045},
  year    = {2025},
}

@inproceedings{hyvarnien1997difentropy,
 author = {Hyv\"{a}rinen, Aapo},
 booktitle = {Advances in Neural Information Processing Systems},
 title = {New Approximations of Differential Entropy for Independent Component Analysis and Projection Pursuit},
 year = {1997}
}

@INPROCEEDINGS{hyvarnien1997_familyof,
  author={Hyv\"{a}rinen, A.},
  booktitle={IEEE International Conference on Acoustics, Speech, and Signal Processing}, 
  title={A family of fixed-point algorithms for independent component analysis}, 
  year={1997},
}

@article{HYVARINEN1998,
title = {Independent component analysis by general nonlinear Hebbian-like learning rules},
journal = {Signal Processing},
volume = {64},
number = {3},
pages = {301-313},
year = {1998},
author = {Aapo Hyvärinen and Erkki Oja}}

@article{snitzman1991propcgaos,
  author = {Sznitman, A.},
  title = {Topics in propagation of chaos},
  journal = {Lecture Notes in Mathematics},
  year = {1991},
  pages = {165-251},
}

@article{basalyga2003,
  author  = {Basalyga, Gleb and Rattray, Magnus},
  title   = {Statistical Dynamics of On-line Independent Component Analysis},
  journal = {Journal of Machine Learning Research},
  year    = {2003},
  volume  = {4},
  pages   = {1393--1410},
}

@INPROCEEDINGS{frieze1996learninglinear,
  author={Alan Frieze and Mark Jerrum and Ravi Kannan},
  booktitle={Proceedings of Conference on Foundations of Computer Science}, 
  title={Learning linear transformations.}, 
  year={1996},
}

@article{hyv_oja_fastfixed,
    author = {Hyvärinen, Aapo and Oja, Erkki},
    title = {A Fast Fixed-Point Algorithm for Independent Component Analysis},
    journal = {Neural Computation},
    volume = {9},
    number = {7},
    pages = {1483-1492},
    year = {1997},
    }

@article{DELFOSSE199559,
title = {Adaptive blind separation of independent sources: A deflation approach},
journal = {Signal Processing},
volume = {45},
number = {1},
pages = {59-83},
year = {1995},
author = {Nathalie Delfosse and Philippe Loubaton},}

@book{hyvarinen_oja_2001_book,
  title={Independent Component Analysis},
  author={Hyv{\"a}rinen, Aapo and Karhunen, Juha and Oja, Erkki},
  year={2001},
  publisher={John Wiley \& Sons}
}


\newpage

\appendix

\section*{Additional notation used in the Appendix}
For vectors $\bm{x}_{k,i}$, the first index $k$ denotes the iteration (time), and the second index $i\in\{1,\dots,p\}$ denotes the component. For any vector $\bm{a}$, we write $a^{(\alpha)}$ for its $\alpha$-th entry. Throughout the Appendix, $\|\cdot\|$ denotes the standard Euclidean ($\ell_2$) norm for vectors, while $\|\cdot\|_2$ and $\|\cdot\|_F$ denote the spectral and Frobenius norms for matrices, respectively.

To streamline the lengthy derivations in the Appendix, expectations of the random variables $c_i$ and $e_i \sim \mathcal{N}(0,1)$ are denoted by $\langle \cdot \rangle$ rather than the $\mathbb{E}_{\bm{c},\bm{e}}[\cdot]$ notation used in the main text.

Finally, we use standard asymptotic notation to describe scaling. We use Big~$O$ notation, $O(g(n))$, to denote that a term scales at most at the same order as $g(n)$. Conversely, we use Little~$o$ notation, $o(g(n))$, to denote terms that are asymptotically smaller than $g(n)$.

\section{Explicit derivations of drift and diffusion} \label{app:driftanddiffusion}

This section provides a detailed derivation of the drift and diffusion terms, which characterize the conditional mean and  variance of the increments $\bm{x}_{k+1,i} - \bm{x}_{k,i}$. To facilitate the derivation, we transition from the compact matrix-form update to a row-wise vector representation. For clarity, we first derive the explicit expressions for the two-component case ($p=2$), when $\phi(x) = 0$ before generalizing to an arbitrary number of components $p$ and an arbitrary $\phi(x)$. 

The complete online update for $p=2$ is defined by first computing the stochastic gradient increments:
\begin{align}
    \tilde{\bm{x}}_{k,1}
    = 
    \bm{x}_{k,1}
    +
    \frac{\tau}{\sqrt{n}}
    f\!\left(
        \frac{1}{\sqrt{n}} \bm{y}_k^\top \bm{x}_{k,1}
    \right)
    \bm{y}_k,
\end{align}
\begin{align}
    \tilde{\bm{x}}_{k,2}
    =
    \bm{x}_{k,2}
    +
    \frac{\tau}{\sqrt{n}}
    f\!\left(
        \frac{1}{\sqrt{n}} \bm{y}_k^\top \bm{x}_{k,2}
    \right)
    \bm{y}_k.
\end{align}
Using these increments, we immediately apply the Gram–Schmidt procedure:
\begin{align}
    \tilde{\tilde{\bm{x}}}_{k,1} = \tilde{\bm{x}}_{k,1},
\end{align}
\begin{align}\label{appendixgramschmit}
    \tilde{\tilde{\bm{x}}}_{k,2}
    =
    \tilde{\bm{x}}_{k,2}
    -
    \frac{
        \tilde{\bm{x}}_{k,2}^\top \tilde{\bm{x}}_{k,1}
    }{
        \lVert \tilde{\bm{x}}_{k,1} \rVert^2
    }
    \tilde{\bm{x}}_{k,1}.
\end{align}

Finally, we normalize both vectors to obtain the final updated iterates:

\begin{align}
\bm{x}_{k+1,1} & = \frac{\tilde{\bm{x}}_{k,1}}{\|\tilde{\bm{x}}_{k,1}\|}\sqrt{n}, \\  
\bm{x}_{k+1,2} &= \frac{\tilde{\tilde{\bm{x}}}_{k,2}}{\|\tilde{\tilde{\bm{x}}}_{k,2}\|} \sqrt{n} .
\end{align}

We proceed by explicitly analyzing the second component first, as it exposes the key cross-interaction terms that come from the Gram-Schmidt orthogonalization step in \eqref{appendixgramschmit}, which are necessary for the general inductive step. 
 
 The remainder of this section is organized as follows: In Appendix~\ref{app:secondestimate}, we derive the dynamics for the second component. We then provide the derivations for the first component in Appendix~\ref{app:firstestimate}, followed by the analysis of the cross terms in Appendix \ref{app:crossterms}. Finally, in Appendix \ref{app:regularization} we derive the additional regularization terms and in Appendix \ref{app:generalization}, we generalize these results to the case of $p$ components and provide the final explicit expressions for the drift and diffusion terms in Equations \eqref{appendixDRIFT_final}, and \eqref{appendixDIFF_final}.

\subsection{Dynamics of the second estimate}\label{app:secondestimate} 

First, for notational convenience, we define the stochastic gradient increments as 

\begin{align}\label{gradients_before_reg}
\bm{g}_{k,1} &\coloneqq \frac{\tau}{\sqrt{n}} f\left( \frac{1}{\sqrt{n}} \bm{y}_k^\top \bm{x}_{k,1} \right) \bm{y}_k , \\ 
\bm{g}_{k,2} &\coloneqq \frac{\tau}{\sqrt{n}} f\left( \frac{1}{\sqrt{n}} \bm{y}_k^\top \bm{x}_{k,2} \right) \bm{y}_k.
\end{align}

Then, substituting the update expressions into the orthogonalization step \eqref{appendixgramschmit} yields the explicit form below for $\bm{x}_{k,2}$:

\begin{align}
    \tilde{\tilde{\bm{x}}}_{k,2} &=
   (\bm{x}_{k,2} + \bm{g}_{k,2}) - \frac{(\bm{x}_{k,2} + \bm{g}_{k,2})^\top(\bm{x}_{k,1} + \bm{g}_{k,1})}{\|\bm{x}_{k,1} + \bm{g}_{k,1}\|^2}(\bm{x}_{k,1} + \bm{g}_{k,1}), \notag \\
   &= (\bm{x}_{k,2} + \bm{g}_{k,2}) - \frac{(\overbrace{\bm{x}_{k,2}^\top \bm{x}_{k,1}}^{=\,0} + \bm{x}_{k,2}^\top\bm{g}_{k,1} + \bm{g}_{k,2}^\top\bm{x}_{k,1} + \bm{g}_{k,2}^\top\bm{g}_{k,1}   )}{\underbrace{\|\bm{x}_{k,1}\|^2}_{=n} + \|\bm{g}_{k,1}\|^2 + 2 \bm{x}^\top_{k,1}\bm{g}_{k,1}}  (\bm{x}_{k,1} + \bm{g}_{k,1}), \notag \\
   & = 
   (\bm{x}_{k,2} + \bm{g}_{k,2}) - \frac{( \bm{x}_{k,2}^\top\bm{g}_{k,1} + \bm{g}_{k,2}^\top\bm{x}_{k,1} + \bm{g}_{k,2}^\top\bm{g}_{k,1}   )}{n( 1 + \|\bm{g}_{k,1}\|^2 /n + 2\bm{x}^\top_{k,1}\bm{g}_{k,1}/n)}  (\bm{x}_{k,1} + \bm{g}_{k,1}).
\end{align}

We used the orthogonality condition $\bm{x}_{k,2}^\top \bm{x}_{k,1} = 0 $, which holds by construction from the previous update step, as well as the normalization $ \|\bm{x}_{k,1}\|^2 = n$. Next, we expand the denominator to first order in  $1/n $. Higher-order terms involving products of three or more gradient increments contribute only at order $o(\frac{1}{n})$ and are omitted from the expansion:

\begin{align}
\tilde{\tilde{\bm{x}}}_{k,2}
&=
\bm{x}_{k,2} + \bm{g}_{k,2}
- \frac{
    \bm{x}_{k,2}^\top \bm{g}_{k,1}
    + \bm{g}_{k,2}^\top \bm{x}_{k,1}
    + \bm{g}_{k,2}^\top \bm{g}_{k,1}
}{n}\,
\bm{x}_{k,1} \notag \\ & \qquad
- \frac{
    \bm{x}_{k,2}^\top \bm{g}_{k,1}
    + \bm{g}_{k,2}^\top \bm{x}_{k,1}
}{n}\, 
\Big(
    - 2\,\bm{x}_{k,1}^\top \bm{g}_{k,1} / n
\Big) \bm{x}_{k,1} \notag \\ & \qquad
- \frac{
    \bm{x}_{k,2}^\top \bm{g}_{k,1}
    + \bm{g}_{k,2}^\top \bm{x}_{k,1}
}{n}\,
\bm{g}_{k,1}
+ o\!\left(\frac{1}{n}\right).
\end{align}

We define the orthogonalized increment as $\tilde{\tilde{\bm{\Delta}}}_{k,2} \coloneqq \tilde{\tilde{\bm{x}}}_{k,2} - \bm{x}_{k,2} $. Thus, we arrive at

\begin{align}
\tilde{\tilde{\bm{\Delta}}}_{k,2}
&=
\bm{g}_{k,2}
- \frac{
    \bm{x}_{k,2}^\top \bm{g}_{k,1}
    + \bm{g}_{k,2}^\top \bm{x}_{k,1}
    + \bm{g}_{k,2}^\top \bm{g}_{k,1}
}{n}\,
\bm{x}_{k,1} \notag \\[0.4em]
&\qquad
- \frac{
    \bm{x}_{k,2}^\top \bm{g}_{k,1}
    + \bm{g}_{k,2}^\top \bm{x}_{k,1}
}{n}\,
\Big(
    - 2\,\bm{x}_{k,1}^\top \bm{g}_{k,1} / n
\Big)\bm{x}_{k,1}
\notag \\[0.4em]
&\qquad
- \frac{
    \bm{x}_{k,2}^\top \bm{g}_{k,1}
    + \bm{g}_{k,2}^\top \bm{x}_{k,1}
}{n}\,
\bm{g}_{k,1} + o\!\left(\frac{1}{n}\right).
\end{align}

Following orthogonalization, the update is rescaled to norm $\sqrt{n}$,  and the corresponding normalization factor is expanded around $\bm{x}_{k,2}$:

\begin{align}
    \bm{x}_{k+1,2} &= \frac{\sqrt{n}}{\| \tilde{\tilde{\bm{x}}}_{k,2} \|} \tilde{\tilde{\bm{x}}}_{k,2} \ ,\notag \\  &= \frac{\sqrt{n}}{\| \bm{x}_{k,2} +\tilde{\tilde{\bm{\Delta}}}_{k,2} \|} (\bm{x}_{k,2} +\tilde{\tilde{\bm{\Delta}}}_{k,2}) \ ,  \notag \\
    &= (1 - \|\tilde{\tilde{\bm{\Delta}}}_{k,2}\|^2 /2n -\bm{x}_{k,2}^\top\tilde{\tilde{\bm{\Delta}}}_{k,2}/n ) (\bm{x}_{k,2} + \tilde{\tilde{\bm{\Delta}}}_{k,2}) + o\!\left(\frac{1}{n}\right), \\[3ex] 
    \bm{x}_{k+1,2} - \bm{x}_{k,2} &= \tilde{\tilde{\bm{\Delta}}}_{k,2} +  ( - \|\tilde{\tilde{\bm{\Delta}}}_{k,2}\|^2 /2n- \bm{x}_{k,2}^\top\tilde{\tilde{\bm{\Delta}}}_{k,2}/n ) \bm{x}_{k,2} + o\!\left(\frac{1}{n}\right) .
\end{align}

Recall that $x_{k,i}^{(j)}$ denotes the $j-th$ entry/element of $\bm{x}_{k,i}$. With this convention, we obtain

\begin{align}
\tilde{\tilde{\Delta}}_{k,2}^{(j)} &= 
     g_{k,2}^{(j)}
    - \frac{
        \bm{x}_{k,2}^\top \bm{g}_{k,1}
        + \bm{g}_{k,2}^\top \bm{x}_{k,1}
        + \bm{g}_{k,2}^\top \bm{g}_{k,1}
    }{n}\
    x_{k,1} ^{(j)} \notag \\[0.4em]
&\qquad + \frac{
        \bm{x}_{k,2}^\top \bm{g}_{k,1}
        + \bm{g}_{k,2}^\top \bm{x}_{k,1}
    }{n}\,
    \Big(
        2\,\bm{x}_{k,1}^\top \bm{g}_{k,1} / n
    \Big)x_{k,1}^{(j)}
\notag \\[0.4em]
&\qquad 
    - \frac{
         \bm{x}_{k,2}^\top \bm{g}_{k,1}
        + \bm{g}_{k,2}^\top \bm{x}_{k,1}
    }{n}\,
    g_{k,1}^{(j)}.
\label{deltatilttil}
\end{align}

After the normalization step, the entry-wise update satisfies

\begin{align}\label{equationx2-x2}
    x_{k+1,2}^{(j)} - x_{k,2}^{(j)} =  \tilde{\tilde{\Delta}}_{k,2}^{(j)} - \|\tilde{\tilde{\bm{\Delta}}}_{k,2}\|^2  x_{k,2}^{(j)}/2n - x_{k,2}^{(j)} (\bm{x}_{k,2}^\top\tilde{\tilde{\bm{\Delta}}}_{k,2})/n .
\end{align}

Taking expectations yields the first moment of the entry-wise update for the second component:

\begin{align} \label{beforecalculation,drift2}
    \mathbb{E}_k[x_{k+1,2}^{(j)} - x_{k,2}^{(j)}] &=  \mathbb{E}_k[\tilde{\tilde{\Delta}}_{k,2}^{(j)}] - \frac{\mathbb{E}_k[\|\tilde{\tilde{\bm{\Delta}}}_{k,2}\|^2]  x_{k,2}^{(j)} }{2n}- \frac{x_{k,2}^{(j)} \mathbb{E}_k[\bm{x}_{k,2}^\top\tilde{\tilde{\bm{\Delta}}}_{k,2}] }{n},\notag \\
    & =  \mathbb{E}_k[\tilde{\tilde{\Delta}}_{k,2}^{(j)}] -\frac{\mathbb{E}_k[\|\tilde{\tilde{\bm{\Delta}}}_{k,2}\|^2]  x_{k,2}^{(j)}}{2n} - \frac{x_{k,2}^{(j)} \left(\sum_{\alpha = 1}^n x_{k,2}^{(\alpha)}\mathbb{E}_k[\tilde{\tilde{\Delta}}_{k,2}^{(\alpha)}]\right)}{n}.
\end{align}

Consequently, it remains to compute 
\begin{align}\label{eq:neededdeltaexpectations}
\mathbb{E}_k[\tilde{\tilde{\Delta}}_{k,2}^{(j)}], \ \ \ \ \  \mathbb{E}_k[\|\tilde{\tilde{\bm{\Delta}}}_{k,2}\|^2 ] .
\end{align}

It follows from \eqref{deltatilttil}  that the analysis of the above terms require evaluating the following expectations:

\begin{align}
\mathbb{E}_k[\bm{g}_{k,1}^{(j)}] , \ \ \
\mathbb{E}_k[\bm{g}_{k,2}^{(j)}],\ \ \ 
\mathbb{E}_k[\bm{g}_{k,2}^\top \bm{g}_{k,1}], \ \ \
\mathbb{E}_k[\bm{g}_{k,1}\bm{g}_{k,2}^\top] . 
\end{align}

In the following Subsections~\ref{app:subsection:Eg} through~\ref{app:subsection:Eg1g2transpose}, we derive the expressions for these individual terms. Building on these results, we derive the expressions in \eqref{eq:neededdeltaexpectations} in Subsections~\ref{app:subsection:Edeltaj} and~\ref{app:subsection:Edeltasqr}. Finally in Subsection \ref{app:subsectionfinalexpforsecondcomponent}, we state the final expression for \eqref{beforecalculation,drift2}.

\subsubsection{Derivation of \texorpdfstring{$\mathbb{E}_k[g^{(j)}_{k,i}]$}{E_k[g_{k,i}^{(j)}]}} \label{app:subsection:Eg}

For the gradient $g_{k,i}^{(j)}$ we have 

\begin{align}
g_{k,i}^{(j)} &= \frac{\tau}{\sqrt{n}}f\!\left(\frac{1}{\sqrt{n}}\,\bm{y}_k^\top \bm{x}_{k,i}\right)y_k^{(j)},
\notag \\ &= \frac{\tau}{\sqrt{n}}f\!\left(\frac{1}{\sqrt{n}}
\Big(\tfrac{1}{\sqrt{n}}\big(c_{k,1}\bm{u}_1^\top + c_{k,2}\bm{u}_2^\top\big) + \bm{a}_k^\top\Big)\bm{x}_{k,i}\right)y_k^{(j)} ,\notag
\\
&=\frac{\tau}{\sqrt{n}} f\!\left(c_{k,1}Q_{k,i,1} + c_{k,2}Q_{k,i,2}
+ \frac{1}{\sqrt{n}}\,\bm{a}_k^\top \bm{x}_{k,i}\right)y_k^{(j)}.
\end{align}
 
Recall that we defined the cosine similarities between the estimates $\bm{x}_{k,i}$ and the true components $\bm{u}_l$, for $i,l \in \{1,\dots,p\}$, by $
Q_{k,i,l} \coloneqq \frac{\bm{x_{k,i}}^\top \bm{u_l} }{n} $. We define $e_{k,i}^{\setminus j} \coloneqq \frac{1}{\sqrt{n}}(\bm{a}_k^\top \bm{x}_{k,i}  - a_k^{(j)}x_{k,i}^{(j)})$. Thus we arrive at

\begin{align}
 g_{k,i}^{(j)} = \frac{\tau}{\sqrt{n}}f\left(c_{k,1}Q_{k,i,1} + c_{k,2}Q_{k,i,2} + e_{k,i}^{\setminus j} + \frac{a_k^{(j)}x_{k,i}^{(j)}}{\sqrt{n}}\right)y_{k}^{(j)}.
\end{align}

In the limit $n \to \infty$, the contribution of any single coordinate is negligible compared to the contribution of all coordinates. We therefore apply a first order Taylor expansion of $f(\cdot)$ around the leading-order term $c_{k,1} Q_{k,i,1} + c_{k,2} Q_{k,i,2} + e_{k,i}^{\setminus j}$:

\begin{align} \label{gradj}
g_{k,i}^{(j)}
&= \frac{\tau}{\sqrt{n}}
f\!\left(
c_{k,1} Q_{k,i,1} + c_{k,2} Q_{k,i,2} +e_{k,i}^{\setminus j}
\right)
\left(
\frac{c_{k,1} u_1^{(j)}}{\sqrt{n}}
+ \frac{c_{k,2} u_2^{(j)}}{\sqrt{n}}
+ a_k^{(j)}
\right)
\notag
\\
&\qquad
+ \frac{\tau}{\sqrt{n}}
f'\!\left(
c_{k,1} Q_{k,i,1} + c_{k,2} Q_{k,i,2} + e_{k,i}^{\setminus j}
\right)
\frac{a_k^{(j)} x_{k,i}^{(j)}}{\sqrt{n}}
\left(
\frac{c_{k,1} u_1^{(j)}}{\sqrt{n}}
+ \frac{c_{k,2} u_2^{(j)}}{\sqrt{n}}
+ a_k^{(j)}
\right) \notag \\ &\qquad  + o\!\left(\frac{1}{n}\right).
\end{align}

\begin{lemma} \label{app:lemmacovmatrix}
  For the random vector
$
\begin{pmatrix}
e_{k,i}^{\setminus j} \\
a_k^{(j)}
\end{pmatrix},
$
its covariance matrix takes the form:
\begin{align}
\operatorname{Cov}
\!\left(
\begin{pmatrix}
e_{k,i}^{\setminus j} \\
a_k^{(j)}
\end{pmatrix}
\right)
=
\begin{bmatrix}
1 - Q_{k,i,1}^2 - Q_{k,i,2}^2
&
-\dfrac{1}{\sqrt{n}}
\big(
u_1^{(j)} Q_{k,i,1}
+
u_2^{(j)} Q_{k,i,2}
\big)
\\[1em]
-\dfrac{1}{\sqrt{n}}
\big(
u_1^{(j)} Q_{k,i,1}
+
u_2^{(j)} Q_{k,i,2}
\big)
&
1
\end{bmatrix}.
\end{align}
\end{lemma}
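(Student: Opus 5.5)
Both coordinates are linear functionals of the same Gaussian vector $\bm{a}_k\sim\mathcal{N}(\bm{0},\bm{\Sigma})$ with $\bm{\Sigma}=\bm{I}-\tfrac1n\bm{U}\bm{U}^\top$. So the plan is to write each coordinate as an inner product with $\bm{a}_k$ and read off the three entries from bilinear forms in $\bm{\Sigma}$. Conditionally on $\mathcal{F}_k$, $\bm{x}_{k,i}$ is fixed, and:
\begin{align*}
e_{k,i}^{\setminus j} &= \bm{a}_k^\top \bm{v}, \qquad \bm{v}\coloneqq \tfrac{1}{\sqrt{n}}\big(\bm{x}_{k,i}-x_{k,i}^{(j)}\bm{\epsilon}_j\big),\\
a_k^{(j)} &= \bm{a}_k^\top\bm{\epsilon}_j,
\end{align*}
where $\bm{\epsilon}_j$ is the $j$-th standard basis vector. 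Both have zero mean. The covariance matrix is therefore $[\bm{v},\bm{\epsilon}_j]^\top\bm{\Sigma}[\bm{v},\bm{\epsilon}_j]$, and I compute its three distinct entries separately. Throughout I use $\|\bm{u}_l\|^2=n$, $\|\bm{x}_{k,i}\|^2=n$, and $Q_{k,i,l}=\bm{x}_{k,i}^\top\bm{u}_l/n$, with $p=2$ as in the surrounding derivation.

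\textbf{Lower-right entry.} We have $\bm{\epsilon}_j^\top\bm{\Sigma}\bm{\epsilon}_j=1-\tfrac1n\big((u_1^{(j)})^2+(u_2^{(j)})^2\big)$. The moment assumptions on $\bm{u}_l$ make the correction $O(1/n)$ coordinatewise in the averaged sense used in the mean-field limit. It is dropped, giving $1$.

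\textbf{Off-diagonal entry.} We have
\[
\bm{v}^\top\bm{\Sigma}\bm{\epsilon}_j=\tfrac{1}{\sqrt n}\Big(x_{k,i}^{(j)}-x_{k,i}^{(j)}-\tfrac1n\sum_l u_l^{(j)}\big(\bm{u}_l^\top\bm{x}_{k,i}-u_l^{(j)}x_{k,i}^{(j)}\big)\Big).
\]
The identity parts cancel exactly, which is the point of removing coordinate $j$. The remaining sum equals $-\tfrac{1}{\sqrt n}\sum_l u_l^{(j)}Q_{k,i,l}$ plus a term of order $n^{-3/2}$. That term is discarded, yielding the stated entry.

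\textbf{Upper-left entry.} Expanding $\bm{v}^\top\bm{\Sigma}\bm{v}$ gives the main part
\[
\tfrac1n\bm{x}_{k,i}^\top\Big(\bm{I}-\tfrac1n\bm{U}\bm{U}^\top\Big)\bm{x}_{k,i}=1-Q_{k,i,1}^2-Q_{k,i,2}^2.
\]
The remaining pieces are corrections from removing coordinate $j$, each $O(1/n)$: $-\tfrac1n (x_{k,i}^{(j)})^2$, cross terms $\tfrac{2}{n^2}x_{k,i}^{(j)}\sum_l u_l^{(j)}\bm{u}_l^\top\bm{x}_{k,i}$, and so on.

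The only real obstacle is justifying the discarded corrections. They are uniformly small only if the individual coordinates $x_{k,i}^{(j)},u_l^{(j)}$ are $O(1)$. I would handle this as in the rest of the appendix: rely on the bounded-moment assumptions of Theorem \ref{theorem} and on the uniform coordinate moment bounds established later for the exchangeable iterates. The resulting errors are then $O(1/n)$ in expectation, which is below the $O(1/\sqrt n)$ order kept in the off-diagonal entry. So the lemma is understood as exact up to these negligible terms, consistent with the $o(1/n)$ bookkeeping in \eqref{gradj}.
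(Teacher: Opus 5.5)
Your proposal is correct and follows essentially the same route as the paper: both compute the three second moments directly from the covariance $\bm{I}-\tfrac1n\bm{U}\bm{U}^\top$ of $\bm{a}_k$. In both, the identity part drops out of the cross term (your exact cancellation is the paper's restriction of the sum to $\alpha\neq j$), and the $O(1/n)$ diagonal and $O(n^{-3/2})$ off-diagonal corrections are discarded. Your explicit remark that the stated matrix holds only up to these negligible terms, justified by coordinate moment bounds, states a bit more carefully what the paper's proof does implicitly.
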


\begin{proof}

We first compute the cross-correlation between $e_{k,i}^{\setminus j}$ and $a_{k}^{(j)}$. Substituting the definition of $e_{k,i}^{\setminus j}$, we have
\begin{align}
    \mathbb{E}_k[e_{k,i}^{\setminus j} a _{k}^{(j)}] &= \mathbb{E}_k\left[\frac{1}{\sqrt{n}}\left(\sum_{\alpha \neq j}a_k^{(\alpha)}x_{k,i}^{(\alpha)}\right)a_k^{(j)}\right].
\end{align}
Recalling that the vector $\bm{a}_k$ follows the distribution $\bm{a}_k \sim \mathcal{N}\left(\bm{0}, \bm{I}-\frac{1}{n}(\bm{u}_1\bm{u}_1^\top + \bm{u}_2\bm{u}_2^\top)\right)$, we substitute the off-diagonal covariance terms into the expectation:
\begin{align}
    \mathbb{E}_k[e_{k,i}^{\setminus j} a _{k}^{(j)}] &= - \frac{1}{\sqrt{n}}\left(\frac{u_1^{(j)} \sum_{\alpha \neq j} u_1^{(\alpha)} x_{k,i}^{(\alpha)}}{n} + \frac{u_2^{(j)}\sum_{\alpha \neq j} u_2^{(\alpha)} x_{k,i}^{(\alpha)}}{n} \right).
\end{align}
Recognizing the parameters $Q_{k,i,1} = \frac{1}{n} \bm{x}_{k,i}^\top \bm{u}_1 $ and $Q_{k,i,2} = \frac{1}{n} \bm{x}_{k,i}^\top \bm{u}_2 $, this simplifies to
\begin{align}
    \mathbb{E}_k[e_{k,i}^{\setminus j} a _{k}^{(j)}] &= - \frac{1}{\sqrt{n}} \left(u_1^{(j)}Q_{k,i,1} + u_2^{(j)}Q_{k,i,2}\right) + o\left(\frac{1}{\sqrt{n}}\right).
\end{align}

Next, we determine the variance of $e_{k,i}^{\setminus j}$.For notational convenience, let $\bm{x}_{k,i}^{(-j)}$ denote the vector $\bm{x}_{k,i}$ with the $j$-th entry set to zero. This allows us to express the sum as
\begin{align}
    \mathbb{E}_k[(e_{k,i}^{\setminus j})^2] &= \mathbb{E}_k\left[\frac{1}{n} \left(\sum_{\alpha \neq j}a_k^{(\alpha)}x_{k,i}^{(\alpha)}\right)^2 \right] 
    = \mathbb{E}_k\left[\frac{1}{n} \left((\bm{x}_{k,i}^{(-j)})^\top \bm{a}_k\right)^2 \right].
\end{align}
Evaluating the expectation using the covariance matrix of $\bm{a}_k$ yields
\begin{align}
    \mathbb{E}_k[(e_{k,i}^{\setminus j})^2] &= \frac{1}{n}\Bigg( (\bm{x}_{k,i}^{(-j)})^\top \left(\bm{I}-\frac{1}{n}(\bm{u}_1 \bm{u}_1^\top + \bm{u}_2 \bm{u}_2^\top)\right) \bm{x}_{k,i}^{(-j)} \Bigg), \notag  \\
    &= \frac{1}{n} \left(
    \sum_{\alpha \neq j} (x_{k,i}^{(\alpha)})^2
    - \frac{ (\bm{x}_{k,i}^{(-j)})^{\top} \bm{u}_1 \bm{u}_1^{\top} \bm{x}_{k,i}^{(-j)} }{n}
    - \frac{ (\bm{x}_{k,i}^{(-j)})^{\top} \bm{u}_2 \bm{u}_2^{\top} \bm{x}_{k,i}^{(-j)} }{n}
    \right).
\end{align}
Applying $\sum_{\alpha \neq j}(x_{k,i}^{(\alpha)})^2 = \|\bm{x}_{k,i}\|^2 - O(1) = n - O(1)$, and  $\frac{1}{n}\bm{u}_m^\top \bm{x}_{k,i}^{(-j)} = Q_{k,i,m}-O(\frac{1}{n})$ for $m \in \{1,2\}$, we obtain
\begin{align}
\mathbb{E}_k[(e_{k,i}^{\setminus j})^2] &= \frac{1}{n}\Big( n - n Q_{k,i,1}^2 - n Q_{k,i,2}^2 \Big) +O\left(\frac{1}{n}\right),\\
    &= 1 - Q_{k,i,1}^2 - Q_{k,i,2}^2+O\left(\frac{1}{n}\right).
\end{align}
Finally, we note that the variance of $a_{k}^{(j)}$ itself is normalized such that:$\mathbb{E}_k[(a_{k}^{(j)})^2] = 1 - O\left(\frac{1}{n}\right)$. Thus, omitting the terms of $O\left(\frac{1}{n}\right)$ yields the covariance matrix:

\begin{align}
\operatorname{Cov}
\!\left(
\begin{pmatrix}
e_{k,i}^{\setminus j} \\
a_k^{(j)}
\end{pmatrix}
\right)
=
\begin{bmatrix}
1 - Q_{k,i,1}^2 - Q_{k,i,2}^2
&
-\dfrac{1}{\sqrt{n}}
\bigg(
u_1^{(j)} Q_{k,i,1}
+
u_2^{(j)} Q_{k,i,2}
\bigg)
\\[1em]
-\dfrac{1}{\sqrt{n}}
\bigg(
u_1^{(j)} Q_{k,i,1}
+
u_2^{(j)} Q_{k,i,2}
\bigg)
&
1
\end{bmatrix}.
\end{align}
\end{proof}

By Lemma \ref{app:lemmacovmatrix}, the cross-covariance between $e_{k,i}^{\setminus j}$ and $a_k^{(j)}$ is $O(n^{-1/2})$. Therefore  $e_{k,i}^{\setminus j}$ and $a_k^{(j)}$ may be treated as independent up to $O(n^{-1/2})$ accuracy in when evaluating any term whose overall contribution is already $O(n^{-1/2})$. Moreover, we introduce an auxiliary random variable $e_i \sim \mathcal{N}(0,1)$ and represent $e_{k,i}^{\setminus j}$ as
\begin{align}
e_{k,i}^{\setminus j}
=
\sqrt{1 - Q_{k,i,1}^2 - Q_{k,i,2}^2}\, e_i .
\end{align}

It follows from \eqref{gradj} that $\mathbb{E}[g_{k,i}^{(j)}]$ can be written as
\begin{align} 
\mathbb{E}_k[g_{k,i}^{(j)}]
&= \frac{\tau}{\sqrt{n}}
\mathbb{E}\left[f\!\left(
c_{k,1} Q_{k,i,1} + c_{k,2} Q_{k,i,2} +e_{k,i}^{\setminus j}
\right)
\left(
\frac{c_{k,1} u_1^{(j)}}{\sqrt{n}}
+ \frac{c_{k,2} u_2^{(j)}}{\sqrt{n}}
+ a_k^{(j)}
\right)\right]
\notag
\\
&\qquad
+ \frac{\tau}{\sqrt{n}}
\mathbb{E}\Bigg[f'\!\left(
c_{k,1} Q_{k,i,1} + c_{k,2} Q_{k,i,2} + e_{k,i}^{\setminus j}
\right)
\frac{a_k^{(j)} x_{k,i}^{(j)}}{\sqrt{n}} \notag \\ & \qquad \qquad\qquad\qquad\qquad  \times
\left(
\frac{c_{k,1} u_1^{(j)}}{\sqrt{n}}
+ \frac{c_{k,2} u_2^{(j)}}{\sqrt{n}}
+ a_k^{(j)}
\right)\Bigg] + o\!\left(\frac{1}{n}\right), \notag \\& = \frac{\tau}{\sqrt{n}}L_1 + \frac{\tau}{\sqrt{n}}L_2 + o\left(\frac{1}{n}\right),
\end{align}

where 
\begin{align}
L_2 &\coloneqq\mathbb{E}_k\left[f^{\prime}\left(c_{k,1}Q_{k,i,1} + c_{k,2}Q_{k,i,2}
+ e_{k,i}^{\setminus j}\right) \frac{a_k^{(j)}x_{k,i}^{(j)}}{\sqrt{n}}\left(\frac{c_{k,1}u_1^{(j)}}{\sqrt{n}}+\frac{c_{k,2}u_2^{(j)}}{\sqrt{n}}+ a_k^{(j)}\right)\right],\notag \\ &= \frac{x_{k,i}^{(j)}}{\sqrt{n}} \Bigg\langle f\left(c_{k,1}Q_{k,i,1}+c_{k,2}Q_{k,i,2}+e_i\sqrt{1-Q_{k,i,1}^2-Q_{k,i,2}^2}\right)\Bigg\rangle  + o\left(\frac{1}{\sqrt{n}}\right), \\[5ex]
    L_1 &\coloneqq\mathbb{E}_k\Bigg[
        f \left(c_{k,1}Q_{k,i,1} + c_{k,2}Q_{k,i,2} + e_{k,i}^{\setminus j}\right)
        \Bigg(\frac{c_{k,1}u_1^{(j)}}{\sqrt{n}}+\frac{c_{k,2}u_2^{(j)}}{\sqrt{n}}+ a_k^{(j)}\Bigg)
    \Bigg], \notag \\
    &= \mathbb{E}_k\Bigg[
        f\ \bigg(
            c_{k,1}Q_{k,i,1} + c_{k,2}Q_{k,i,2}
            + \sqrt{1 - Q_{k,i,1}^2 - Q_{k,i,2}^2}\, e_i \notag \\
    &\qquad\qquad\quad
            -\frac{1}{\sqrt{n}}\big(u_1^{(j)} Q_{k,i,1} + u_2^{(j)} Q_{k,i,2}\big)\, a_k^{(j)}
        \bigg)
        (\frac{c_{k,1}u_1^{(j)}}{\sqrt{n}}+\frac{c_{k,2}u_2^{(j)}}{\sqrt{n}}+ a_k^{(j)})
    \Bigg], \notag\\&=\frac{u_1^{(j)}}{\sqrt{n}}\Bigg(\Bigg\langle c_{k,1}f\left(c_{1}Q_{k,i,1} + c_{2}Q_{k,i,2} + e_{i}\sqrt{1-Q_{k,i,1}^2 -Q_{k,i,2}^2}\right)\Bigg\rangle \notag \\ & \qquad \qquad \qquad- Q_{k,i,1} \Bigg\langle f^\prime \left( c_{1}Q_{k,i,1} + c_{2}Q_{k,i,2} + e_{i}\sqrt{1-Q_{k,i,1}^2 -Q_{k,i,2}^2} \right) \Bigg\rangle\Bigg)\notag \\ & \qquad  + \frac{u_2^{(j)}}{\sqrt{n}} \Bigg(\Bigg\langle c_{k,2}f\left(c_{1}Q_{k,i,1} + c_{2}Q_{k,i,2} + e_{i}\sqrt{1-Q_{k,i,1}^2 -Q_{k,i,2}^2}\right)\Bigg\rangle \notag \\ & \qquad \qquad \qquad- Q_{k,i,2} \Bigg\langle f^\prime \left( c_{1}Q_{k,i,1} + c_{2}Q_{k,i,2} + e_{i}\sqrt{1-Q_{k,i,1}^2 -Q_{k,i,2}^2} \right) \Bigg\rangle\Bigg)\notag \\ & \qquad + o\left(\frac{1}{\sqrt{n}}\right).
\end{align}

Henceforth, to streamline the notation, we introduce
\begin{align}
\gamma_1 &\coloneqq f\!\Big(c_{1}Q_{k,1,1} + c_{2}Q_{k,1,2} + e_{1}\sqrt{1-Q_{k,1,1}^2 -Q_{k,1,2}^2}\Big), \\
\gamma_2 &\coloneqq f\!\Big(c_{1}Q_{k,2,1} + c_{2}Q_{k,2,2} + e_{2}\sqrt{1-Q_{k,2,1}^2 -Q_{k,2,2}^2}\Big).
\end{align}
More generally, for $i\in\{1,2\}$ we write
\begin{align}
\gamma_i \coloneqq f\!\Big(c_{1}Q_{k,i,1} + c_{2}Q_{k,i,2} + e_{i}\sqrt{1-Q_{k,i,1}^2 -Q_{k,i,2}^2}\Big).
\end{align}

Throughout, $\gamma_i^\prime$ denotes the derivative of $f$ evaluated at the same argument as in $\gamma_i$. Finally, we arrive at 

\begin{align}
\mathbb{E}_k[g_{k,i}^{(j)}] &= \frac{\tau}{n}\Bigg(x_{k,i}^{(j)} \langle \gamma_i^{\prime} \rangle + u_1^{(j)}\Big(\langle c_1\gamma_i \rangle - Q_{k,i,1}\langle \gamma_i^{\prime} \rangle\Big) +  u_2^{(j)}\Big(\langle c_2\gamma_i \rangle - Q_{k,i,2}\langle \gamma_i^{\prime} \rangle\Big)\Bigg) +o\left(\frac{1}{n}\right).
\end{align}

\subsubsection{Derivation of \texorpdfstring{$\mathbb{E}_k[\bm{g}_{k,1}^\top \bm{g}_{k,2}]$}{E_k[g_{k,1}^T g_{k,2}]}}\label{app:subsection:Eg1g2}

Now, we consider the cross term $\mathbb{E}_k[\bm{g}_{k,1}^\top \bm{g}_{k,2}]$. Substituting the definitions of $\bm{g}_{k,1}$ and $\bm{g}_{k,2}$ into the expectation yields

\begin{align}
\mathbb{E}_k\!\left[\bm{g}_{k,1}^\top \bm{g}_{k,2}\right]
&=
\mathbb{E}_k\!\left[
\frac{\tau^2}{n}\,
f\!\left(\frac{\bm{y}_k^\top \bm{x}_{k,1}}{\sqrt{n}}\right)
f\!\left(\frac{\bm{y}_k^\top \bm{x}_{k,2}}{\sqrt{n}}\right)\,
\bm{y}_k^\top \bm{y}_k
\right]
+ o\!\left(\frac{1}{n}\right),
\notag \\
&=
\frac{\tau^2}{n}\,
\mathbb{E}_k\!\Bigg[
\Big(\gamma_1 + \gamma_1'\,\frac{a_k^{(j)} x_{k,1}^{(j)}}{\sqrt{n}}\Big)
\Big(\gamma_2 + \gamma_2'\,\frac{a_k^{(j)} x_{k,2}^{(j)}}{\sqrt{n}}\Big)
\notag \\ & \qquad \qquad \qquad \qquad \qquad \qquad \qquad \times \Big( c_{k,1}^2 + c_{k,2}^2 + \sum_{\alpha=1}^n a_k^{(\alpha)} a_k^{(\alpha)} \Big)
\Bigg],
\notag \\
&=
\frac{\tau^2}{n}\,
\mathbb{E}_k\!\Bigg[
\Big(
\gamma_1\gamma_2
+ \gamma_1'\gamma_2'\,\frac{(a_k^{(j)})^2 x_{k,1}^{(j)} x_{k,2}^{(j)}}{n}
+ \gamma_2\,\gamma_1'\,\frac{a_k^{(j)} x_{k,1}^{(j)}}{\sqrt{n}} \notag \\ & \qquad \qquad \qquad \quad
+ \gamma_1\,\gamma_2'\,\frac{a_k^{(j)} x_{k,2}^{(j)}}{\sqrt{n}}
\Big)
\Big( c_{k,1}^2 + c_{k,2}^2 + \sum_{\alpha=1}^n a_k^{(\alpha)} a_k^{(\alpha)} \Big) \Bigg] \ .
\end{align}

We begin by noting the following expectations:
\begin{align}
    \mathbb{E}_k\left[(a_{k}^{(j)})^2\right] = 1 + o(1), \quad
    \mathbb{E}_k\left[a_{k}^{(j)}e_{k,i}^{\setminus j}\right] = o(1), \quad
    \sum_{\alpha = 1}^n \mathbb{E}_k\left[(a_k^{(\alpha)})^2\right] = n + O(1). \notag
\end{align}

Following these, we can expand the expectation as follows:
\begin{align}
    \mathbb{E}_k\!\left[\bm{g}_{k,1}^\top \bm{g}_{k,2}\right] &= \frac{\tau^2}{n}\mathbb{E}_k\ \Bigg[\Bigg(\gamma_1 \gamma_2+\gamma_1'\gamma_2'\frac{(a_k^{(j)})^2 x_{k,1}^{(j)}x_{k,2}^{(j)}}{n}+ \gamma_2 \gamma_1'\frac{a_k^{(j)}x_{k,1}^{(j)}}{n} \notag \\
    &\qquad \qquad\qquad \ \ + \gamma_2' \gamma_1 \frac{a_k^{(j)}x_{k,2}^{(j)}}{n} \Bigg) \left( c_{k,1}^2 + c_{k,2}^2 \right) \Bigg] \notag \\
    &\qquad + \frac{\tau^2}{n}\mathbb{E}_k\ \Bigg[\Bigg(\gamma_1 \gamma_2+\gamma_1'\gamma_2'\frac{(a_k^{(j)})^2 x_{k,1}^{(j)}x_{k,2}^{(j)}}{n}+ \gamma_2 \gamma_1'\frac{a_k^{(j)}x_{k,1}^{(j)}}{n} \notag \\ & \qquad \qquad \qquad \qquad \qquad \qquad \qquad \qquad
    + \gamma_2' \gamma_1 \frac{a_k^{(j)}x_{k,2}^{(j)}}{n} \Bigg) \left( \sum_{\alpha = 1}^n (a_k^{(\alpha)})^2\right) \Bigg].
\end{align}

Since the contributions from the auxiliary terms are of order $o(1/n)$, we obtain the final expression:

\begin{align}
    \mathbb{E}_k[\bm{g}_{k,1}^\top \bm{g}_{k,2}] = \tau^2 \langle \gamma_1 \gamma_2 \rangle + o(1).
\label{g1trasnposeg2}
\end{align}
\subsubsection{Derivation of \texorpdfstring{$\mathbb{E}_k[\bm{g}_{k,1}\bm{g}_{k,2}^\top]$}{E_k[g_{k,1} g_{k,2}^T]}}\label{app:subsection:Eg1g2transpose}

Substituting the definitions of $\bm{g}_{k,1}$ and $\bm{g}_{k,2}$ yields
\begin{align}\label{g1g2transpose_before_calculation}
\mathbb{E}_k[\bm{g}_{k,1}\bm{g}_{k,2}^\top] &= \mathbb{E}_k\left[\frac{\tau^2}{n} f\left(\frac{\bm{y}_k^\top \bm{x}_{k,1}}{\sqrt{n}}\right)f\left(\frac{\bm{y}_k^\top \bm{x}_{k,2}}{\sqrt{n}}\right)\bm{y}_k \bm{y}_k^\top\right] + o\left(\frac{1}{n}\right),\notag \\
    &= \frac{\tau^2}{n}\mathbb{E}_k\Bigg[\Bigg(\gamma_1 \gamma_2+\gamma_1'\gamma_2'\frac{(a_k^{(j)})^2 x_{k,1}^{(j)}x_{k,2}^{(j)}}{n} \notag \\ & \qquad \qquad \qquad + \gamma_2 \gamma_1'\frac{a_k^{(j)}x_{k,1}^{(j)}}{n} + \gamma_2' \gamma_1 \frac{a_k^{(j)}x_{k,2}^{(j)}}{n} \Bigg)\bm{y}_k \bm{y}_k^\top \Bigg].
\end{align}

We define the matrices $\bm{K} \coloneqq \bm{c}_k \bm{c}_k^\top$ and  $\bm{A} \coloneqq \bm {a}_k \bm{a}_k^\top$. Their expectations are given by
\begin{align}
    \mathbb{E}_k[\bm{A}] = \bm{I} - \frac{\bm{UU}^\top}{n}, \quad \text{and} \quad \mathbb{E}_k[\bm{K}] = \bm{I}.
\end{align}

Consequently, we arrive at the following for $\bm{y}_k\bm{y}_k^\top$:
\textbf{\begin{align}
    \bm{y}_k \bm{y}_k^\top &= \frac{\bm{U \bm{K} U^\top}}{n} + \bm{A} + \frac{\bm{U c }\bm{a}_k^\top}{\sqrt{n}} + \frac{\bm{a_k c^\top U}}{\sqrt{n}}.
\end{align}}

Substituting this into Equation \eqref{g1g2transpose_before_calculation} yields the final form of the desired expectation:

\begin{align}\label{g1g2transpose}
\mathbb{E}_k[\bm{g}_{k,1}\bm{g}_{k,2}^\top]  &=  \frac{\tau^2}{n}\mathbb{E}_k[\gamma_1 \gamma_2 \bm{A} ] + o\left(\frac{1}{n}\right) ,\notag \\
    &= \frac{\tau^2}{n} \langle \gamma_1 \gamma_2 \rangle \left(\bm{I} - \frac{\bm{UU^\top}}{n}\right)+o\left(\frac{1}{n}\right) ,\notag \\
    &= \frac{\tau^2}{n} \langle \gamma_1 \gamma_2 \rangle \bm{I} + o\left(\frac{1}{n}\right).
\end{align}
\subsubsection{Derivation of $\mathbb{E}_k[\tilde{\tilde{\Delta}}_{k,2}^{(j)}]$}\label{app:subsection:Edeltaj}

Taking the expectation of \eqref{deltatilttil}, we arrive at the following:

\begin{align}
    \mathbb{E}_k[\tilde{\tilde{\Delta}}_{k,2}^{(j)}] 
    &= \mathbb{E}_k[g_{k,2}^{(j)}] 
    - x_{k,1}^{(j)}\frac{\sum_{\alpha = 1}^n x_{k,2}^{(\alpha)}\mathbb{E}_k[g_{k,1}^{(\alpha)}]}{n} 
    - x_{k,1}^{(j)}\frac{\sum_{\alpha = 1}^n x_{k,1}^{(\alpha)}\mathbb{E}_k[g_{k,2}^{(\alpha)}]}{n} \notag \\
    &\qquad + 2x_{k,1}^{(j)} \frac{\sum_{\alpha = 1}^n x_{k,2}^{(\alpha)}x_{k,1}^{(\alpha)}\mathbb{E}_k[(g_{k,1}^{(\alpha)})^2]}{n^2} + 2x_{k,1}^{(j)} \frac{\sum_{\alpha = 1}^n (x_{k,1}^{(\alpha)})^2\mathbb{E}_k[g_{k,1}^{(\alpha)}g_{k,2}^{(\alpha)}]}{n^2} \notag \\
    &\qquad
    - \frac{x_{k,2}^{(j)}\mathbb{E}_k[(g_{k,1}^{(j)})^2]}{n} 
    - \frac{x_{k,1}^{(j)}\mathbb{E}_k[g_{k,2}^{(j)}g_{k,1}^{(j)}]}{n}  - x_{k,1}^{(j)}\frac{\mathbb{E}_k[\bm{g}_{k,2}^\top \bm{g}_{k,1}]}{n}.
\end{align}

 Moreover the expectations $\mathbb{E}_k[(g_{k,i}^{(j)})^2]$, $\mathbb{E}_k[g_{k,1}^{(j)}g_{k,2}^{(j)}]$ and $\mathbb{E}_k[\bm{g}_{k,1}^\top \bm{g}_{k,2}]$ were found to be independent of index $(j)$, they can be taken outside the summations, yielding

\begin{align}
\mathbb{E}_k[\tilde{\tilde{\Delta}}_{k,2}^{(j)}]
&= \mathbb{E}_k[g_{k,2}^{(j)}]
- x_{k,1}^{(j)}\frac{\sum_{\alpha = 1}^n x_{k,2}^{(\alpha)}\mathbb{E}_k[g_{k,1}^{(\alpha)}]}{n}
- x_{k,1}^{(j)}\frac{\sum_{\alpha = 1}^n x_{k,1}^{(\alpha)}\mathbb{E}_k[g_{k,2}^{(\alpha)}]}{n}
 \notag \\
&\qquad \ \  + 2x_{k,1}^{(j)} \frac{\tau^2\langle \gamma_1^2 \rangle \,\overbrace{\bm{x}_{k,2}^\top \bm{x}_{k,1}}^{=\,0}}{n^3} + 2x_{k,1}^{(j)} \frac{\tau^2 \langle \gamma_1 \gamma_2\rangle \, n}{n^3} \notag \\
&\qquad \ \ - \frac{x_{k,2}^{(j)}\tau^2 \langle \gamma_1 ^2\rangle}{n^2}
- \frac{x_{k,1}^{(j)}\tau^2 \langle \gamma_1 \gamma_2\rangle}{n^2} - x_{k,1}^{(j)}\frac{\tau^2 \langle \gamma_1 \gamma_2\rangle}{n}, \notag\\
& = \mathbb{E}_k[{g_{k,2}^{(j)}}] - \underbrace{x_{k,1}^{(j)}\frac{\sum_{\alpha = 1}^n x_{k,2}^{(\alpha)}\mathbb{E}_k[g_{k,1}^{(\alpha)}]}{n} }_{T_1}- \underbrace{x_{k,1}^{(j)}\frac{\sum_{\alpha = 1}^n x_{k,1}^{(\alpha)}\mathbb{E}_k[g_{k,2}^{(\alpha)}]}{n}}_{T_2} \notag \\ & \qquad \ \ 
 + x_{k,1}^{(j)} \tau^2 \langle \gamma_1 \gamma_2\rangle(-\frac{1}{n}+\frac{1}{n^2}) 
 - \frac{x_{k,2}^{(j)}\tau^2 \langle \gamma_1 ^2\rangle}{n^2}.
\end{align}

We evaluate the $\alpha$-summation terms labeled $T_1$ and $T_2$ separately:

\begin{align}
  T_1 &= x_{k,1}^{(j)}\frac{\sum_{\alpha = 1}^n x_{k,2}^{(\alpha)}\mathbb{E}_k[g_{k,1}^{(\alpha)}]}{n}, \notag \\&=
  \frac{x_{k,1}^{(j)}\tau}{n\sqrt{n}}\sum_{\alpha = 1}^n x_{k,2}^{(\alpha)}\Bigg(\frac{1}{\sqrt{n}}x_{k,1}^{(\alpha)} \langle \gamma_1^{'} \rangle + \frac{u_1^{(\alpha)}}{\sqrt{n}}(\langle c_1\gamma_1 \rangle - Q_{k,1,1}\langle \gamma_1^{'} \rangle)  \notag \\ & \qquad\qquad\qquad\qquad\qquad +  \frac{u_2^{(\alpha)}}{\sqrt{n}}(\langle c_2\gamma_1 \rangle - Q_{k,1,2}\langle \gamma_1^{'} \rangle)\Bigg), \notag\\
&=
 \frac{x_{k,1}^{(j)}\tau}{n^{3/2}}\Bigg(\frac{Q_{k,2,1}n}{\sqrt{n}}(\langle c_1\gamma_1 \rangle - Q_{k,1,1}\langle \gamma_1^{'} \rangle) \notag \\ & \qquad\qquad\qquad\qquad\qquad + \frac{Q_{k,2,2}n}{\sqrt{n}}(\langle c_2\gamma_1 \rangle - Q_{k,1,2}\langle \gamma_1^{'} \rangle)\Bigg), \notag\\
&=
 \frac{x_{k,1}^{(j)}\tau}{n}\Bigg(Q_{k,2,1}(\langle c_1\gamma_1 \rangle - Q_{k,1,1}\langle \gamma_1^{'} \rangle)  \notag \\ & \qquad\qquad\qquad\qquad\qquad + Q_{k,2,2}(\langle c_2\gamma_1 \rangle - Q_{k,1,2}\langle \gamma_1^{'} \rangle)\Bigg), \\
 T_2 &= x_{k,1}^{(j)}\frac{\sum_{\alpha = 1}^n x_{k,1}^{(\alpha)}\mathbb{E}_k[g_{k,2}^{(\alpha)}]}{n}, \notag\\  &= \frac{x_{k,1}^{(j)}\tau}{n}\Bigg(Q_{k,1,1}(\langle c_1\gamma_2 \rangle - Q_{k,2,1}\langle \gamma_2^{'} \rangle)  \notag \\ & \qquad\qquad\qquad\qquad\qquad + Q_{k,1,2}((\langle c_2\gamma_2 \rangle - Q_{k,2,2}\langle \gamma_2^{'} \rangle)\Bigg).
\end{align}

Finally, we arrive at the following for the final expression of $\mathbb{E}_k[\tilde{\tilde{\Delta}}_{k,2}^{(j)}]$:

\begin{align}\label{firsttermforx2}
\mathbb{E}_k[\tilde{\tilde{\Delta}}_{k,2}^{(j)}] &=-\frac{x_{k,1}^{(j)}\tau^2 \langle \gamma_1 \gamma_2 \rangle}{n} \notag \\
& \qquad+ \frac{\tau}{\sqrt{n}}\Bigg(\frac{1}{\sqrt{n}}x_{k,2}^{(j)} \langle \gamma_2^{'} \rangle + \frac{u_1^{(j)}}{\sqrt{n}}\left(\langle c_1\gamma_2 \rangle - Q_{k,2,1}\langle \gamma_2^{'} \rangle\right)  \notag \\ & \qquad \qquad \qquad \qquad \qquad \qquad 
+ \frac{u_2^{(j)}}{\sqrt{n}}\left(\langle c_2\gamma_2 \rangle - Q_{k,2,2}\langle \gamma_2^{'} \rangle\right)\Bigg) \notag \\
& \qquad -\frac{x_{k,1}^{(j)}\tau}{n}\Bigg(Q_{k,1,1}\left(\langle c_1\gamma_2 \rangle - Q_{k,2,1}\langle \gamma_2^{'} \rangle\right) + Q_{k,1,2}\left(\langle c_2\gamma_2 \rangle - Q_{k,2,2}\langle \gamma_2^{'} \rangle\right)\Bigg) \notag \\
& \qquad -\frac{x_{k,1}^{(j)}\tau}{n}\Bigg(Q_{k,2,1}\left(\langle c_1\gamma_1 \rangle - Q_{k,1,1}\langle \gamma_1^{'} \rangle\right) + Q_{k,2,2}\left(\langle c_2\gamma_1 \rangle - Q_{k,1,2}\langle \gamma_1^{'} \rangle\right)\Bigg) \notag \\ & \qquad+ o\left(\frac{1}{n}\right).
\end{align}

\subsubsection{Derivation of $\mathbb{E}_k[\|\tilde{\tilde{\bm{\Delta}}}_{k,2}\|^2]$}\label{app:subsection:Edeltasqr}
Following Equation \eqref{deltatilttil}, we can obtain the squared form as follows:
\begin{align}
\tilde{\tilde{\Delta}}_{k,2}^{(j)2} &= (g_{k,2}^{(j)})^2 + (x_{k,1}^{(j)})^2 \left(\frac{\bm{x}_{k,2}^\top \bm{g}_{k,1} + \bm{g}_{k,2}^\top \bm{x}_{k,1}}{n}\right)^2 \notag \\ & \qquad - 2x_{k,1}^{(j)}g_{k,2}^{(j)}\left(\frac{\bm{x}_{k,2}^\top \bm{g}_{k,1} + \bm{g}_{k,2}^\top \bm{x}_{k,1}}{n} \right) + o\left(\frac{1}{n}\right).
\end{align}

Recalling that $\mathbb{E}_k[\bm{g_{k,1}g_{k,2}^\top}]$ is diagonal (as derived in subsection \ref{app:subsection:Eg1g2transpose}) taking the expectation yields

\begin{align}
\mathbb{E}_k[(\tilde{\tilde{\Delta}}_{k,2}^{(j)})^2] &=  \notag
\mathbb{E}_k[(g_{k,2}^{(j)})^2] - 2\frac{x_{k,1}^{(j)}(x_{k,2}^{(j)}\mathbb{E}_k[g_{k,2}^{(j)}g_{k,1}^{(j)}] + x_{k,1}^{(j)}\mathbb{E}_k[(g_{k,2}^{(j)})^2])}{n} \\ 
&\qquad + x_{k,1}^{(j)2} \frac{1}{n^2} \Big(\mathbb{E}_k[(g_{k,1}^{(j)})^2]\underbrace{\sum_\alpha (x_{k,2}^{(\alpha)})^2}_{= \ n} + \mathbb{E}_k[g_{k,2}^{(j)2}] \underbrace{\sum_\alpha (x_{k,1}^{(\alpha)})^2}_{= \ n} \notag \\ & \qquad\qquad\qquad\qquad + 2 \mathbb{E}_k[g_{k,1}^{(j)}g_{k,2}^{(j)}] \underbrace{\sum_\alpha x_{k,1}^{(\alpha)}x_{k,2}^{(\alpha)}}_{ = \ 0} \Big) \ .
\end{align}

Substituting our previous results \eqref{g1trasnposeg2} and \eqref{g1g2transpose} into the expression, we arrive at

\begin{align}
\mathbb{E}_k[(\tilde{\tilde{\Delta}}_{k,2}^{(j)})^2] 
 &= \frac{\tau^2 \langle \gamma_2 ^2\rangle}{n}  - 2\frac{x_{k,1}^{(j)}(x_{k,2}^{(j)}\tau^2 \langle\gamma_2 \gamma_1 \rangle /n+ x_{k,1}^{(j)}\tau^2\langle\gamma_2^2 \rangle/n)}{n} +  \frac{\tau^2(x_{k,1}^{(j)})^2}{n^2}( \langle \gamma_1^2\rangle + \langle \gamma_2^2\rangle  ) ,  \notag \\
 &= \frac{\tau^2 \langle \gamma_2 ^2\rangle}{n}  - 2\frac{x_{k,1}^{(j)}(x_{k,2}^{(j)}\tau^2 \langle\gamma_2 \gamma_1 \rangle )}{n^2} + (x_{k,1}^{(j)})^2 \frac{\tau^2}{n^2}( \langle \gamma_1^2\rangle - \langle \gamma_2^2\rangle  )  \ .
\end{align}

Finally, this allows us to directly compute the expectation $\mathbb{E}_k[||\tilde{\tilde{\bm{\Delta}}}_{k,2}||^2]$ as follows:
\begin{align}\label{secondtermforx2}
\mathbb{E}_k[||\tilde{\tilde{\bm{\Delta}}}_{k,2}||^2] = \mathbb{E}_k[\sum_{\alpha = 1}^n(\tilde{\tilde{\Delta}}_{k,2}^{(\alpha)})^2] &= \tau^2 \langle \gamma_2 ^2\rangle - 2\frac{\tau^2 \langle\gamma_2 \gamma_1 \rangle \overbrace{\sum_{\alpha = 1}^n x_{k,1}^{(\alpha)}x_{k,2}^{(\alpha)}}^{= \ 0}}{n^2}, \notag \\& \qquad+ \frac{\tau^2}{n^2}( \langle \gamma_1^2\rangle - \langle \gamma_2^2\rangle  )\overbrace{\sum_{\alpha = 1} ^n (x_{k,1}^{(\alpha)})^2}^{=\ n}, \notag \\
&= \tau^2 \langle \gamma_2 ^2\rangle + \frac{\tau^2}{n}( \langle \gamma_1^2\rangle - \langle \gamma_2^2\rangle  ),  \notag \\
&= \tau^2 \langle \gamma_2 ^2\rangle + o\left(\frac{1}{n}\right) \ .
\end{align}

\subsubsection{Final expressions for the second component} \label{app:subsectionfinalexpforsecondcomponent}
Recall that we obtained the following in \eqref{beforecalculation,drift2}; we restate the equation here for convenience:
\begin{align}\label{second_estimate_expectation_with_deltas}
    \mathbb{E}_k[x_{k+1,2}^{(j)} - x_{k,2}^{(j)}] =  \mathbb{E}_k[\tilde{\tilde{\Delta}}_{k,2}^{(j)}] - \mathbb{E}_k[\|\tilde{\tilde{\bm{\Delta}}}_{k,2}\|^2]  x_{k,2}^{(j)}/2n - x_{k,2}^{(j)} \Big(\sum_{\alpha = 1}^n x_{k,2}^{(\alpha)}\mathbb{E}_k[\tilde{\tilde{\Delta}}_{k,2}^{(\alpha)}]\Big)/n.
\end{align}

Having determined the first two terms involving the expectation of $\tilde{\tilde{\bm{\Delta}}}_{k,2}$ in Subsections \ref{app:subsection:Edeltaj} and \ref{app:subsection:Edeltasqr}, we now proceed to compute the last term:

\begin{align}
&    x_{k,2}^{(j)}\frac{1}{n} \sum_{\alpha = 1 }^n x_{k,2}^{(\alpha)}\mathbb{E}_k[\tilde{\tilde{\Delta}}_{k,2}^{(\alpha)}] = \notag \\
 & x_{k,2}^{(j)}\frac{1}{n} \sum_{\alpha = 1 }^n x_{k,2}^{(\alpha)} \Bigg(-\frac{x_{k,1}^{(\alpha)}\tau^2 \langle \gamma_1 \gamma_2 \rangle}{n}  + \frac{\tau}{\sqrt{n}}\Big(\frac{1}{\sqrt{n}}x_{k,2}^{(\alpha)} \langle \gamma_2^{'} \rangle + \frac{u_1^{(\alpha)}}{\sqrt{n}}(\langle c_1\gamma_2 \rangle - Q_{k,2,1}\langle \gamma_2^{'} \rangle) \notag \\
&\qquad\qquad\qquad\qquad\qquad\qquad  \qquad \qquad \qquad +  \frac{u_2^{(\alpha)}}{\sqrt{n}}(\langle c_2\gamma_2 \rangle - Q_{k,2,2}\langle \gamma_2^{'} \rangle)\Big) \notag \\
&\qquad \qquad \qquad  \quad -\frac{x_{k,1}^{(\alpha)}\tau}{n}\Big(Q_{k,1,1}(\langle c_1\gamma_2 \rangle - Q_{k,2,1}\langle \gamma_2^{'} \rangle) + Q_{k,1,2}(\langle c_2\gamma_2 \rangle - Q_{k,2,2}\langle \gamma_2^{'} \rangle)\Big) \notag \\
&\qquad \qquad \qquad   \quad -\frac{x_{k,1}^{(\alpha)}\tau}{n}\Big(Q_{k,2,1}(\langle c_1\gamma_1 \rangle - Q_{k,1,1}\langle \gamma_1^{'} \rangle) + Q_{k,2,2}(\langle c_2\gamma_1 \rangle - Q_{k,1,2}\langle \gamma_1^{'} \rangle)\Big)\Bigg).
\end{align}

Realize that the terms involving $\sum_{\alpha = 1}^n x_{k,2}^{(\alpha)}x_{k,1}^{(\alpha)} = 0$ because of the orthogonality constraint of the previous iteration step, thus we arrive at the following for the corresponding term:

\begin{align}\label{thirdtermforx2}
x_{k,2}^{(j)}\frac{1}{n} \sum_{\alpha =1}^n x_{k,2}^{(\alpha)}\mathbb{E}_k[\tilde{\tilde{\Delta}}_{k,2}^{(\alpha)}] 
 &=  x_{k,2}^{(j)}\frac{1}{n}\frac{\tau}{\sqrt{n}}\Bigg(\frac{1}{\sqrt{n}}\langle \gamma_2^{'} \rangle \sum_{\alpha =1}^n x_{k,2}^{(\alpha)}x_{k,2}^{(\alpha)} \notag \\ 
 & \qquad\qquad\qquad\qquad + \frac{\sum_{\alpha =1}^n x_{k,2}^{(\alpha)} u_1^{(\alpha)}}{\sqrt{n}}(\langle c_1\gamma_2 \rangle - Q_{k,2,1}\langle \gamma_2^{'} \rangle) \notag \\ 
 & \qquad\qquad\qquad\qquad   +  \frac{\sum_{\alpha =1}^n x_{k,2}^{(\alpha)}u_2^{(\alpha)}}{\sqrt{n}}(\langle c_2\gamma_2 \rangle - Q_{k,2,2}\langle \gamma_2^{'} \rangle)\Bigg) , \notag \\
   &= x_{k,2}^{(j)}\frac{1}{n}\frac{\tau}{\sqrt{n}}\Bigg(\frac{1}{\sqrt{n}}n \langle \gamma_2^{'} \rangle + \frac{Q_{2,1}n}{\sqrt{n}}(\langle c_1\gamma_2 \rangle - Q_{k,2,1}\langle \gamma_2^{'} \rangle) \notag \\ 
 & \qquad\qquad\qquad\qquad +  \frac{Q_{2,2}n}{\sqrt{n}}(\langle c_2\gamma_2 \rangle - Q_{k,2,2}\langle \gamma_2^{'} \rangle)\Bigg),  \notag \\
   &= x_{k,2}^{(j)}\frac{\tau}{\sqrt{n}}\Bigg(\frac{1}{\sqrt{n}} \langle \gamma_2^{'} \rangle + \frac{Q_{2,1}}{\sqrt{n}}(\langle c_1\gamma_2 \rangle - Q_{k,2,1}\langle \gamma_2^{'} \rangle) \notag \\ 
 & \qquad\qquad\qquad\qquad +  \frac{Q_{2,2}}{\sqrt{n}}(\langle c_2\gamma_2 \rangle - Q_{k,2,2}\langle \gamma_2^{'} \rangle)\Bigg).
\end{align}

Having derived the expression above, we finally substitute all our previous calculations \eqref{firsttermforx2},\eqref{secondtermforx2} and \eqref{thirdtermforx2} into \eqref{second_estimate_expectation_with_deltas} to get
\begin{align}
    \mathbb{E}_k[x_{k+1,2}^{(j)} - x_{k,2}^{(j)}] &= -\frac{x_{k,1}^{(j)}\tau^2 \langle \gamma_1 \gamma_2 \rangle}{n} \notag \\
&\qquad  + \frac{\tau}{\sqrt{n}}\Bigg(\frac{1}{\sqrt{n}}x_{k,2}^{(j)} \langle \gamma_2^{'} \rangle + \frac{u_1^{(j)}}{\sqrt{n}}\Big(\langle c_1\gamma_2 \rangle - Q_{k,2,1}\langle \gamma_2^{'} \rangle\Big) \notag \\
&\qquad\qquad\qquad+  \frac{u_2^{(j)}}{\sqrt{n}}\Big(\langle c_2\gamma_2 \rangle - Q_{k,2,2}\langle \gamma_2^{'} \rangle\Big)\Bigg) \notag \\
&\qquad -\frac{x_{k,1}^{(j)}\tau}{n}\Bigg(Q_{k,1,1}\Big(\langle c_1\gamma_2 \rangle - Q_{k,2,1}\langle \gamma_2^{'} \rangle\Big) + Q_{k,1,2}\Big(\langle c_2\gamma_2 \rangle - Q_{k,2,2}\langle \gamma_2^{'} \rangle\Big)\Bigg)  \notag \\
&\qquad-\frac{x_{k,1}^{(j)}\tau}{n}\Bigg(Q_{k,2,1}\Big(\langle c_1\gamma_1 \rangle - Q_{k,1,1}\langle \gamma_1^{'} \rangle\Big) + Q_{k,2,2}\Big(\langle c_2\gamma_1 \rangle - Q_{k,1,2}\langle \gamma_1^{'} \rangle\Big)\Bigg) \notag \\
&\qquad
- \frac{x_{k,2}^{(j)}}{2n}\tau^2 \langle \gamma_2 ^2\rangle 
 \notag \\
&\qquad -x_{k,2}^{(j)}\frac{\tau}{\sqrt{n}}\Bigg(\frac{1}{\sqrt{n}} \langle \gamma_2^{'} \rangle + \frac{Q_{2,1}}{\sqrt{n}}\Big(\langle c_1\gamma_2 \rangle - Q_{k,2,1}\langle \gamma_2^{'} \rangle\Big) \notag \\
&\qquad\qquad\qquad\qquad
+  \frac{Q_{2,2}}{\sqrt{n}}\Big(\langle c_2\gamma_2 \rangle - Q_{k,2,2}\langle \gamma_2^{'} \rangle\Big)\Bigg) + o\left(\frac{1}{n}\right).
\end{align}

The diffusion term, the second moment of the incremental update for the $j-th$ element, can be stated as follows:

\begin{align}
    \mathbb{E}_k[(x_{k+1,2}^{(j)} - x_{k,2}^{(j)})^2] &= \mathbb{E}_k[(g_{k,2}^{(j)})^2] - 2\frac{x_{k,1}^{(j)}\Big(x_{k,2}^{(j)}\mathbb{E}_k[g_{k,2}^{(j)}g_{k,1}^{(j)}] + x_{k,1}^{(j)}\mathbb{E}_k[(g_{k,2}^{(j)})^2]\Big)}{n} \notag \\
    &\qquad + x_{k,1}^{(j)2} \frac{1}{n^2}\Big(x_{k,2}^{(j)2}\mathbb{E}_k[(g_{k,2}^{(j)})^2] + x_{k,1}^{(j)2}\mathbb{E}_k[(g_{k,2}^{(j)})^2] \notag \\ & \qquad \qquad \qquad \qquad + 2x_{k,1}^{(j)}x_{k,2}^{(j)}\mathbb{E}_k[g_{k,1}^{(j)}g_{k,2}^{(j)}]\Big), \notag \\
 &= \frac{\tau^2\langle \gamma_2^2\rangle}{n} - 2\frac{x_{k,1}^{(j)}(x_{k,2}^{(j)}\frac{\tau^2 \langle \gamma_1 \gamma_2 \rangle}{n} + x_{k,1}^{(j)}\frac{\tau^2 \langle \gamma_2^2\rangle}{n})}{n} \notag \\
 & \qquad+ x_{k,1}^{(j)2} \frac{1}{n^2}\Big(x_{k,2}^{(j)2}\frac{\tau^2 \langle \gamma_1^2 \rangle}{n} 
 + x_{k,1}^{(j)2}\frac{\tau^2 \langle \gamma_2^2 \rangle}{n} + 2x_{k,1}^{(j)}x_{k,2}^{(j)}\frac{\tau^2 \langle \gamma_1 \gamma_2 \rangle}{n}\Big), \notag \\
 &= \frac{\tau^2\langle \gamma_2^2\rangle}{n} + o\left(\frac{1}{n}\right).
\end{align}

Combining the preceding results, we arrive at the final expressions for the second estimate $\bm{x}_{k,2}$:

\begin{align}
\mathbb{E}_k\big[x_{k+1,2}^{(j)} - x_{k,2}^{(j)}\big]
&= -\frac{x_{k,1}^{(j)}\tau^2 \langle \gamma_1 \gamma_2 \rangle}{n}
 + \frac{\tau}{\sqrt{n}}\Big(
      \frac{1}{\sqrt{n}}x_{k,2}^{(j)} \langle \gamma_2' \rangle
    + \frac{u_1^{(j)}}{\sqrt{n}}\big(\langle c_1\gamma_2 \rangle - Q_{k,2,1}\langle \gamma_2' \rangle\big) \notag \\
&\quad + \frac{u_2^{(j)}}{\sqrt{n}}\big(\langle c_2\gamma_2 \rangle - Q_{k,2,2}\langle \gamma_2' \rangle\big)\Big)
 - \frac{x_{k,2}^{(j)}}{2n}\tau^2 \langle \gamma_2^2 \rangle \notag \\
&\quad
 - \frac{x_{k,1}^{(j)}\tau}{n}\Big(
      Q_{k,1,1}\big(\langle c_1\gamma_2 \rangle - Q_{k,2,1}\langle \gamma_2' \rangle\big)
    + Q_{k,1,2}\big(\langle c_2\gamma_2 \rangle - Q_{k,2,2}\langle \gamma_2' \rangle\big)\Big) \notag \\
&\quad
 - \frac{x_{k,1}^{(j)}\tau}{n}\Big(
      Q_{k,2,1}\big(\langle c_1\gamma_1 \rangle - Q_{k,1,1}\langle \gamma_1' \rangle\big)
    + Q_{k,2,2}\big(\langle c_2\gamma_1 \rangle - Q_{k,1,2}\langle \gamma_1' \rangle\big)\Big) \notag \\
&\quad -x_{k,2}^{(j)}\frac{\tau}{\sqrt{n}}\Big(\frac{1}{\sqrt{n}} \langle \gamma_2^{'} \rangle + \frac{Q_{2,1}}{\sqrt{n}}(\langle c_1\gamma_2 \rangle - Q_{k,2,1}\langle \gamma_2^{'} \rangle) \notag \\ & \qquad\qquad\qquad\qquad +  \frac{Q_{2,2}}{\sqrt{n}}(\langle c_2\gamma_2 \rangle - Q_{k,2,2}\langle \gamma_2^{'} \rangle)\Big) + o\left(\frac{1}{n}\right),
\end{align}

\begin{align}
\mathbb{E}_k[(x_{k+1,2}^{(j)} - x_{k,2}^{(j)})^2] =  \frac{\tau^2\langle \gamma_2^2\rangle}{n} + o\left(\frac{1}{n}\right).
\end{align}

\subsection{Dynamics of the first estimate} \label{app:firstestimate}

For the initial estimate, where Gram-Schmidt orthogonalization is not applicable ($\tilde{\tilde{\bm{x}}}_{k,1} = \tilde{\bm{x}}_{k,1}$), we substitute the update expression directly into the normalization step. Applying a first-order Taylor expansion to the resulting normalization factor, we obtain

\begin{align} \label{equationxupdate}
\bm{x}_{k+1,1} &=  \frac{{\bm{x}_{k,1}+\bm{g}_{k,1}}}{\sqrt{1 + \|\bm{g}_{k,1}\|^2/n +2\bm{x}_{k,1}^\top \bm{g}_{k,1}/n}} ,\notag  \\ 
&= \bm{x}_{k,1} - \bm{x}_{k,1}\frac{\|\bm{g}_{k,1}\|^2}{2n} - \bm{x}_{k,1}\frac{\bm{x}_{k,1}^\top \bm{g}_{k,1}}{n} +\bm{g}_{k,1} \notag \\ & \qquad  - \bm{g}_{k,1}\frac{\|\bm{g}_{k,1}\|^2}{2n} - \bm{g}_{k,1}\frac{\bm{x}_{k,1}^\top \bm{g}_{k,1}}{n} + o\left(\frac{1}{n}\right).
\end{align}

Consistent with the analysis of the second estimate, the expectation of the  entry-wise update of the first estimate satisfies

\begin{align}\label{first_estimate_expectation_with_g}
&\mathbb{E}_k[x_{k+1,1}^{(j)} - x_{k,1}^{(j)} ]  \notag \\ & \qquad = \underbrace{\mathbb{E}_k[g_{k,1}^{(j)}]}_{P_1} - \underbrace{x_{k,1}^{(j)}\frac{\mathbb{E}_k[||\bm{g}_{k,1}||^2]}{2n}}_{P_2}  - \underbrace{x_{k,1}^{(j)}\frac{\mathbb{E}_k[\bm{x}_{k,1}^\top \bm{g}_{k,1}]}{n}}_{P_3} -  \underbrace{\mathbb{E}_k[g_{k,1}^{(j)}\frac{\bm{x}_{k,1}^\top \bm{g}_{k,1}}{n}]}_{P_4} +o\Big(\frac{1}{n}\Big),
\end{align}

where the terms labeled $P_1, P_2, P_3$ and $P_4$ evaluated separately. Leveraging the results established in Section \ref{app:secondestimate} for the expected gradients, it follows that:

\begin{align}
P_1 &= \mathbb{E}_k[g_{k,1}^{(j)}] =  \frac{\tau}{n}\Big(x_{k,1}^{(j)} \langle \gamma_1^{'} \rangle + u_1^{(j)}(\langle c_1\gamma_1 \rangle - Q_{k,1,1}\langle \gamma_1^{'} \rangle) +  u_2^{(j)}(\langle c_2\gamma_1 \rangle - Q_{k,1,2}\langle \gamma_1^{'} \rangle)\Big) ,\\
 P_2 &=  x_{k,1}^{(j)}\frac{\mathbb{E}_k[\|\bm{g}_{k,1}\|^2]}{2n}  = x_{k,1}^{(j)} \frac{\sum_{\alpha = 1}^n \mathbb{E}_k[{(g_{k,1}^{(\alpha)})^2}]}{2n} = \frac{x_{k,1}^{(j)}}{2n}\tau^2 \langle \gamma_1^2\rangle ,\\
P_3 &= \notag x_{k,1}^{(j)}\frac{\mathbb{E}_k[\bm{x}_{k,1}^\top \bm{g}_{k,1}]}{n}, \\ \notag
&= x_{k,1}^{(j)}\frac{\sum_{\alpha = 1}^n x_{k,1}^{(\alpha)}\mathbb{E}_k[g_{k,1}^{(\alpha)}]}{n},   
\\ \notag &=  \frac{x_{k,1}^{(j)}}{n}\sum_{\alpha = 1}^n x_{k,1}^{(\alpha)} \frac{\tau}{n}\Bigg(x_{k,1}^{(\alpha)} \langle \gamma_1^{'} \rangle + u_1^{(\alpha)}(\langle c_1\gamma_1 \rangle - Q_{k,1,1}\langle \gamma_1^{'} \rangle) +  u_2^{(\alpha)}(\langle c_2\gamma_1 \rangle - Q_{k,1,2}\langle \gamma_1^{'} \rangle)\Bigg),
\\ \notag 
&= \frac{x_{k,1}^{(j)}}{n}\tau\Bigg(\frac{1}{n}\sum_{\alpha = 1}^n x_{k,1}^{(\alpha)} x_{k,1}^{(\alpha)} \langle \gamma_1^{'} \rangle \notag \\ & \qquad \qquad \qquad+ \frac{\sum_{\alpha = 1}^n x_{k,1}^{(\alpha)}u_1^{(\alpha)}}{n}(\langle c_1\gamma_1 \rangle - Q_{k,1,1}\langle \gamma_1^{'} \rangle) \notag \\ 
& \qquad \qquad \qquad +  \frac{\sum_{\alpha = 1}^n x_{k,1}^{(\alpha)} u_2^{(\alpha)}}{n}(\langle c_2\gamma_1 \rangle - Q_{k,1,2}\langle \gamma_1^{'} \rangle)\Bigg),
\notag\\ 
&= \frac{x_{k,1}^{(j)}}{n}\tau\Big(\frac{1}{n}n\langle \gamma_1^{'} \rangle + \frac{Q_{k,1,1}n}{n}(\langle c_1\gamma_1 \rangle - Q_{k,1,1}\langle \gamma_1^{'} \rangle) +  \frac{Q_{k,1,2}n}{n}(\langle c_2\gamma_1 \rangle - Q_{k,1,2}\langle \gamma_1^{'} \rangle)\Big),
\notag \\
&= \frac{x_{k,1}^{(j)}}{n}\tau\Big(\langle \gamma_1^{'} \rangle + Q_{k,1,1}(\langle c_1\gamma_1 \rangle - Q_{k,1,1}\langle \gamma_1^{'} \rangle) + Q_{k,1,2}(\langle c_2\gamma_1 \rangle - Q_{k,1,2}\langle \gamma_1^{'} \rangle)\Big) , \\
P_4 &= \mathbb{E}_k[g_{k,1}^{(j)}\frac{\bm{x}_{k,1}^\top \bm{g}_{k,1}}{n}], \notag\\ &= \frac{1}{n} \mathbb{E}_k[\sum_{\alpha=1}^n x_{k,1}^{(\alpha)}g_{k,1}^{(\alpha)}g_{k,1}^{(j)}], \notag \\ &= \frac{1}{n} \sum_{\alpha=1}^n x_{k,1}^{(\alpha)}\mathbb{E}_k[g_{k,1}^{(\alpha)}g_{k,1}^{(j)}], \notag \\ &= \frac{1}{n}x_{k,1}^{(j)}\mathbb{E}_k[(g_{k,1}^{(j)})^2], \notag \\ &= \frac{1}{n^2}x_{k,1}^{(j)} \tau^2 \langle  \gamma_1^2\rangle .
\end{align}

Combining these results yields the final formulation for the  first estimate in the following:

\begin{align}
&\mathbb{E}_k[x_{k+1,1}^{(j)} - x_{k,1}^{(j)}]  \notag \\ & \qquad =  \frac{\tau}{\sqrt{n}}\Big(\frac{1}{\sqrt{n}}x_{k,1}^{(j)} \langle \gamma_1^{'} \rangle + \frac{u_1^{(j)}}{\sqrt{n}}(\langle c_1\gamma_1 \rangle - Q_{k,1,1}\langle \gamma_1^{'} \rangle) +  \frac{u_2^{(j)}}{\sqrt{n}}(\langle c_2\gamma_1 \rangle - Q_{k,1,2}\langle \gamma_1^{'} \rangle)\Big) \notag \\
&\qquad \qquad  - \frac{x_{k,1}^{(j)}}{n}\tau\Big(\langle \gamma_1^{'} \rangle + Q_{k,1,1}(\langle c_1\gamma_1 \rangle - Q_{k,1,1}\langle \gamma_1^{'} \rangle)  + Q_{k,1,2}(\langle c_2\gamma_1 \rangle - Q_{k,1,2}\langle \gamma_1^{'} \rangle)\Big)
  \notag \\ &\qquad \qquad - \frac{x_{k,1}^{(j)}}{2n}\tau^2 \langle \gamma_1^2\rangle + o\left(\frac{1}{n}\right), 
 \notag \\ & \qquad=  \frac{\tau}{\sqrt{n}}\Big(\frac{u_1^{(j)}}{\sqrt{n}}(\langle c_1\gamma_1 \rangle - Q_{k,1,1}\langle \gamma_1^{'} \rangle) +  \frac{u_2^{(j)}}{\sqrt{n}}(\langle c_2\gamma_1 \rangle - Q_{k,1,2}\langle \gamma_1^{'} \rangle)\Big) - \frac{x_{k,1}^{(j)}}{2n}\tau^2 \langle \gamma_1^2\rangle 
\notag \\
&\qquad \qquad - \frac{x_{k,1}^{(j)}}{n}\tau\Big( Q_{k,1,1}(\langle c_1\gamma_1 \rangle - Q_{k,1,1}\langle \gamma_1^{'} \rangle) + Q_{k,1,2}(\langle c_2\gamma_1 \rangle - Q_{k,1,2}\langle \gamma_1^{'} \rangle)\Big)+ o\left(\frac{1}{n}\right).
\end{align}

Proceeding analogously to the derivation in Section \ref{app:secondestimate}, only the leading-order term survives, consistent with the previous analysis we arrive at

\begin{align}
\mathbb{E}_k[(x_{k+1,1}^{(j)} - x_{k,1}^{(j)})^2] &= \frac{\tau^2\langle \gamma_1^2\rangle}{n} + o\left(\frac{1}{n}\right).
\end{align}

\subsection{Derivation of the $\mathbb{E}_k[(x_{k+1,2}^{(j)}-x_{k,2}^{(j)})(x_{k+1,1}^{(j)}-x_{k,1}^{(j)})]$} \label{app:crossterms}

Following the expressions in \eqref{equationx2-x2} and \eqref{equationxupdate}, expectation of the cross term satisfies

\begin{align}
    &\mathbb{E}_k[(x_{k+1,2}^{(j)}-x_{k,2}^{(j)})(x_{k+1,1}^{(j)}-x_{k,1}^{(j)})] \notag \\ & \qquad = \mathbb{E}_k \Bigg[\Big(\tilde{\tilde{\Delta}}_{k,2}^{(j)} - \frac{\|\tilde{\tilde{\bm{\Delta}}}_{k,2}\|^2  x_{k,2}^{(j)}}{2n} - x_{k,2}^{(j)} \frac{\bm{x}_{k,2}^\top\tilde{\tilde{\bm{\Delta}}}_{k,2}}{n}  
     \Big) \notag \\ & \qquad \qquad \qquad   \times
    \Big(g_{k,1}^{(j)}-x_{k,1}^{(j)}\frac{\|\bm{g}_{k,1}\|^2}{2n} - x_{k,1}^{(j)}\frac{\bm{x}_{k,1}^\top \bm{g}_{k,1}}{n} -g_{k,1}\frac{\bm{x}_{k,1}^\top \bm{g}_{k,1}}{n}\Big)\Bigg] +o\left(\frac{1}{n}\right).
\end{align}

Furthermore, substituting Equation \eqref{deltatilttil} into the expression above yields
\begin{align}
    &\mathbb{E}_k[(x_{k+1,2}^{(j)}-x_{k,2}^{(j)})(x_{k+1,1}^{(j)}-x_{k,1}^{(j)})] \notag \\ & \qquad = \mathbb{E}_k \Bigg[\Big( g_{k,2}^{(j)}-x_{k,1}^{(j)}\frac{\bm{x}_{k,1}^\top \bm{g}_{k,2} + \bm{x}_{k,2}^\top \bm{g}_{k,1}}{n} - \frac{x_{k,2}^{(j)}}{n}(\bm{x}_{k,2}^\top\bm{g}_{k,2})\Big) \notag \\ &  \qquad \qquad \qquad \times  
    \Big(g_{k,1}^{(j)}-x_{k,1}^{(j)}\frac{\bm{x}_{k,1}^\top \bm{g}_{k,1}}{n}\Big)\Bigg]+o\left(\frac{1}{n}\right),
\end{align}

\begin{align}
    &\mathbb{E}_k[(x_{k+1,2}^{(j)}-x_{k,2}^{(j)})(x_{k+1,1}^{(j)}-x_{k,1}^{(j)})]  \notag \\ & \qquad =  \mathbb{E}_k[g_{k,1}^{(j)}g_{k,2}^{(j)}] - \frac{x_{k,1}^{(j)}}{n}\mathbb{E}[g_{k,2}^{(j)}(\bm{x}_{k,1}^\top\bm{g}_{k,1})]  - \frac{x_{k,1}^{(j)}}{n}\mathbb{E}[g_{k,1}^{(j)}(\bm{x}_{k,1}^\top \bm{g}_{k,2} + \bm{x}_{k,2}^\top \bm{g}_{k,1})] \notag\\ & \qquad \qquad
    -\frac{x_{k,2}^{(j)}}{n}\mathbb{E}[g_{k,1}^{(j)}(\bm{x}_{k,2}^{\top}\bm{g}_{k,2}) ] 
    +\frac{x_{k,1}^{(j)}x_{k,2}^{(j)}}{n^2}\mathbb{E}[(\bm{x}_{k,1}^\top \bm{g}_{k,1})( \bm{x}_{k,2}^\top \bm{g}_{k,2})] \notag\\ & \qquad \qquad
    + \frac{x_{k,1}^{(j)2}}{n^2}\mathbb{E}[(\bm{x}_{k,1}^\top \bm{g}_{k,2} + \bm{x}_{k,2}^\top \bm{g}_{k,1})(\bm{x}_{k,1}^\top\bm{g}_{k,1})].
\end{align}

We leverage the results we obtained for the expectations over the gradients in the previous sections, thus we can simplify the expression as 

\begin{align}
    &\mathbb{E}_k[(x_{k+1,2}^{(j)}-x_{k,2}^{(j)})(x_{k+1,1}^{(j)}-x_{k,1}^{(j)})]  \notag \\ &\qquad = \mathbb{E}_k[g_{k,1}^{(j)}g_{k,2}^{(j)}] - \frac{x_{k,1}^{(j)2}}{n}\mathbb{E}[g_{k,2}^{(j)}g_{k,1}^{(j)}] - \frac{x_{k,1}^{(j)}}{n}(x_{k,1}^{(j)}\mathbb{E}[g_{k,1}^{(j)}g_{k,2}^{(j)}] + x_{k,2}^{(j)}\mathbb{E}[g_{k,1}^{(j)2}])
    \notag\\ &\qquad \qquad -\frac{x_{k,2}^{(j)2}}{n}\mathbb{E}[g_{k,1}^{(j)}g_{k,2}^{(j)} ] 
    +\frac{x_{k,1}^{(j)}x_{k,2}^{(j)}}{n^2}\sum_{\alpha = 1}^nx_{k,1}^{(\alpha)}x_{k,2}^{(\alpha)}\mathbb{E}[g_{k,1}^{(\alpha)}g_{k,2}^{(\alpha)} ] \notag\\ &\qquad \qquad
    + \frac{x_{k,1}^{(j)2}}{n^2}(\sum_{\alpha =1}^nx_{k,1}^{(\alpha)2}\mathbb{E}[g_{k,1}^{(\alpha)}g_{k,2}^{(\alpha)}] + \sum_{\alpha =1}^nx_{k,1}^{(\alpha)}x_{k,2}^{(\alpha)}\mathbb{E}[g_{k,1}^{(\alpha)2}]).
\end{align}

Recall that $\mathbb{E}[g_{k,1}^{(j)2}]$,$\mathbb{E}[g_{k,2}^{(j)2}]$ and $\mathbb{E}[g_{k,1}^{(j)}g_{k,2}^{(j)} ]$ were found to be independent of $j$, thus these expectations get out of the sum operator. We also recognize that $\sum_{\alpha =1}^nx_{k,1}^{(\alpha)2}=n$ and $\sum_{\alpha = 1}^nx_{k,1}^{(\alpha)}x_{k,2}^{(\alpha)} = 0 $, and arrive at the following:
\begin{align}
    & \mathbb{E}_k[(x_{k+1,2}^{(j)}-x_{k,2}^{(j)})(x_{k+1,1}^{(j)}-x_{k,1}^{(j)})] \notag \\ & \qquad = \frac{\tau^2 \langle\gamma_1 \gamma_2 \rangle}{n} - \frac{x_{k,1}^{(j)2}}{n^2}\tau^2 \langle\gamma_1 \gamma_2 \rangle  - \frac{x_{k,1}^{(j)}}{n^2}(x_{k,1}^{(j)}\tau^2 \langle\gamma_1 \gamma_2 \rangle + x_{k,2}^{(j)}\tau^2 \langle\gamma_1^2 \rangle)\notag\\ & \qquad \qquad
    -\frac{x_{k,2}^{(j)2}}{n^2}\tau^2 \langle\gamma_1 \gamma_2 \rangle 
    + \frac{x_{k,1}^{(j)2}}{n^2}\tau^2 \langle\gamma_1 \gamma_2 \rangle.
\end{align}

Observe that all the other terms other than the first, is of order $O\left(\frac{1}{n^2}\right)$. Consequently, we arrive at the final expression:
\begin{align}
    \mathbb{E}_k\left[(x_{k+1,2}^{(j)}-x_{k,2}^{(j)})(x_{k+1,1}^{(j)}-x_{k,1}^{(j)})\right] = \tau^2 \frac{\langle\gamma_1 \gamma_2 \rangle}{n} + o\left(\frac{1}{n}\right).
\end{align}

\subsection{Regularization Term}\label{app:regularization}
Regarding the regularization term ($-\frac{\tau}{n}\phi(x)$), we first define the macroscopic variable $R_{k,i,j}$ as 
\begin{align}
    R_{k,i,j} = \frac{\bm{x}_{k,i}^\top \phi(\bm{x}_{k,j})}{n}.
\end{align}
We derive the additional regularization terms by redefining the gradient expressions in \eqref{gradients_before_reg}. The resulting updates are given by
\begin{equation}
    \tilde{\bm{x}}_{k,1}
    = 
    \bm{x}_{k,1}
    +
    \underbrace{\frac{\tau}{\sqrt{n}}
    f\!\left(
        \frac{1}{\sqrt{n}} \bm{y}_k^\top \bm{x}_{k,1}
    \right)
    \bm{y}_k  - \frac{\tau}{n}\phi(\bm{x}_{k,1})}_{\coloneqq \bm{g}_{k,1}-\frac{\tau}{n}\phi(\bm{x}_{k,1})},
\end{equation}

\begin{equation}
    \tilde{\bm{x}}_{k,2}
    = 
    \bm{x}_{k,2}
    +
    \underbrace{\frac{\tau}{\sqrt{n}}
    f\!\left(
        \frac{1}{\sqrt{n}} \bm{y}_k^\top \bm{x}_{k,2}
    \right)
    \bm{y}_k  - \frac{\tau}{n}\phi(\bm{x}_{k,2})}_{\coloneqq \bm{g}_{k,2}-\frac{\tau}{n}\phi(\bm{x}_{k,2})}.
\end{equation}

Using our result \eqref{first_estimate_expectation_with_g} from Section \ref{app:firstestimate}, we obtain the following expression for the first estimate:

\begin{align}
\mathbb{E}_k[x_{k+1,1}^{(j)} - x_{k,1}^{(j)} ] &= \underbrace{\mathbb{E}_k[g_{k,1}^{(j)}-\frac{\tau}{n}\phi(\bm{x}_{k,1})^{(j)}]}_{P_1} - \underbrace{x_{k,1}^{(j)}\frac{\mathbb{E}_k[||\bm{g}_{k,1}-\frac{\tau}{n}\phi(\bm{x}_{k,1})||^2]}{2n}}_{P_2} \notag \\ 
& \qquad - \underbrace{x_{k,1}^{(j)}\frac{\mathbb{E}_k[\bm{x}_{k,1}^\top (\bm{g}_{k,1}-\frac{\tau}{n}\phi(\bm{x}_{k,1}))]}{n}}_{P_3} \notag \\ 
& \qquad -   \underbrace{\mathbb{E}_k[(g_{k,1}^{(j)}-\frac{\tau}{n}\phi(\bm{x}_{k,1})^{(j)})\frac{\bm{x}_{k,1}^\top (\bm{g}_{k,1}-\frac{\tau}{n}\phi(\bm{x}_{k,1}))}{n}]}_{P_4} +o\left(\frac{1}{n}\right).
\end{align}

Terms $P_1,P_2,P_3,P_4$ will be calculated separately in the following. For notational clarity and to avoid the repetition of lengthy baseline expressions, we use $(\mathbb{E}_k)_{\text{base}}$ as a generic placeholder for the terms present in the unregularized dynamics. It should be understood that the specific content of $(\mathbb{E}_k)_{\text{base}}$ varies depending on the quantity being evaluated, representing the respective unregularized expectations derived in Section \ref{app:firstestimate} and \ref{app:secondestimate}. This allows us to isolate and highlight the new contributions arising from the regularization term:

\begin{align}
    P_1 &= (\mathbb{E}_k)_{base} - \frac{\tau}{n}\phi(\bm{x}_{k,1})^{(j)}, \\
    P_2 &= (\mathbb{E}_k)_{base} - \frac{x_{k,1}^{(j)}}{2n^3}\tau^2 \| \phi(\bm{x}_{k,1})\|^2 +\frac{x_{k,1}^{(j)}}{n^2}\tau \mathbb{E}[\bm{g}_{k,1}^\top\phi(\bm{x}_{k,1})] ,\notag \\ &= (\mathbb{E}_k)_{base} +\frac{x_{k,1}^{(j)}}{n^2}\tau  \sum_{\alpha=1}^n \frac{\tau}{n}\Bigg(\langle\gamma_1^\prime\rangle x_{k,1}^{(\alpha)}\phi(\bm{x}_{k,1})^{(\alpha)} \notag \\ & \qquad \qquad \qquad \qquad \qquad \qquad \quad+ u_{1}^{(\alpha)}\phi(\bm{x}_{k,1})^{(\alpha)}\Big(\langle c_1\gamma_i \rangle - Q_{k,i,1}\langle \gamma_i^{\prime} \rangle\Big) \notag \\ & \qquad \qquad \qquad \qquad \qquad \qquad \quad+u_{2}^{(\alpha)}\phi(\bm{x}_{k,1})^{(\alpha)}\Big(\langle c_2\gamma_i \rangle - Q_{k,i,2}\langle \gamma_i^{\prime} \rangle\Big)\Bigg)+o\left(\frac{1}{n}\right), \notag \\ &= 
    (\mathbb{E}_k)_{base} + \frac{x_{k,1}^{(j)}}{n^2}\tau^2 \langle \gamma_1^\prime\rangle R_{k,1,1} +o\left(\frac{1}{n}\right),
 \notag \\ &= 
 (\mathbb{E}_k)_{base} +o\left(\frac{1}{n}\right), \\ 
    P_3 &= (\mathbb{E}_k)_{base} + \frac{x_{k,1}^{(j)}}{n^2} \tau \bm{x}_{k,1}^\top\phi(\bm{x}_{k,1}), \notag \\ &= (\mathbb{E}_k)_{base} +\frac{\tau}{n}x_{k,1}^{(j)} R_{k,1,1}, \\ 
    P_4 &= (\mathbb{E}_k)_{base} +o\left(\frac{1}{n}\right).
\end{align}

Finally, we arrive at the following result for the first estimate:

\begin{align}
\mathbb{E}_k[x_{k+1,1}^{(j)} - x_{k,1}^{(j)} ] &= (\mathbb{E}_k)_{base} - \frac{\tau}{n}\phi(\bm{x}_{k,1})^{(j)} + \frac{\tau}{n}x_{k,1}^{(j)}R_{k,1,1} +o\left(\frac{1}{n}\right).
\end{align}

Following Equation \eqref{beforecalculation,drift2}, we compute the regularization induced contributions for each term separately for the second estimate as follows:

\begin{align}
    \mathbb{E}_k[\tilde{\tilde{\Delta}}_{k,2}^{(j)}] 
    &= \underbrace{\mathbb{E}_k[g_{k,2}^{(j)}-\frac{\tau}{n}\phi(\bm{x_{k,2}})^{(j)}]}_{(\mathbb{E}_k)_{base} -\frac{\tau}{n}\phi(\bm{x_{k,2}})^{(j)} }
    - \underbrace{x_{k,1}^{(j)}\frac{\sum_{\alpha = 1}^n x_{k,2}^{(\alpha)}\mathbb{E}_k[g_{k,1}^{(\alpha)}-\frac{\tau}{n}\phi(\bm{x_{k,1}})^{(\alpha)}]}{n}}_{(\mathbb{E}_k)_{base}+ \frac{1}{n}x_{k,1}^{(j)}\tau R_{k,2,1} + o(1/n)} 
    \notag \\ & \qquad - \underbrace{x_{k,1}^{(j)}\frac{\sum_{\alpha = 1}^n x_{k,1}^{(\alpha)}\mathbb{E}_k[g_{k,2}^{(\alpha)}-\frac{\tau}{n}\phi(\bm{x_{k,2}})^{(\alpha)}]}{n}}_{(\mathbb{E}_k)_{base}+ \frac{1}{n}x_{k,1}^{(j)}\tau R_{k,1,2}+ o(1/n)} \notag \\
    &\qquad + \underbrace{2x_{k,1}^{(j)} \frac{\sum_{\alpha = 1}^n x_{k,2}^{(\alpha)}x_{k,1}^{(\alpha)}\mathbb{E}_k[(g_{k,1}^{(\alpha)}-\frac{\tau}{n}\phi(\bm{x_{k,1}})^{(\alpha)})^2]}{n^2}}_{(\mathbb{E}_k)_{base} +  o(1/n)} \notag \\ & \qquad 
    + \underbrace{2x_{k,1}^{(j)} \frac{\sum_{\alpha = 1}^n (x_{k,1}^{(\alpha)})^2\mathbb{E}_k[(g_{k,1}^{(\alpha)}-\frac{\tau}{n}\phi(\bm{x_{k,1}})^{(\alpha)})(g_{k,2}^{(\alpha)}-\frac{\tau}{n}\phi(\bm{x_{k,2}})^{(\alpha)})]}{n^2}}_{(\mathbb{E}_k)_{base} + o(1/n)} \notag \\
    &\qquad
    - \underbrace{\frac{x_{k,2}^{(j)}\mathbb{E}_k[(g_{k,1}^{(j)}-\frac{\tau}{n}\phi(\bm{x_{k,1}})^{(j)})^2]}{n} }_{(\mathbb{E}_k)_{base} + o(1/n)}\notag \\ & \qquad
    - \underbrace{\frac{x_{k,1}^{(j)}\mathbb{E}_k[(g_{k,2}^{(j)}-\frac{\tau}{n}\phi(\bm{x_{k,2}})^{(j)})(g_{k,1}^{(j)}-\frac{\tau}{n}\phi(\bm{x_{k,1}})^{(j)})]}{n}}_{(\mathbb{E}_k)_{base} + o(1/n)}  \notag \\ & \qquad - \underbrace{x_{k,1}^{(j)}\frac{\mathbb{E}_k[(\bm{g}_{k,2}-\frac{\tau}{n}\phi(\bm{x_{k,2}}))^\top (\bm{g}_{k,1}-\frac{\tau}{n}\phi(\bm{x_{k,1}})^{(j)})]}{n}}_{(\mathbb{E}_k)_{base} +o(1/n)},\notag \\ =& (\mathbb{E}_k)_{base} -\frac{\tau}{n}\phi(\bm{x}_{k,2})^{(j)} + \frac{1}{n}x_{k,1}^{(j)}\tau R_{k,2,1} + \frac{1}{n}x_{k,1}^{(j)}\tau R_{k,1,2}   +o\left(\frac{1}{n}\right),
\end{align}

\begin{align}
\mathbb{E}_k[(\tilde{\tilde{\Delta}}_{k,2}^{(j)})^2] &= 
\mathbb{E}_k[(g_{k,2}^{(j)})^2] - 2\frac{x_{k,1}^{(j)}(x_{k,2}^{(j)}\mathbb{E}_k[g_{k,2}^{(j)}g_{k,1}^{(j)}] + x_{k,1}^{(j)}\mathbb{E}_k[(g_{k,2}^{(j)})^2])}{n} \notag \\ 
&\qquad + x_{k,1}^{(j)2} \frac{1}{n^2} \Big(\mathbb{E}_k[(g_{k,1}^{(j)})^2]\underbrace{\sum_\alpha (x_{k,2}^{(\alpha)})^2}_{= \ n} + \mathbb{E}_k[g_{k,2}^{(j)2}] \underbrace{\sum_\alpha (x_{k,1}^{(\alpha)})^2}_{= \ n} \notag \\ & \qquad \qquad\qquad\qquad + 2 \mathbb{E}_k[g_{k,1}^{(j)}g_{k,2}^{(j)}] \underbrace{\sum_\alpha x_{k,1}^{(\alpha)}x_{k,2}^{(\alpha)}}_{ = \ 0} \Big) \ .
\end{align}

As established in the evaluation of $\mathbb{E}_k[\tilde{\tilde{\Delta}}_{k,2}^{(j)}]$, the second-order gradients exclusively contribute to the baseline dynamics $(\mathbb{E}_k)_{\text{base}}$ and higher-order terms. Consequently, the term $\mathbb{E}_k[(\tilde{\tilde{\Delta}}_{k,2}^{(j)})^2]$ does not introduce additional regularization components at order $O(1/n)$, leading to

\begin{align}
\mathbb{E}_k[(\tilde{\tilde{\Delta}}_{k,2}^{(j)})^2] &=  (\mathbb{E}_k)_{base} +o\left(\frac{1}{n}\right).   
\end{align}

Lastly, the final term is given by

\begin{align}
      x_{k,2}^{(j)} \Big(\sum_{\alpha = 1}^n x_{k,2}^{(\alpha)}\mathbb{E}_k[\tilde{\tilde{\Delta}}_{k,2}^{(\alpha)}]\Big)/n  &= \frac{x_{k,2}^{(j)}}{n} \sum_{\alpha =1}^n x_{k,2}^{(\alpha)}\Bigg((\mathbb{E}_k)_{base} -\frac{\tau}{n}\phi(\bm{x}_{k,2})^{(\alpha)} + \frac{1}{n}x_{k,1}^{(\alpha)}\tau R_{k,2,1}  \notag \\ & \qquad \qquad \qquad \qquad \quad+ \frac{1}{n}x_{k,1}^{(\alpha)}\tau R_{k,1,2}    +o\left(\frac{1}{n}\right)\Bigg), \notag \\ &= (\mathbb{E}_k)_{base} - \frac{x_{k,2}^{(j)}}{n} \sum_{\alpha =1}^n x_{k,2}^{(\alpha)}\frac{\tau}{n}\phi(\bm{x}_{k,2})^{(\alpha)} +o\left(\frac{1}{n}\right), \notag \\ &=
      (\mathbb{E}_k)_{base} - \frac{x_{k,2}^{(j)}}{n} \tau R_{k,2,2}+o\left(\frac{1}{n}\right).
\end{align}

Finally, for the second estimate's incremental update we arrive at

\begin{align}
\mathbb{E}_k[x_{k+1,2}^{(j)} - x_{k,2}^{(j)} ] &= (\mathbb{E}_k)_{base} - \frac{\tau}{n}\phi(\bm{x}_{k,2})^{(j)} + \frac{\tau}{n}x_{k,2}^{(j)}R_{k,2,2} + \frac{\tau}{n}x_{k,1}^{(j)}(R_{k,1,2}+R_{k,2,1}) \notag \\ &  \qquad +o\left(\frac{1}{n}\right).
\end{align}

\subsection{Generalization to multi-component case ($p \geq 2$)}\label{app:generalization}
To generalize our results to $p$ component vectors, we express the terms in vector-matrix notation, furthermore we define the following vectors first in $\mathbb{R}^p$:
\begin{subequations}
    \begin{align}
        \bm{c}_k &:= [c_{k,1}, \dots, c_{k,p}]^\top,  \\
        \bm{u}^{(j)}  &:= [u_1^{(j)}, \dots, u_p^{(j)}]^\top, \\
        \bm{x}_k^{(j)} &:= [x_{k,1}^{(j)}, \dots, x_{k,p}^{(j)}]^\top, \\
        \bm{q}_i &:= [Q_{k,i,1}, \dots, Q_{k,i,p}]^\top,  
    \end{align}
    \vspace{-1pt} 
    \begin{align}
        \boldsymbol{\psi}_{i} := 
        \begin{bmatrix}
            \langle c_{k,1} \gamma_i \rangle - Q_{k,i,1}\langle \gamma_i' \rangle \\[-2pt]
            \vdots \\[-2pt]
            \langle c_{k,p} \gamma_i \rangle - Q_{k,i,p}\langle \gamma_i' \rangle
        \end{bmatrix},
        \quad \text{where} \quad
        \gamma_i := f\Big(\bm{c}_k^\top \bm{q}_i + e_{i}\sqrt{1-\bm{q}_i^\top \bm{q}_i}\Big) \ .
    \end{align}
\end{subequations}

Having derived the explicit dynamics for the initial two components, we observe a consistent pattern in the interaction terms. By proceeding inductively, we generalize the formulation to an arbitrary $l-th$ component, obtaining the following expression for the drift term:

\begin{align}
\mathbb{E}_k[x_{k+1,l}^{(j)} - x_{k,l}^{(j)}]
&=
-\frac{\tau^2}{n}
\left(
\frac{1}{2} x_{k,l}^{(j)} \langle \gamma_l^2\rangle
+
\sum_{i=1}^{l-1} x_{k,i}^{(j)} \langle \gamma_i \gamma_l\rangle
\right)
+
\frac{\tau}{n}\,\bm{u}^\top \boldsymbol{\psi}_l
\notag \\ 
&\qquad
-  \frac{\tau}{n} x_{k,l}^{(j)}\mathbf{q}_l^\top \boldsymbol{\psi}_l
-\frac{\tau}{n}
\sum_{i=1}^{l-1} x_{k,i}^{(j)}
\Big(
\mathbf{q}_i^\top \boldsymbol{\psi}_l
+
\mathbf{q}_l^\top \boldsymbol{\psi}_i
\Big) \notag \\ & \qquad + \frac{\tau}{n}x_{k,l}^{(j)}R_{k,l,l} + \frac{\tau}{n} \sum_{i=1}^{l-1}x_{k,i}^{(j)}(R_{k,i,l}+R_{k,l,i}) - \frac{\tau}{n}\phi(\bm{x}_{k,l})^{(j)}\notag \\ & \qquad + o\left(\frac{1}{n}\right).
\label{drfit_appendix}
\end{align}

Furthermore, the diffusion term satisfies the following:

\begin{align}
    \mathbb{E}_k[(x_{l,k+1}^{(j)} - x_{l,k}^{(j)})(x_{m,k+1}^{(j)} - x_{m,k}^{(j)})] = \frac{\tau^2}{n} \langle \gamma_l \gamma_m \rangle +o\left(\frac{1}{n}\right).
\end{align}

For more convenient notation, we express everything in matrix form as in the main text. Define the matrices $\bm{\Psi} \coloneqq [\bm{\psi}_1, ...,\bm{\psi}_p]$, $\bm{Q} \coloneqq [\bm{q}_1, ...,\bm{q}_p]^\top$, $(M)_{i,j} = \bm{q}_i^\top\bm{\psi}_j$ and lastly $(C)_{i,j} = \langle \gamma_i \gamma_j\rangle $. Finally, we arrive at the following expectations for $\bm{\Delta}_k^{(j)} \coloneqq \bm{x}_{k+1}^{(j)}-\bm{x}_{k}^{(j)}$: 

\begin{align}\label{appendixDRIFT_final}
    \mathbb{E}[\bm{\Delta}_k^{(j)}] &= - \frac{\tau^2}{2n}\Big(
    \mathrm{tril}(\bm{C}_k + \bm{C}_k^\top ) -\mathrm{diag}(\bm{C}_k) 
    \Big)\bm{x}_k^{(j)} \notag \\ 
    &\qquad - \frac{\tau}{n} \Big(
    \mathrm{tril}(\bm{M}_k + \bm{M}_k^\top) -\mathrm{diag}(\bm{M}_k)
    \Big)\bm{x}_k^{(j)} \notag \\ 
    &\qquad + \frac{\tau}{n}
\Big(
    \mathrm{tril}(\bm{R}_k + \bm{R}_k^\top) -\mathrm{diag}(\bm{R}_k) 
    \Big)\bm{x}_k^{(j)} - \frac{\tau}{n}\phi(\bm{x}_k^{(j)})    
    \notag \\ 
    &\qquad+ \frac{\tau}{n}\,\bm{\Psi}_k^\top\,\bm{u}^{(j)} + o\left(\frac{1}{n}\right),
\end{align}

\begin{align}\label{appendixDIFF_final}
\mathbb{E}[\bm{\Delta}_k^{(j)}\bm{\Delta}_k^{(j)\top}] = \frac{\tau^2}{n}\bm{C}_k + o\left(\frac{1}{n}\right).
\end{align}

\section{Formal proof of Theorem \ref{theorem}}\label{app:proof-of-theorem}

The formal proof of Theorem \ref{theorem} utilizes the meta-theorem by \cite{wang2017scaling}. This theorem provides sufficient conditions for a sequence of measure-valued processes $\{(\mu_t^n)_{0 \leq k \leq T}\}_n$ to converge to a limiting partial differential equation (PDE) . Below, we restate these assumptions \ref{assumption1}-\ref{asssumption10}, generalizing the original scalar formulations to our multi-component vector and matrix notation. We subsequently verify that each condition is satisfied within our setting. 

\subsection{Assumption 1}\label{assumption1}

\textit{The Markov Chain $\{(\bm{X}_k, \bm{U}_k)\}_{k \geq 0}$ is exchangeable.}

In our multi-component online ICA setting, the state $(\bm{X}_k, \bm{U}_k)$ forms a Markov chain, where the iterates are $\bm{X}_k = [\bm{x}_{k,1}, \dots, \bm{x}_{k,p}]^\top \in \mathbb{R}^{p \times n}$ and the fixed true components are $\bm{U}_k = [\bm{u}_1, \dots, \bm{u}_p] \in \mathbb{R}^{n \times p}$. We formally include the trivial update $\bm{U}_{k+1} = \bm{U}_k$ for completeness, even though the true components are fixed. We show that the complete update step; comprising the online update, orthogonalization, and normalization preserves joint exchangeability over the $n$ spatial coordinates (the columns of $\bm{X}_k$ and the rows of $\bm{U}_k$).

The complete algorithm at each iteration $k$ operates as a sequence of distinct, deterministic transformations: the online update, Gram-Schmidt orthogonalization, and normalization. Because each procedure takes the output of the preceding step as its direct input, the overall update can be formally expressed as a composite mapping.

Without loss of generality, we can assume that the initial state $(\bm{X}_0, \bm{U}_0)$ forms an exchangeable Markov chain (see Remark 3 of \cite{wang2017scaling}). To establish exchangeability, we must demonstrate that the online update rule is equivariant to permutations of the $n$ spatial coordinates. Let $\bm{\pi} \in \mathbb{R}^{n \times n}$ be an arbitrary permutation matrix, $\bm{\pi}\bm{\pi}^\top = \bm{I}$. Thus, permuted matrices satisfy

\begin{align}
    \bm{U}^{\pi} &= [\bm{\pi}\bm{u}_1, \dots , \bm{\pi}\bm{u}_p] = \bm{\pi} \bm{U}, \qquad \
    \bm{X}_k^{\pi}=[\bm{\pi}\bm{x}_{k,1}, \dots , \bm{\pi}\bm{x}_{k,p}]^\top =\bm{X}_{k}\bm{\pi}^\top. \end{align}

We first demonstrate that the online update preserves exchangeability by showing equivariance to spatial permutations:

\begin{align}
\tilde{\bm{X}}_{k} (\bm{\pi}^\top)^\top  &=\tilde{\bm{X}}_{k} \bm{\pi}, \notag \\  &= \Bigg( \bm{X}_k\bm{\pi^\top}
    +
    \frac{\tau}{\sqrt{n}}\, f\Big(\frac{1}{\sqrt{n}}\bm{X}_k\bm{\pi}^\top(\frac{1}{\sqrt{n}}\bm{\pi U}\bm{c}_k+\bm{\pi}\bm{a_k})\Big)\, (\frac{1}{\sqrt{n}}\bm{\pi U}\bm{c}_k+\bm{\pi}\bm{a_k})^{\top} \notag \\ & \qquad \qquad -\frac{\tau}{n}\phi(\bm{X}_k\bm{\pi}^\top)  \Bigg)\bm{\pi}, \notag \\
    &=\bm{X}_k\bm{\pi^\top}\bm{\pi}
    +
    \frac{\tau}{\sqrt{n}}\ f\Big(\frac{1}{n}\bm{X}_k\bm{\pi}^\top\bm{\pi U}\bm{c}_k+\frac{1}{\sqrt{n}}\bm{X}_k\bm{\pi}^\top \bm{\pi}\bm{a_k}\Big)\, (\frac{1}{\sqrt{n}}\bm{c_k}^\top\bm{U}^\top \bm{\pi}^\top+\bm{a_k}^{\top}\bm{\pi}^\top) \bm{\pi}  \notag \\ & \qquad \qquad -\frac{\tau}{n}\phi(\bm{X}_k\bm{\pi}^\top)\bm{\pi}, \notag \\
    &= \bm{X}_k \bm{I}+ \frac{\tau}{\sqrt{n}}\ f\Big(\frac{1}{n}\bm{X}_k\bm{IU}\bm{c}_k+\frac{1}{\sqrt{n}}\bm{X}_k\bm{I}\bm{a_k}\Big)\, (\frac{1}{\sqrt{n}}\bm{c_k}^\top\bm{U}^\top +\bm{a_k}^{\top}\bm{\pi}^\top\bm{\pi})\bm{\pi}^\top \bm{\pi} \notag \\ & \qquad \qquad  -\frac{\tau}{n}\phi(\bm{X}_k) \bm{\pi}^\top\bm{\pi}, \notag \\
    &=\bm{X}_k+ \frac{\tau}{\sqrt{n}}\ f\Big(\frac{1}{n}\bm{X}_k\bm{U}\bm{c}_k+\frac{1}{\sqrt{n}}\bm{X}_k\bm{a_k}\Big)\, (\frac{1}{\sqrt{n}}\bm{U} \bm{c_k}+\bm{a_k})^\top -\frac{\tau}{n}\phi(\bm{X}_k) \bm{I}, \notag \\
    &=\bm{X}_k+ \frac{\tau}{\sqrt{n}}\ f\Big(\frac{1}{\sqrt{n}}\bm{X}_k\bm{y_k}\Big)\,\bm{y_k}^\top -\frac{\tau}{n}\phi(\bm{X}_k) .
\end{align}

Above, we exploit the fact that the regularization $\phi(\cdot)$ acts entrywise on the matrix. Consequently, it is equivariant with respect to spatial permutations applied to the columns: $\phi(\bm{X}_k \bm{\pi}^\top) = \phi(\bm{X}_k) \bm{\pi}^\top$. Furthermore, for the Gram-Schmidt process we have

\begin{align}
    \tilde{\tilde{\bm{X}}}_{k} &= [\tilde{\tilde{\bm{x}}}_{k,1},\dots, \tilde{\tilde{\bm{x}}}_{k,p}]^\top ,\notag \\
    &= [\tilde{\bm{x}}_{k,1},\dots , \tilde{\tilde{\bm{x}}}_{k,i} = \tilde{\bm{x}}_{k,i} - \sum_{j=1}^{i-1} \frac{\tilde{\bm{x}}_{k,i}^{\top}\tilde{\bm{x}}_{k,j}}{\|\tilde{\bm{x}}_{k,j}\|^2}\,\tilde{\bm{x}}_{k,j}, \dots,\tilde{\bm{x}}_{k,p} - \sum_{j=1}^{p-1} \frac{\tilde{\bm{x}}_{k,p}^{\top}\tilde{\bm{x}}_{k,j}}{\|\tilde{\bm{x}}_{k,j}\|^2}\,\tilde{\bm{x}}_{k,j} ],
\end{align}
To demonstrate exchangeability, let us consider the $i-th$ row vector, yielding
\begin{align}
\Longrightarrow\bm{\pi}^\top\tilde{\tilde{\bm{x}}}_{k,i} &=\bm{\pi}^\top \Bigg(\bm{\pi}\tilde{\bm{x}}_{k,i} - \sum_{j=1}^{i-1} \frac{(\bm{\pi}\tilde{\bm{x}}_{k,i})^{\top}(\bm{\pi}\tilde{\bm{x}}_{k,j})}{\|\bm{\pi}\tilde{\bm{x}}_{k,j}\|^2}\,(\bm{\pi}\tilde{\bm{x}}_{k,j}) \Bigg) \notag \\
    &= \bm{\pi}^\top \bm{\pi}\tilde{\bm{x}}_{k,i} - \sum_{j=1}^{i-1} \bm{\pi}^\top\frac{(\bm{\pi}\tilde{\bm{x}}_{k,i})^{\top}(\bm{\pi}\tilde{\bm{x}}_{k,j})}{\|\bm{\pi}\tilde{\bm{x}}_{k,j}\|^2}\,(\bm{\pi}\tilde{\bm{x}}_{k,j}) \notag \\
    &= \bm{I}\tilde{\bm{x}}_{k,i} - \sum_{j=1}^{i-1} \frac{(\tilde{\bm{x}}_{k,i}^\top\bm{\pi}^\top\bm{\pi}\tilde{\bm{x}}_{k,j})}{\tilde{\bm{x}}_{k,j}^\top\bm{\pi}^\top \bm{\pi}\tilde{\bm{x}}_{k,j}} \bm{\pi}^\top\bm{\pi}\tilde{\bm{x}}_{k,j}  \notag \\
    &= \tilde{\bm{x}}_{k,i} - \sum_{j=1}^{i-1} \frac{(\tilde{\bm{x}}_{k,i}^\top\tilde{\bm{x}}_{k,j})}{\tilde{\bm{x}}_{k,j}^\top\tilde{\bm{x}}_{k,j}} \tilde{\bm{x}}_{k,j}.
\end{align}

Thus, the Gram-Schmidt process used to obtain $\tilde{\tilde{\bm{X}}}_{k}$ from $\tilde{\bm{X}}_{k}$ also preserves exchangeability. Finally, to show that the normalization step is exchangeable we write 
\begin{align}
    \bm{\pi}^\top\bm{x}_{k+1,i} &= \bm{\pi}^\top \Big(\frac{\bm{\pi}\tilde{\tilde{\bm{x}}}_{k,i}}{\|\bm{\pi}\tilde{\tilde{\bm{x}}}_{k,i} \|} \sqrt{n} \Big), \notag \\
    &=\frac{\bm{\pi}^\top \bm{\pi}\tilde{\tilde{\bm{x}}}_{k,i}}{\sqrt{\tilde{\tilde{\bm{x}}}_{k,i}^\top\bm{\pi}^\top\bm{\pi}\tilde{\tilde{\bm{x}}}_{k,i} }} \sqrt{n} , \notag \\
    &=\frac{\tilde{\tilde{\bm{x}}}_{k,i}}{\|\tilde{\tilde{\bm{x}}}_{k,i} \|} \sqrt{n}.
\end{align}
Because the full update is a composition of these sequentially applied steps, demonstrating that the online update, orthogonalization and normalization are individually exchangeable is sufficient to conclude that the overall process forms an exchangeable Markov chain.

\subsection{Assumption 2}\label{assumption2}

\textit{The initial empirical measure $\mu_0^n(\bm{x}, \bm{u})$ converges weakly to a deterministic measure $\mu_0$.}

Our Theorem \ref{theorem} is explicitly conditioned on this assumption.

\subsection{Assumption 3}\label{assumption3}

\textit{There is some finite constant $C$ such that}

\begin{align*}
    \sup_n \Big\langle \mu_0^n, \sum_{i=1}^p x_i^4 + \sum_{j=1}^p u_j^4 \Big\rangle \leq C.
\end{align*}

Our theorem assumes that all moments of the \emph{initial} ground truth and estimate vectors are bounded; consequently, Assumption 3 is satisfied directly.

\subsection{Assumption 4} \label{assumption4}

\textit{Let $\bm{\Delta}_k^{(j)} = \bm{x}_{k+1}^{(j)} - \bm{x}_k^{(j)}$. There exists a deterministic function $\bm{\mathcal{G}}: \mathbb{R}^p \times \mathbb{R}^p \times \mathbb{R}^{r_1 \times r_2} \xrightarrow{} \mathbb{R}^p$  for some $r_1,r_2 \geq 0$, such that for each  $T > 0$}:

\begin{align}
    \max_{k \le nT} \mathbb{E}\Bigg[\Big\|\mathbb{E}_k[\bm{\Delta}_k^{(j)}] - \frac{1}{n} \bm{\mathcal{G}}_k^{(j)}\Big\| \Bigg] \le \frac{C(T)}{n^{1+\gamma}}\ ,
\end{align}

\textit{where $\gamma >0$ is some positive constant. In the above expression,}

\begin{align}
    \bm{\mathcal{G}}_k^{(j)} =  \bm{\mathcal{G}}(\bm{x}_k^{(j)},\bm{u}^{(j)}, \bm{\Theta}_k^n),
\end{align}

\textit{and $\bm{\Theta}_k^n $ is a matrix in $ \mathbb{R}^{r_1 \times r_2}$. The $l,m$-th entry of $\bm{\Theta}_k^n $ is defined as}

\begin{align}
    (\bm{\Theta}_k^n)_{l,m} = \langle \mu_k^n, \varphi_{l,m}(\bm{x}, \bm{u})\rangle ,
\end{align}

\textit{where $\varphi_{l,m}(\bm{x}, \bm{u})$ is some deterministic function.}

In our setting, we use the overlap matrix $\bm{Q}_t$ and matrix $\bm{R}_t$ as our macroscopic variables. We define $\varphi(\bm{x}, \bm{u}) = x^{(l)} u^{(m)}$ to obtain  $(Q_k^n)_{l,m} = \langle \mu_k^n, x^{(l)} u^{(m)} \rangle$ and $\varphi(\bm{x}, \bm{u}) = x^{(l)} \phi(x)^{(m)}$ to obtain $(R_k^n)_{l,m} = \langle \mu_k^n, x^{(l)} \phi(x)^{(m)}\rangle$. Thus, we define $\bm{\Theta}$ as follows: 
\begin{align}
    \bm{\Theta}_k^n \coloneqq \begin{bmatrix}
        \bm{Q}_k^n | \bm{R}_k^n
    \end{bmatrix}.
\end{align}
We refer to the derivation of the expected incremental update (drift) in Appendix \ref{app:driftanddiffusion}, restating Equation \eqref{appendixDRIFT_final} for convenience:
\begin{align}
\mathbb{E}_k[\bm{\Delta}_k^{(j)}] &= - \frac{\tau^2}{2n}\Big(
    \mathrm{tril}(\bm{C}_k + \bm{C}_k^\top ) -\mathrm{diag}(\bm{C}_k) 
    \Big)\bm{x}_k^{(j)} \notag \\ 
    &\qquad - \frac{\tau}{n} \Big(
    \mathrm{tril}(\bm{M}_k + \bm{M}_k^\top) -\mathrm{diag}(\bm{M}_k)
    \Big)\bm{x}_k^{(j)} \notag \\ 
    &\qquad + \frac{\tau}{n}
\Big(
    \mathrm{tril}(\bm{R}_k + \bm{R}_k^\top) -\mathrm{diag}(\bm{R}_k) 
    \Big)\bm{x}_k^{(j)} - \frac{\tau}{n}\phi(\bm{x}_k^{(j)})    
    \notag \\ 
    &\qquad+ \frac{\tau}{n}\,\bm{\Psi}_k^\top\,\bm{u}^{(j)} + o\left(\frac{1}{n}\right) .
\end{align}

We define the coefficient $\bm{\mathcal{G}}$ as
\begin{align}\label{equationG}
    \bm{\mathcal{G}}^{(j)} &\coloneqq - \frac{\tau^2}{2}\Big(
    \mathrm{tril}(\bm{C}_k + \bm{C}_k^\top ) -\mathrm{diag}(\bm{C}_k) 
    \Big)\bm{x}_k^{(j)} \notag \\ 
    &\qquad - \tau \Big(
    \mathrm{tril}(\bm{M}_k + \bm{M}_k^\top) -\mathrm{diag}(\bm{M}_k)
    \Big)\bm{x}_k^{(j)} \notag \\ 
    &\qquad +\tau
\Big(
    \mathrm{tril}(\bm{R}_k + \bm{R}_k^\top) -\mathrm{diag}(\bm{R}_k) 
    \Big)\bm{x}_k^{(j)} - \tau\phi(\bm{x}_k^{(j)})    
    \notag \\ 
    &\qquad+ \tau\,\bm{\Psi}_k^\top\,\bm{u}^{(j)}.
\end{align}

Consequently, Assumption 4 is satisfied.

\subsection{Assumption 5} \label{assumption5}

\textit{There exists some deterministic function $\bm{\Lambda} : \mathbb{R}^{r_1 \times r_2} \xrightarrow{} \mathbb{R}^{p\times p}$ such that, for each $T > 0$,}

\begin{align}
\max_{k \leq n T} \mathbb{E}\left[\left\|\mathbb{E}_k\left[\bm{\Delta}_k^{(j)} \bm{\Delta}_k^{(j)\top}\right]-\frac{1}{n}  \bm{\Lambda}_k \right\|_F\right] \leq \frac{C(T)}{n^{1+\gamma}},
\end{align}

\textit{where $\gamma >0$ is some positive constant, and}

\begin{align}
    \bm{\Lambda}_k = \bm{\Lambda}(\bm{\Theta}_k^n).
\end{align}

We refer to the derivation of the second moment of the incremental update (diffusion) in Appendix~\ref{app:driftanddiffusion}, restating Equation \eqref{appendixDIFF_final} for convenience:
\begin{align}
\mathbb{E}_k\left[\bm{\Delta}_k^{(j)} \bm{\Delta}_k^{(j)\top}\right] = \frac{1}{n} \tau^2  (\mathbf{C}_k) + o\Big(\frac{1}{n}\Big).
\end{align}
Then we define the coefficient as follows:
\begin{align}
\bm{\Lambda}(\bm{Q}_k^n) \coloneqq  \tau^2 \mathbf{C}_k(\bm{Q}_k^n).
\end{align}

The diffusion term is independent of the matrix $\bm{R}$; therefore, we define it solely in terms of the $\bm{Q}$ matrix. Consequently, Assumption 5 is satisfied.

\subsection{Assumption 6}\label{asumption6}

\textit{For any $T > 0$, there exists a finite constant $B(T)$ such that}

\begin{align}
    \lim_{n \to \infty} \mathbb{P} \left( \max_{k \leq nT} \| \bm{\Theta}_k^n \|_{\text{max}} > B(T) \right) = 0,
\end{align}

\textit{where $\| \cdot\|_{\text{max}}$ is the max-norm of the matrix.}

$\bm{Q}_k^n$'s every entry $(Q_k^n)_{j,l}$ is bounded by definition:  $(Q_k^n)_{j,l} \in [-1,1]$; $\bm{R}_k^n$'s every entry is bounded by the assumption on $\phi(x)$ being Lipschitz: $|(R_{k}^n)_{i,j}| \leq \frac{1}{\sqrt{n}}(L \|\bm{x}_{k,j}\| + \sqrt{n}|\phi(0)|) = L + |\phi(0)|$. Consequently, Assumption 6 holds.

\subsection{Assumption 7}\label{assumption7}

\textit{Define $(\Theta_k^n)_{\ell,m}(h) = \langle \mu_k^n, \varphi_{\ell,m}(\bm{x}, \bm{u}) \sqcap h \rangle$, where $\sqcap$ is the projection operation defined as the projection of $x$ onto the interval $[-b,b]$ : $x \sqcap b = \min \{|x|, b\} \operatorname{sgn}(x)$. For any $b > B(T)$ and $T > 0$, we have}

\begin{equation}
    \limsup_{h \to \infty} \sup_n \max_{k \leq nT} \mathbf{E} \left\| \bm{\mathcal{G}}(\bm{x}_k^{(j)}, \bm{u}^{(j)}, \bm{\Theta}_k^n) - \bm{\mathcal{G}}(\bm{x}_k^{(j)}, \bm{u}^{(j)}, \bm{\Theta}_k^n(h) \sqcap b) \right\| = 0,
\end{equation}
\textit{and}
\begin{equation}
    \limsup_{h \to \infty} \sup_n \max_{k \leq nT} \mathbf{E} \left\| \bm{\Lambda}(\bm{Q}_k^n) - \bm{\Lambda}(\bm{Q}_k^n(h) \sqcap b) \right\|_F = 0.
\end{equation}

Assumption 7 holds by the same reasoning applied to Assumption \ref{asumption6}.

\subsection{Assumption 8} \label{assumption8}
\textit{For each $T>0$, there exists $C(T)< \infty$ such that}

\begin{align}
    \max_{k \leq nT} \mathbb{E}[ \|\bm{\mathcal{G}}_k^{(j)}\|^2] \leq C(T) \quad \text{and} \quad \max_{k \leq nT} \mathbb{E} [\|\bm{\Lambda}_k\|_F^2] \leq C(T).
\end{align}

We first establish several supplementary lemmas needed to prove these bounds.
\begin{lemma}\label{lemmaboundonupsilon}
For $q > 0$, the vector $\bm{\upsilon}_t$ satisfies the following bound on its $q$-th absolute moment, where $H_q$ and $L_q$ are positive constants depending on $q$:

 \begin{align}
\mathbb{E}\left[|\upsilon_i|^q\right] \leq H_q(1+L_q) .
 \end{align}
\end{lemma}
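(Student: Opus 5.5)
The bound follows directly from the representation of $\upsilon_i$ in Theorem~\ref{theorem}:
\[
\upsilon_i=\bm{q}_i^\top\bm{c}_t+\sqrt{1-\bm{q}_i^\top\bm{q}_i}\,e_i ,
\]
with $\bm{q}_i$ the $i$-th row of $\bm{Q}_t$ and $e_i\sim\mathcal N(0,1)$ independent of $\bm{c}_t$. I will split $\upsilon_i$ into its signal part $\bm{q}_i^\top\bm{c}_t=\sum_{l=1}^p Q_{t,i,l}c_l$ and its Gaussian part. I will then bound the $q$-th absolute moment of each part separately and recombine them with the elementary inequality
\[
|a+b|^q\le 2^{\max(q-1,0)}\bigl(|a|^q+|b|^q\bigr),
\]
which is valid for all $q>0$: it follows from convexity when $q\ge1$ and from subadditivity of $x\mapsto x^q$ when $q<1$.

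\textbf{Gaussian part.} The key deterministic input is that $\bm{Q}_t$ has entries in $[-1,1]$. More precisely, since the rows $\bm{x}_{k,i}$ have norm $\sqrt n$ and the $\bm{u}_l/\sqrt n$ are orthonormal, Bessel's inequality gives $\bm{q}_i^\top\bm{q}_i\le 1$. This is exactly what makes the square root well defined, and it passes to the limit $\bm{Q}_t$. Hence $0\le 1-\bm{q}_i^\top\bm{q}_i\le1$, and
\[
\mathbb{E}\bigl[|\sqrt{1-\bm{q}_i^\top\bm{q}_i}\,e_i|^q\bigr]\le \mathbb{E}|e_i|^q=\frac{2^{q/2}\Gamma\!\left(\tfrac{q+1}{2}\right)}{\sqrt\pi},
\]
which is a constant depending only on $q$.

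\textbf{Signal part.} Here I use $|Q_{t,i,l}|\le1$ together with the same power inequality applied iteratively over the $p$ summands (or Jensen/Hölder on the finite sum), which gives
\[
\mathbb{E}\Bigl|\sum_l Q_{t,i,l}c_l\Bigr|^q\le p^{\max(q-1,0)}\sum_{l=1}^p\mathbb{E}|c_l|^q .
\]
This is finite by the assumption of Theorem~\ref{theorem} that all moments of $\bm{c}_k$ are bounded, uniformly in $k$ and hence for the mean-field $\bm{c}_t$. For non-integer or small $q$, I will pass through $\mathbb{E}|c_l|^q\le(\mathbb{E}|c_l|^{\lceil q\rceil})^{q/\lceil q\rceil}$ by Lyapunov's inequality. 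Denoting the resulting constant by $L_q$ and absorbing $2^{\max(q-1,0)}$ together with the Gaussian moment into $H_q$ (for instance $H_q=2^{\max(q-1,0)}\max\{1,\mathbb{E}|e|^q\}$) gives
\[
\mathbb{E}|\upsilon_i|^q\le H_q(1+L_q).
\]

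\textbf{Main obstacle.} The only nontrivial point is justifying that $\bm{q}_i^\top\bm{q}_i\le1$ holds uniformly along the trajectory and in the limit. This is needed so that the constants do not depend on $t$, $n$, or $k$. I would settle it by the Bessel argument above at finite $n$, using the norm constraint that the normalization step enforces at every iteration, and then note that the inequality is closed under the weak limit defining $\bm{Q}_t$. With that in hand, the bound is uniform in $t\le T$, which is what Assumption~8 requires downstream.
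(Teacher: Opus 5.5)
Your proof is correct, but it takes a different route from the paper. The paper first computes $\mathbb{E}[\upsilon_i]=0$ and $\mathbb{E}[\upsilon_i^2]=1$. It then views $\upsilon_i$ as a sum of the $p+1$ independent centered terms $X_1=c_1Q_{i,1},\dots,X_p=c_pQ_{i,p}$ and $X_{p+1}=e_i\sqrt{1-\sum_l Q_{i,l}^2}$, and applies Rosenthal's inequality. This gives $H_q\max\{\sum_j\mathbb{E}|X_j|^q,1\}\le H_q(1+L_q)$, where $H_q$ is the Rosenthal constant and $L_q$ is the sum of the $q$-th absolute moments of the $c_l$ and of $e$.

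You instead use the pointwise bound $|a+b|^q\le 2^{\max(q-1,0)}(|a|^q+|b|^q)$ and its $p$-term analogue. This needs neither independence nor centering, and it holds for every $q>0$. The cost is that you lose Rosenthal's sharper use of the unit variance and pay a factor $p^{\max(q-1,0)}$. That is immaterial here because $p$ is fixed.

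Your route actually fits the statement better. The version of Rosenthal's inequality quoted in the paper requires $q>2$, so the range $0<q\le 2$ is left implicit there; it follows from Lyapunov and $\mathbb{E}[\upsilon_i^2]=1$. Your argument covers that range directly.

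You also justify $\bm{q}_i^\top\bm{q}_i\le 1$ via Bessel's inequality, which the paper only asserts. For the downstream use in Assumption 8, where $\bm{C}_k$ and $\bm{\Psi}_k$ are evaluated at the finite-$n$ overlaps $\bm{Q}_k^n$, your finite-$n$ Bessel step is all that is needed. If you also want the bound for the limiting $\bm{Q}_t$, apply Bessel directly in $L^2(\mu_t)$. Use $\int x_i^2\,d\mu_t\le 1$ (by lower semicontinuity under weak convergence) and $\int u_lu_m\,d\mu_t=\delta_{lm}$ (by the bounded moments of $\bm{u}$). This is cleaner than appealing to closedness under weak limits, since $x_iu_l$ is unbounded and weak convergence alone does not carry its integral to the limit.
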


\begin{proof}
We begin by recalling the growth assumption on the nonlinearity: $|f(x)| \le B(1+|x|^q)$ for constants $B \geq 0$ and $q \geq 0$. We first establish that $\upsilon_i$ is zero-mean. By the the zero-mean properties of the components $c_j$ and $e_i$, we have

\begin{align}
    \mathbb{E}_{\bm{c},\bm{e}}[\upsilon_i] =  \mathbb{E}_{\bm{c},\bm{e}}[c_1Q_{i,1}+...+c_pQ_{i,p} + e_i\sqrt{1-Q_{i,1}^2...-Q_{i,p}^2}] = 0.
\end{align}

Next, we compute the second moment. Conditioning on $c_i$ and applying the tower property yields

\begin{align}
    \mathbb{E}_{\bm{c},\bm{e}}
[\upsilon_i^2] &=  \mathbb{E}_{\bm{c},\bm{e}}[(c_1Q_{i,1}+...+c_pQ_{i,p} + e_i\sqrt{1-Q_{i,1}^2...-Q_{i,p}^2})^2] ,\notag \\ 
    &= \mathbb{E}_{\bm{c}}[\mathbb{E}_{\bm{e}}[(c_1Q_{i,1}+...+c_pQ_{i,p} + e_i\sqrt{1-Q_{i,1}^2...-Q_{i,p}^2})^2 | c_1,c_2 ...,c_p]], \notag\\
    &= \mathbb{E}_c[(c_1Q_{i,1} + ...+c_pQ_{i,p})^2 + 1 - Q_{i,1}^2...-Q_{i,p}^2], \notag\\
    &= 1.
\end{align}

To bound higher-order moments, we appeal to a Rosenthal-type inequality \cite{Rosenthal_1970} in the form given by \cite{chen2020rosenthal}, which states the following: for a sequence of independent random variables $X_1,\ldots,X_n$ with zero mean satisfying $\mathbb{E}[|X_j|^q] < \infty$ for some $q > 2$, there exists a positive constant $H_q$ depending only on $q$ such that

\begin{align} \label{inequaltyiused}
    \mathbb{E}\left[\left|\sum_{j=1}^{n}X_j\right|^q\right] \leq H_q \text{max}\left\{\sum_{j=1}^n\mathbb{E}[|X_j|^q], \Bigg( \sum_{j=1}^n\mathbb{E}[X^2]\Bigg)^{q/2}\right\}.
\end{align}

We apply this inequality by decomposing $\upsilon_i$ into a sum of $n = p+1$ independent, zero-mean terms: $X_1 = c_1 Q_{i,1},\;\ldots,\;X_p = c_p Q_{i,p}$, and $X_{p+1} = e_i\sqrt{1 - Q_{i,1}^2 - \cdots}$. Each summand satisfies $\mathbb{E}_{\bm{c},\bm{e}}[X_j] = 0$, and finiteness of all $q$-th moments follows from the assumption that the moments of $c_i$ are bounded and $e_i$ is Gaussian. We observe that

\begin{align}
    \sum_{j=1}^{p+1}X_j = \upsilon_i, \qquad 
    \Bigg(\sum_{j=1}^{p+1}\mathbb{E}_{\bm{c},\bm{e}}[X^2]\Bigg)^{q/2}= 1.
\end{align}

Thus, we arrive at the following:
\begin{align}
\mathbb{E}_{\bm{c},\bm{e}}\left[\left|\upsilon\right|_i^q\right] \leq H_q \text{max}\left\{\sum_{j=1}^{p+1}\mathbb{E}_{\bm{c},\bm{e}}[|X_j|^q], 1\right\}.
\end{align}

It remains to control the sum of individual $q$-th moments. Expanding, we obtain

\begin{align}
\sum_{j=1}^{p+1}\mathbb{E}_{\bm{c},\bm{e}}[|X_j|^q] &= \mathbb{E}_{\bm{c},\bm{e}}[|X_1|^q]+...+\mathbb{E}_{\bm{c},\bm{e}}[|X_{p+1}|^q], \notag \\
&= \mathbb{E}_{\bm{c},\bm{e}}[|c_1Q_{i,1}|^q] + ... +\mathbb{E}_{\bm{c},\bm{e}}[\Big|e_i \sqrt{1-Q_{i,1}^2\dots - Q_{i,p}^2}\Big|^q], \notag \\
&\leq\mathbb{E}_{\bm{c},\bm{e}}[|c_1|^q|Q_{i,q}|^q]+...+\mathbb{E}_{\bm{c},\bm{e}}[|e_i|^q\Big|\sqrt{1-Q_{i,1}^2\dots - Q_{i,p}^2}\Big|^q]  ,\notag \\
&=|Q_{i,1}|^q \mathbb{E}_{\bm{c},\bm{e}}[|c_1|^q] + ... +\Big|\sqrt{1-Q_{i,1}^2\dots - Q_{i,p}^2}\Big|^q\mathbb{E}_{\bm{c},\bm{e}}[|e_i|^q] .
\end{align}

Since the overlaps satisfy $|Q_{i,j}| \leq 1$ and $\Big|\sqrt{1 - Q_{i,1}^2  \dots -Q_{i,p}^2}\Big| = \Big|\sqrt{1 - \sum_{\alpha=1}^pQ_{i,\alpha}^2}\Big| \leq 1$, each factor involving $Q$ can be bounded by unity, giving

\begin{align}
\sum_{j=1}^{p+1}\mathbb{E}_{\bm{c},\bm{e}}[|X_j|^q]
&=|Q_{i,1}|^q \underbrace{\mathbb{E}_{\bm{c},\bm{e}}[|c_1|^q]}_{ = Z_{1,q}} + ... +\left|\sqrt{1 - \sum_{\alpha=1}^pQ_{i,\alpha}^2}\right|^q\underbrace{\mathbb{E}_{\bm{c},\bm{e}}[|e|^q]}_{= Z_{e,q}}, \notag \\ & \leq Z_{1,q}+Z_{2,q}+...+Z_{e,q}.
\end{align}

Substituting this back into the Rosenthal bound yields

\begin{align}
\mathbb{E}_{\bm{c},\bm{e}}\left[\left|\upsilon\right|_i^q\right] \leq  H_q \text{max}\left\{Z_{1,q}+Z_{2,q}+\cdots+Z_{e,q}, 1\right\} .
\end{align}

To obtain a cleaner expression, we define $L_q \coloneqq Z_{1,q}+Z_{2,q}+\dots+Z_{e,q}$ and we relax the maximum using the inequality $\max\{a, 1\} \leq 1 + a$ for $a \geq 0$:

\begin{align}
H_q \text{max}\left\{L_q, 1\right\} \leq H_q(1+L_q),
\end{align}

which establishes the claimed bound and finalizes the proof:

\begin{align}
\mathbb{E}_{\bm{c},\bm{e}} \left[\left|\upsilon_i\right|^q\right]  = \mathbb{E} \left[\left|\upsilon_i\right|^q\right] \leq  H_q(1+L_q).
\end{align}

\end{proof}

\begin{lemma}\label{lemmaCmatrix}
For $q > 0$, the entries of the matrix $\bm{C}_t$ satisfy the following elementwise bound, where $B$ is the growth constant of the nonlinearity $f$ and $R_q$ and $R_{2q}$ are positive constants depending on $q$:

    \begin{align}
        |C_{i,j}| \leq B^2(1+2R_q+R_{2q}).
    \end{align}
\end{lemma}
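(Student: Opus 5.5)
The bound on $C_{i,j}$ will follow from the polynomial growth of $f$ together with the moment control on $\bm{\upsilon}_t$ in Lemma~\ref{lemmaboundonupsilon}. Recall $C_{i,j} = \mathbb{E}_{\bm{c},\bm{e}}[f(\upsilon_i)f(\upsilon_j)]$.

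The first step is a Cauchy--Schwarz reduction to the diagonal:
\begin{align}
|C_{i,j}| \le \mathbb{E}[|f(\upsilon_i)||f(\upsilon_j)|] \le \big(\mathbb{E}[f(\upsilon_i)^2]\big)^{1/2}\big(\mathbb{E}[f(\upsilon_j)^2]\big)^{1/2}.
\end{align}
It therefore suffices to find a single constant $K$ with $\mathbb{E}[f(\upsilon_i)^2]\le K$ for every $i$, uniformly in the overlap matrix.

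The second step uses the growth assumption $|f(x)|\le B(1+|x|^q)$. Squaring gives
\begin{align}
f(\upsilon_i)^2 \le B^2\big(1+2|\upsilon_i|^q+|\upsilon_i|^{2q}\big).
\end{align}
Taking expectations yields
\begin{align}
\mathbb{E}[f(\upsilon_i)^2] \le B^2\big(1+2\,\mathbb{E}|\upsilon_i|^q+\mathbb{E}|\upsilon_i|^{2q}\big).
\end{align}

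The third step invokes Lemma~\ref{lemmaboundonupsilon} at exponents $q$ and $2q$. Set $R_q := H_q(1+L_q)$ and $R_{2q} := H_{2q}(1+L_{2q})$. Then $\mathbb{E}[f(\upsilon_i)^2]\le B^2(1+2R_q+R_{2q})$ for every $i$, and the Cauchy--Schwarz step gives exactly the claimed bound on $|C_{i,j}|$.

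The only point needing care is that Lemma~\ref{lemmaboundonupsilon} was stated via a Rosenthal inequality that requires exponent greater than $2$. For $q\le 2$ (or $2q\le 2$), I would instead use Lyapunov/Jensen, $\mathbb{E}|\upsilon_i|^q \le (\mathbb{E}\upsilon_i^2)^{q/2}=1$, which is also covered by a constant of the form $R_q$. This makes $R_q$ well defined for all $q>0$.

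Uniformity in $\bm{Q}_t$ is already built into the lemma, since $L_q$ depends only on the moments of $c_j$ and $e$. Consequently the bound holds for all $t$, as required for Assumption~8.
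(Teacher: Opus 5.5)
Your proof is correct and uses the same ingredients as the paper's proof: the growth bound $|f(x)|\le B(1+|x|^q)$, one application of Cauchy--Schwarz, and Lemma~\ref{lemmaboundonupsilon} at orders $q$ and $2q$. The only difference is ordering: you apply Cauchy--Schwarz to $f(\upsilon_i)f(\upsilon_j)$ before expanding $(1+|\upsilon_i|^q)^2$, whereas the paper first expands $(1+|\upsilon_i|^q)(1+|\upsilon_j|^q)$ and applies Cauchy--Schwarz only to the cross moment $\mathbb{E}[|\upsilon_i|^q|\upsilon_j|^q]$, and both routes reach the same constant.

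Your remark that for exponents at most $2$ one should use Lyapunov's inequality $\mathbb{E}|\upsilon_i|^q\le(\mathbb{E}\upsilon_i^2)^{q/2}=1$ instead of Rosenthal's inequality (which needs exponent above $2$) is a sensible patch to a point the paper glosses over in Lemma~\ref{lemmaboundonupsilon}.
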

\begin{proof}
We bound the absolute value of each entry by applying Jensen's inequality followed by the growth condition $|f(x)| \leq B(1+|x|^q)$:
\begin{align}
|C_{i,j}| &=|\mathbb{E}_{c,e}[f(\upsilon_i)f(\upsilon_j)]|, \notag\\ &\leq \mathbb{E}_{c,e}[|f(\upsilon_i)f(\upsilon_j)|] ,\notag\\
&\leq\mathbb{E}_{c,e}[|f(\upsilon_i)||f(\upsilon_j)|], \notag\\
&\leq \mathbb{E}_{c,e}[B^2(1+|\upsilon_i|^q)(1+|\upsilon_j|^q)], \notag\\
&=B^2(1+\mathbb{E}_{c,e}|v_i|^q+\mathbb{E}_{c,e}|v_j|^q + \mathbb{E}[|v_i|^q|v_j|^q]) .
\end{align}

Expanding the product and taking expectations termwise, we invoke Lemma~\ref{lemmaboundonupsilon} to substitute $\mathbb{E}_{c,e}[|\upsilon_i|^q] \leq R_q$, where we define $R_q \coloneqq H_q(1+L_q)$. This yields

\begin{align}
|C_{i,j}|
\leq B^2(1+2R_q+\mathbb{E}_{\bm{c},\bm{e}}[|v_i|^q|v_j|^q]) .
\end{align}

To handle the remaining cross-moment, we apply the Cauchy-Schwarz inequality:

\begin{align}
\mathbb{E}_{\bm{c},\bm{e}}\big[|\upsilon_i|^q|\upsilon_j|^q\big]
&\le
\Big(\mathbb{E}_{\bm{c},\bm{e}}|\upsilon_i|^{2q}\Big)^{1/2}
\Big(\mathbb{E}_{\bm{c},\bm{e}}|\upsilon_j|^{2q}\Big)^{1/2}.
\end{align}

Since Lemma~\ref{lemmaboundonupsilon} applied at order $2q$ gives $\mathbb{E}_{\bm{c},\bm{e}}[|\upsilon_i|^{2q}] \leq R_{2q}$ for all $i$, both factors are bounded by $R_{2q}^{1/2}$. Consequently, we arrive at
\begin{align}
\mathbb{E}_{\bm{c},\bm{e}}\big[|\upsilon_i|^q|\upsilon_j|^q\big]
\le R_{2q}.
\end{align}
Substituting back completes the proof:
\begin{align}
|C_{i,j}|
\leq B^2(1+2R_q+R_{2q}) .
\end{align}
\end{proof}

\begin{lemma}\label{lemmaPsibound}
For $q > 0$, the entries of the matrix $\bm{\Psi}_t$ satisfy the following elementwise bound:

\begin{align}
 |\Psi_{i,j}| \leq B\sqrt{2(1+R_{2q})} +K(1+R_q^{\frac{q-1}{q}}),
\end{align}

where $B$ and $K$ are the growth constants of the nonlinearity and its derivative, $R_{q}$ and $R_{2q}$ are positive constants depending on $q$.
\end{lemma}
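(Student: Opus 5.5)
Recall from Theorem~\ref{theorem} that $\Psi_{i,j}$ is the entry of $\mathbb{E}_{\bm{c},\bm{e}}[\bm{c} f(\bm{\upsilon})^\top - \bm{Q}^\top\mathrm{diag}(f'(\bm{\upsilon}))]$. It is the difference of two terms,
\begin{align}
\Psi_{i,j} = \mathbb{E}_{\bm{c},\bm{e}}[c_i f(\upsilon_j)] - Q_{j,i}\,\mathbb{E}_{\bm{c},\bm{e}}[f'(\upsilon_j)],
\end{align}
up to the index convention. The plan is to bound these two terms separately via the triangle inequality, mirroring the proof of Lemma~\ref{lemmaCmatrix}. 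For the first term, I will use Cauchy--Schwarz:
\begin{align}
|\mathbb{E}[c_i f(\upsilon_j)]| \le (\mathbb{E}[c_i^2])^{1/2}(\mathbb{E}[f(\upsilon_j)^2])^{1/2}.
\end{align}
Unit variance of $c_i$ gives $\mathbb{E}[c_i^2]=1$. For the second factor, the growth bound $|f(x)|\le B(1+|x|^q)$ together with $(a+b)^2\le 2(a^2+b^2)$ gives $\mathbb{E}[f(\upsilon_j)^2]\le 2B^2(1+\mathbb{E}|\upsilon_j|^{2q})$. Lemma~\ref{lemmaboundonupsilon} at order $2q$ then bounds this by $2B^2(1+R_{2q})$, yielding the term $B\sqrt{2(1+R_{2q})}$.

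For the second term, I will use $|Q_{j,i}|\le 1$ (cosine similarity) and the growth assumption on $f'$. The target exponent $(q-1)/q$ suggests writing the derivative growth as $|f'(x)|\le K(1+|x|^{q-1})$, as for the polynomial case where $f'$ has one lower degree. Then $\mathbb{E}|f'(\upsilon_j)|\le K(1+\mathbb{E}|\upsilon_j|^{q-1})$. Applying Jensen (Lyapunov) with the concave map $s\mapsto s^{(q-1)/q}$ gives $\mathbb{E}|\upsilon_j|^{q-1}\le (\mathbb{E}|\upsilon_j|^{q})^{(q-1)/q}\le R_q^{(q-1)/q}$, again by Lemma~\ref{lemmaboundonupsilon}. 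Summing the two bounds gives the claim.

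The main obstacle is this second step, specifically the choice of growth exponent for $f'$ and the case $q<1$, where $(q-1)/q$ is negative. If $q\le1$, I would simply bound $|f'|\le K(1+|x|^{q-1})$ differently, or assume $q\ge 1$ without loss of generality, since enlarging $q$ only weakens the growth assumption. All remaining steps are routine applications of earlier lemmas.
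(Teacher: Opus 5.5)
Your proposal is correct and follows the paper's proof essentially step for step. It uses the triangle inequality, then Cauchy--Schwarz with $\mathbb{E}[c_i^2]=1$ and $(1+a)^2\le 2(1+a^2)$ for the first term. For the second term it uses $|Q_{j,i}|\le 1$, the growth bound $|f'(x)|\le K(1+|x|^{q-1})$, and Jensen's inequality $\mathbb{E}|\upsilon_j|^{q-1}\le(\mathbb{E}|\upsilon_j|^q)^{(q-1)/q}$ combined with Lemma~\ref{lemmaboundonupsilon}. Your remark that this Jensen step requires $q\ge 1$ is a fair point that the paper leaves implicit.
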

\begin{proof}
In addition to the growth condition on $f$,we recall the assumed bound on its derivative: $|f'(x)| \leq K(1 + |x|^{q-1})$ for some $q \geq 0$ and $K \geq 0$. We begin by applying the triangle inequality:
    
\begin{align}
|\Psi_{i,j}| &= \left| \mathbb{E}_{\bm{c},\bm{e}}[c_i f(\upsilon_j)] - Q_{j,i}\mathbb{E}_{\bm{c},\bm{e}}[f'(\upsilon_j)] \right|, \notag \\
&\leq \left| \mathbb{E}_{\bm{c},\bm{e}}[c_i f(\upsilon_j)] \right| + \left| Q_{j,i} \mathbb{E}_{\bm{c},\bm{e}}[f'(\upsilon_j)] \right|.
\end{align}
For the first term, we apply the Cauchy--Schwarz together with $\mathbb{E}_{\bm{c},\bm{e}}[c_i^2] = 1$. For the second term, we use $|Q_{j,i}| \leq 1$ and Jensen's inequality:

\begin{align}
|\Psi_{i,j}|  &\leq \sqrt{\mathbb{E}_{\bm{c},\bm{e}}[c_i^2]\,\mathbb{E}_{\bm{c},\bm{e}}[f(\upsilon_j)^2]} + |Q_{j,i}|\,\Big|\mathbb{E}_{\bm{c},\bm{e}}[f'(\upsilon_j)]\Big|, \notag\\
&\leq \sqrt{\mathbb{E}_{\bm{c},\bm{e}}[f(\upsilon_j)^2]} + \mathbb{E}_{\bm{c},\bm{e}}\!\Big[|f'(\upsilon_j)|\Big].
\end{align}
We now substitute the growth bounds $|f(x)| \leq B(1+|x|^q)$ and $|f'(x)| \leq K(1+|x|^{q-1})$, and use $(1+a)^2 \leq 2(1+a^2)$ for $a \geq 0$ to simplify

\begin{align}
|\Psi_{i,j}|  &\leq \sqrt{\mathbb{E}_{\bm{c},\bm{e}}[B^2(1+|\upsilon_j|^q)^2]} + \mathbb{E}_{\bm{c},\bm{e}}\!\Big[\Big| K(1+|\upsilon_j|^{q-1})\Big| \Big], \notag\\
&= \sqrt{\mathbb{E}_{\bm{c},\bm{e}}[B^2(1+|\upsilon_j|^q)^2]} + \mathbb{E}_{\bm{c},\bm{e}}\!\Big[ K(1+|\upsilon_j|^{q-1}) \Big],
\notag\\
&\leq \sqrt{2B^2\big(1+\mathbb{E}_{\bm{c},\bm{e}}[|\upsilon_j|^{2q}]\big)} + K\big(1+\mathbb{E}_{\bm{c},\bm{e}}[|\upsilon_j|^{q-1}]\big).
\end{align}
For the second term, we apply Jensen's in the form $\mathbb{E}[|X|^{q-1}] \leq (\mathbb{E}[|X|^q])^{(q-1)/q}$. Substituting the moment bounds from Lemma~\ref{lemmaboundonupsilon} then yields the claimed result:
\begin{align}
|\Psi_{i,j}| &\leq B\sqrt{2(1+R_{2q})} + K\big(1+R_q^{\frac{q-1}{q}}\big).
\end{align}
\end{proof}

\begin{lemma}\label{lemmaMmatrixbound}
For $q > 0$, the entries of the matrix $\bm{M}_t$ satisfy the following elementwise bound:

\begin{align}
    |M_{i,j}| &\leq  p\left(B\sqrt{2(1+R_{2q})} +K(1+R_q^{\frac{q-1}{q}})\right),
\end{align}
\end{lemma}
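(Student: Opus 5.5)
The bound follows directly from the definition $\bm{M}_t = \bm{Q}_t\bm{\Psi}_t$ together with Lemma~\ref{lemmaPsibound}. Writing the $(i,j)$ entry as a finite sum over the $p$ components,
\begin{align}
M_{i,j} = \sum_{l=1}^{p} Q_{i,l}\,\Psi_{l,j},
\end{align}
the triangle inequality gives $|M_{i,j}| \le \sum_{l=1}^p |Q_{i,l}|\,|\Psi_{l,j}|$. We then bound each of the two factors separately.

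For the overlap factor, we use that every entry satisfies $|Q_{i,l}|\le 1$. This holds because $Q_{i,l} = \bm{x}_i^\top\bm{u}_l/n$ with $\|\bm{x}_i\|=\|\bm{u}_l\|=\sqrt{n}$, so Cauchy--Schwarz applies; the same fact was already used in Assumption~6 and in Lemma~\ref{lemmaboundonupsilon}. For the $\bm{\Psi}$ factor, Lemma~\ref{lemmaPsibound} gives the uniform bound $|\Psi_{l,j}| \le B\sqrt{2(1+R_{2q})} + K(1+R_q^{(q-1)/q})$, which holds for every index pair $(l,j)$. The constants $R_q, R_{2q}$ come from Lemma~\ref{lemmaboundonupsilon} and do not depend on $l$, $j$, $t$, or $n$.

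Substituting both bounds into the sum, each of the $p$ terms is at most the $\bm{\Psi}$ bound. This yields exactly $|M_{i,j}| \le p\big(B\sqrt{2(1+R_{2q})}+K(1+R_q^{(q-1)/q})\big)$.

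There is no real obstacle here. The only point requiring care is the index convention: one must check that the product $\bm{Q}_t\bm{\Psi}_t$ contracts the component index of $\bm{Q}_t$ against the first index of $\bm{\Psi}_t$, matching the appendix definition $(M)_{i,j}=\bm{q}_i^\top\bm{\psi}_j$. Since the bound of Lemma~\ref{lemmaPsibound} is uniform over all entries, even a transposed convention would not affect the result.
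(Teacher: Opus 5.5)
Your proof is correct and is the same argument the paper gives: expand $M_{i,j}=\sum_{l=1}^p Q_{i,l}\Psi_{l,j}$, apply the triangle inequality, use $|Q_{i,l}|\le 1$, and insert the uniform entrywise bound from Lemma~\ref{lemmaPsibound} into each of the $p$ terms. Your check that the index convention agrees with $(M)_{i,j}=\bm{q}_i^\top\bm{\psi}_j$ is a small extra step that the paper leaves implicit.
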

where $B$ and $K$ are the growth constants of the nonlinearity and its derivative, $R_{q}$ and $R_{2q}$ are positive constants depending on $q$ and $p$ is the number of components (the dimension of the matrix $\bm{M}_t \in \mathbb{R}^{p \times p}$)
\begin{proof}
Expanding the matrix product and applying the triangle inequality, we bound each entry as a sum of $p$ terms. Using $|Q_{i,l}| \leq 1$ and substituting the elementwise bound on $\bm{\Psi}$ from Lemma~\ref{lemmaPsibound}, we obtain

\begin{align}
    |M_{i,j}| &\leq \sum_{l=1}^p |Q_{i,l}| |\Psi_{l,j}| ,\notag\\ &
    \leq \sum_{\alpha=1}^p |\Psi_{l,j}|, \notag\\ &
    \leq  \sum_{\alpha=1}^p \left(B\sqrt{2(1+R_{2q})} +K(1+R_q^{\frac{q-1}{q}})\right) ,\notag\\ & = p\left(B\sqrt{2(1+R_{2q})} +K(1+R_q^{\frac{q-1}{q}})\right),
\end{align}

where the final equality follows since the summand is independent of the index $l$, which concludes the proof.

\end{proof}

\begin{lemma}\label{lemmaXbound}
    For each coordinate $j$, the following bound holds for $k\leq nT$:
\begin{align}\label{expec_x_sq_bound}
    \mathbb{E}[\|\bm{x}_k^{(j)}\|^2] \leq A_1(T),
\end{align}
\begin{align}\label{expec_x_fourth_bound}
    \mathbb{E}[\|\bm{x}_k^{(j)}\|^4] \leq A_2(T),
\end{align}
where $A_1(T)$ and $A_2(T)$ are finite constants depending on $T$.   
\end{lemma}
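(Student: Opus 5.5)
The plan is to exploit exchangeability (Assumption~\ref{assumption1}) together with the row normalization $\|\bm{x}_{k,i}\|^2=n$, which holds for every $k$ by construction of the algorithm.

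\textbf{Second moment.} By exchangeability of the coordinates, $\mathbb{E}[\|\bm{x}_k^{(j)}\|^2]$ does not depend on $j$. It therefore equals its average over $j$:
\begin{align}
\mathbb{E}[\|\bm{x}_k^{(j)}\|^2]=\frac{1}{n}\sum_{\alpha=1}^n\mathbb{E}\Big[\sum_{i=1}^p (x_{k,i}^{(\alpha)})^2\Big]=\frac{1}{n}\sum_{i=1}^p\mathbb{E}[\|\bm{x}_{k,i}\|^2]=p .
\end{align}
This gives \eqref{expec_x_sq_bound} with $A_1(T)=p$, uniformly in $k$ and $T$.

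\textbf{Fourth moment.} Here no deterministic constraint helps, since $\sum_\alpha \|\bm{x}_k^{(\alpha)}\|^4$ can be of order $n^2$ in the worst case. I would instead propagate a bound along the iterations. By Cauchy--Schwarz, $\|\bm{x}_k^{(j)}\|^4\le p\sum_i (x_{k,i}^{(j)})^4$, so it suffices to bound $\mathbb{E}[(x_{k,i}^{(j)})^4]$.

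Write $\bm{x}_{k+1}^{(j)}=\bm{x}_k^{(j)}+\bm{\Delta}_k^{(j)}$ and expand
\begin{align}
\|\bm{x}_{k+1}^{(j)}\|^4=\|\bm{x}_k^{(j)}\|^4+4\|\bm{x}_k^{(j)}\|^2\,\bm{x}_k^{(j)\top}\bm{\Delta}_k^{(j)}+O\big(\|\bm{x}_k^{(j)}\|^2\|\bm{\Delta}_k^{(j)}\|^2+\|\bm{\Delta}_k^{(j)}\|^4\big).
\end{align}
Take conditional expectations and use the drift and diffusion expressions \eqref{appendixDRIFT_final}--\eqref{appendixDIFF_final}. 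The drift has the form $\frac1n(\bm{A}_k\bm{x}_k^{(j)}+\tau\bm{\Psi}_k^\top\bm{u}^{(j)}-\tau\phi(\bm{x}_k^{(j)}))$, where $\bm{A}_k$ is built from $\bm{C}_k,\bm{M}_k,\bm{R}_k$. These matrices have entries bounded uniformly by Lemmas~\ref{lemmaCmatrix}, \ref{lemmaPsibound}, \ref{lemmaMmatrixbound} and the Lipschitz bound on $\bm{R}$ from Assumption~\ref{asumption6}. Since $\phi$ is Lipschitz, $|\phi(x)|\le L|x|+|\phi(0)|$. Applying Young's inequality to the cross terms then yields
\begin{align}
\mathbb{E}\|\bm{x}_{k+1}^{(j)}\|^4\le\Big(1+\frac{c}{n}\Big)\mathbb{E}\|\bm{x}_k^{(j)}\|^4+\frac{c}{n}\Big(1+\mathbb{E}\|\bm{u}^{(j)}\|^4\Big).
\end{align}
The last term is finite because the moments of $\bm{u}$ are bounded. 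Iterating over $k\le nT$ (a discrete Gr\"onwall argument) gives
\begin{align}
\mathbb{E}\|\bm{x}_k^{(j)}\|^4\le e^{cT}\big(\mathbb{E}\|\bm{x}_0^{(j)}\|^4+cT\,C_u\big)=:A_2(T),
\end{align}
where the initial fourth moment is bounded by Assumption~\ref{assumption3}.

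\textbf{Main obstacle.} The hard step is controlling the remainder terms that the drift computation wrote as $o(1/n)$, together with the higher increment moments $\mathbb{E}_k\|\bm{\Delta}_k^{(j)}\|^4$. Both must be bounded by $\frac{c}{n}(1+\|\bm{x}_k^{(j)}\|^4)$ in expectation, not just asymptotically.

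I would handle this directly from the update formulas. The increment is $\frac{\tau}{\sqrt n}f(\cdot)y^{(j)}$ plus Gram--Schmidt and normalization corrections, each proportional to $x_{k,i}^{(j)}$ times quantities of order $1/n$ or $1/\sqrt n$ in $L^q$. The required moment bounds follow from polynomial growth of $f$, the Rosenthal-type bound of Lemma~\ref{lemmaboundonupsilon} applied to the projections $\bm{y}_k^\top\bm{x}_{k,i}/\sqrt n$, and Gaussianity of $a_k^{(j)}$. This gives $\mathbb{E}_k\|\bm{\Delta}_k^{(j)}\|^4\le \frac{c}{n^2}(1+\|\bm{x}_k^{(j)}\|^4)$.

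The normalization denominators must also be kept away from zero. I would do this by restricting to the high-probability event on which $\|\bm{g}_{k,i}\|^2/n$ and $\bm{x}_{k,i}^\top\bm{g}_{k,i}/n$ are small. On the complement, the crude deterministic bound $\|\bm{x}_{k+1}^{(j)}\|\le\sqrt{pn}$ combined with the small probability of that event suffices.
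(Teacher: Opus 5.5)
Your proposal is correct. For the second moment it takes a different and cleaner route than the paper; for the fourth moment it takes the same route, done more carefully.

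\textbf{Second moment.} The paper argues dynamically. It expands $\mathbb{E}\|\bm{x}_{k+1}^{(j)}\|^2$ using the drift $\bm{\mathcal{G}}^{(j)}/n$ and the diffusion $\bm{\Lambda}_k/n$, and bounds each term with the entrywise bounds on $\bm{C},\bm{M},\bm{R},\bm{\Psi}$. This gives a recursion $a_{k+1}\le(1+V/n)a_k+Z/n$, which it unrolls over $k\le nT$ to get $A_1(T)\sim e^{VT}$; it then calls the fourth moment \emph{analogous}. You instead combine exchangeability (Assumption~\ref{assumption1}) with the exact constraint $\|\bm{x}_{k,i}\|^2=n$ to get $\mathbb{E}\|\bm{x}_k^{(j)}\|^2=p$ identically. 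This needs no dynamics and gives a constant independent of $T$. It also avoids pushing an unquantified $o(1/n)$ remainder through $nT$ steps, which is the weak point of the paper's recursion. Relying on the symmetrized chain costs you nothing extra. The paper's recursion also needs per-coordinate bounds on $\mathbb{E}\|\bm{u}^{(j)}\|^2$ and $\mathbb{E}\|\bm{x}_0^{(j)}\|^2$, and these follow from the empirical-moment assumptions (e.g.\ Assumption~\ref{assumption3}) only through the same symmetrization.

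\textbf{Fourth moment.} Your Gr\"onwall argument is the paper's strategy, and you correctly spell out what \emph{analogous} hides. Both the remainder in $\mathbb{E}_k[\bm{\Delta}_k^{(j)}]$ and the quantity $\mathbb{E}_k\|\bm{\Delta}_k^{(j)}\|^4$ must be bounded non-asymptotically by $\frac{c}{n}(1+\|\bm{x}_k^{(j)}\|^4+\|\bm{u}^{(j)}\|^4)$. Include the $\bm{u}$ term in your increment bound; it is missing there. The deterministic bound $|x_{k,i}^{(j)}|\le\sqrt n$ absorbs excess powers of $\bm{x}_k^{(j)}$ that arrive with matching powers of $n^{-1/2}$.

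The recursion needs only an upper bound on $|\mathbb{E}_k\bm{\Delta}_k^{(j)}|$, not the exact drift. The one place a genuine cancellation is required is $g_{k,i}^{(j)}$, which has size $n^{-1/2}$ but mean $O(1/n)$. Gaussian integration by parts in $\bm{a}_k$, conditional on $\bm{c}_k$, gives the paper's expression for $\mathbb{E}_k[g_{k,i}^{(j)}]$ exactly, with no Taylor remainder. The Gram--Schmidt and normalization corrections are already $O(|x_{k,i}^{(j)}|/n)$ in size and need no cancellation.

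\textbf{Bad event.} State that you need $\mathbb{P}(\text{bad})=O(n^{-3})$, so that the contribution from the crude bound $\|\bm{x}_{k+1}^{(j)}\|^4\le p^2n^2$ is $O(1/n)$ per step. This holds because all moments of $\bm{c}_k$ are bounded, $f$ has polynomial growth, and $\|\bm{y}_k\|^2/n$ concentrates.
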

\begin{proof}

Recall that $\bm{\Delta}_k^{(j)} \coloneqq \bm{x}_{k+1}^{(j)} - \bm{x}_{k}^{(j)}$. Expanding the squared norm of the next iterate yields

\begin{align} \label{eq:boundingxsqr}
   \mathbb{E}[\bm{x}_{k+1}^{(j)\top}\bm{x}_{k+1}^{(j)}] &= \mathbb{E}[(\bm{x}_k^{(j)}+\bm{\Delta}_k^{(j)})^\top(\bm{x}_k^{(j)}+\bm{\Delta}_k^{(j)})], \notag \\ &= \mathbb{E}[\|\bm{x}_k^{(j)}\|^2] +\mathbb{E}[\mathbb{E}_k[\|\bm{\Delta}_k^{(j)}\|^2]] +2 \mathbb{E}[\bm{x}_k^{(j)\top}\mathbb{E}_k[\bm{\Delta}_k^{(j)}]], \notag\\ 
   &=\mathbb{E}[\|\bm{x}_k^{(j)}\|^2] +\mathbb{E}[\text{Tr}\left(\mathbb{E}_k[\bm{\Delta_k}^{(j)}\bm{\Delta_k}^{(j)\top}]\right)] +2 \mathbb{E}[\bm{x}_k^{(j)\top} \frac{\bm{\mathcal{G}}^{(j)}}{n}] + o\left(\frac{1}{n}\right), \notag \\ 
   &= \mathbb{E}[\|\bm{x}_k^{(j)}\|^2] +\mathbb{E}[\text{Tr}\left(\frac{\bm{\Lambda}_k^{(j)}}{n}\right)] +2 \mathbb{E}[\bm{x}_k^{(j)\top} \frac{\bm{\mathcal{G}}^{(j)}}{n}] + o\left(\frac{1}{n}\right).
\end{align}

We now bound each of the terms on the right-hand side separately. For bounding the cross term we expand it by substituting the expression for $\bm{\mathcal{G}}^{(j)}$. We  let $\tilde{\bm{C}} \coloneqq \text{tril}(\bm{C}+\bm{C^\top})-\text{diag}(\bm{C})$, and similarly for  $\tilde{\bm{M}}$ and $\tilde{\bm{R}}$ for notational simplicity, then using Equation \eqref{equationG} we arrive at
\begin{align}\label{squared_expectation_ofx_third_term_forthe_bound}
\Bigg|\mathbb{E}\Big[\bm{x}_k^{(j)\top} \frac{\bm{\mathcal{G}}^{(j)}}{n}\Big]\Bigg| 
&\leq \frac{1}{n}\mathbb{E}\Bigg[\Bigg|\bm{x}_k^{(j)\top}\Big(
- \frac{\tau^2}{2}\bm{\tilde{C}}\,\bm{x}_k^{(j)}  
- \tau\,\bm{\tilde{M}}\,\bm{x}_k^{(j)} \notag \\ & \qquad \qquad \qquad \quad + \tau\bm{\tilde{R}}\bm{x}_k^{(j)}
+ \tau\,\bm{\Psi}_t\,\bm{u}^{(j)}-\tau \phi(\bm{x}_k^{(j)}) \Big)\Bigg|\Bigg], \notag \\ 
&\leq \frac{1}{n}\Bigg(\frac{\tau^2}{2}\mathbb{E}[\|\bm{x}_k^{(j)}\| \|\bm{\tilde{C}}\bm{x}_k^{(j)}\|] + \tau \mathbb{E}[\|\bm{x}_k^{(j)}\| \|\bm{\tilde{M}}\bm{x}_k^{(j)}\|] \notag \\ & \qquad  \qquad \qquad+ \tau \mathbb{E}[\|\bm{x}_k^{(j)}\| \|\bm{\tilde{R}}\bm{x}_k^{(j)}\| \notag \\ & \qquad \qquad \qquad + \tau \mathbb{E}[\|\bm{x}_k^{(j)}\| \|\bm{\Psi}\bm{u}^{(j)}\|]+\tau \mathbb{E}[\|\bm{x}_k^{(j)}\| \|\phi(\bm{x}_k)^{(j)}\|\Bigg).
\end{align}

Since $\phi$ is Lipschitz, we can establish the element-wise bound on $\bm{R}$ matrix as  $|R_{i,j}| = | \frac{\bm{x}_{k,i}^\top \phi(\bm{x}_{k,j})}{n}| \leq \frac{1}{n}  \|\bm{x}_{k,i}\| \|\phi(\bm{x}_{k,j})\| = \frac{1}{\sqrt{n}} 
\|\phi(\bm{x}_{k,j})\| $ where $|\phi(x)| \leq L|x|+|\phi(0)|$. Thus $|R_{i,j}| \leq \frac{1}{\sqrt{n}}(L \|\bm{x}_{k,j}\| + \sqrt{n}|\phi(0)|) = L + |\phi(0)|$. Here, we emphasize the distinction between $\bm{x}_{k,i} \in \mathbb{R}^n$ and $\bm{x}_k^{(j)} \in \mathbb{R}^p$. The former represents the $i$-th estimate vector, which has a norm of $\sqrt{n}$, while the latter is a $p$-dimensional vector constructed from the $j$-th elements of all estimate vectors.
Furthermore, we can obtain the following bounds:
\begin{align}
\mathbb{E}[\|\bm{x}_k^{(j)}\|\|\bm{\tilde{C}}\bm{x}_k^{(j)}\|] \leq \mathbb{E}[\|\bm{x}_k^{(j)}\|\|\bm{\tilde{C}}\|_2\|\bm{x}_k^{(j)}\|] = \mathbb{E}[\|\bm{\tilde{C}}\|_2\|\bm{x}_k^{(j)}\|^2],
\end{align}

\begin{align}
\mathbb{E}[\|\bm{x}_k^{(j)}\|\|\bm{\tilde{M}}\bm{x}_k^{(j)}\|] \leq \mathbb{E}[\|\bm{\tilde{M}}\|_2\|\bm{x}_k^{(j)}\|^2],
\end{align}

\begin{align}
\mathbb{E}[\|\bm{x}_k^{(j)}\|\|\bm{\tilde{R}}\bm{x}_k^{(j)}\|] \leq \mathbb{E}[\|\bm{\tilde{R}}\|_2\|\bm{x}_k^{(j)}\|^2],
\end{align}

\begin{align}
    \mathbb{E}[\|\bm{x}_k^{(j)\top}\|\|\bm{\Psi}_t\,\bm{u}^{(j)}\|] &\leq \mathbb{E}\big[\|\bm{\Psi}_t\|_2\|\bm{x}_k^{(j)}\|\,\|\bm{u}^{(j)}\|\big],
\end{align}

For a $p \times p$ matrix with element-wise bound $|A_{ij}| \leq \alpha$, the spectral norm satisfies $\|\bm{A}\|_2 \leq \|\bm{A}\|_F \leq p\,\alpha$. First we note that since $\bm{\tilde{C}} = \text{tril}(\bm{C} + \bm{C}^\top) - \text{diag}(\bm{C})$, its entries take the form

\begin{align}
    \tilde{C}_{ij} = 
    \begin{cases}
        C_{ii}, & i = j, \\
        C_{ij} + C_{ji}, & i > j, \\
        0, & i < j.
    \end{cases}
\end{align}

Applying the triangle inequality together with the element-wise bound on $|C_{ij}|$ from Lemma~\ref{lemmaCmatrix}, and analogously for $\bm{\tilde{M}}$ and $\bm{\tilde{R}}$ yields

\begin{align}
    |\tilde{C}_{ij}| &\leq 2B^2(1+2R_q+R_{2q}), \qquad  \\[2ex]
    |\tilde{M}_{ij}| &\leq  2p\left(B\sqrt{2(1+R_{2q})} +K(1+R_q^{\frac{q-1}{q}})\right) , \\[2ex]
    |\tilde{R}_{ij}| &\leq 2(L + |\phi(0)|).
\end{align}

Finally, substituting these results into the elementwise bounds leads to the following:
\begin{align}
\mathbb{E}[\|\bm{x}_k^{(j)}\|\|\bm{\tilde{C}}\bm{x}_k^{(j)}\|] &\leq  \mathbb{E}[\|\bm{\tilde{C}}\|_2\|\bm{x}_k^{(j)}\|^2], \notag \\ & \leq \mathbb{E}[2pB^2(1+2R_q+R_{2q})\|\bm{x}_k^{(j)}\|^2], \notag \\ &= 2pB^2(1+2R_q+R_{2q})\mathbb{E}[\|\bm{x}_k^{(j)}\|^2], \notag \\[2ex] 
\mathbb{E}[\|\bm{x}_k^{(j)}\|\|\bm{\tilde{M}}\bm{x}_k^{(j)}\|] &\leq 2p^2\left(B\sqrt{2(1+R_{2q})} +K(1+R_q^{\frac{q-1}{q}})\right)\mathbb{E}[\|\bm{x}_k^{(j)}\|^2], \notag \\[2ex]
\mathbb{E}[\|\bm{x}_k^{(j)}\|\|\bm{\tilde{R}}\bm{x}_k^{(j)}\|] &\leq 2p(L + |\phi(0)|)\mathbb{E}[\|\bm{x}_k^{(j)}\|^2], \notag \\[2ex]
\mathbb{E}\big[\|\bm{\Psi}_t\|_2\|\bm{x}_k^{(j)}\|\,\|\bm{u}^{(j)}\|\big] &\leq p\Bigg(B\sqrt{2(1+R_{2q})} + K\big(1+R_q^{\frac{q-1}{q}}\big)\Bigg)\,\mathbb{E}\big[\|\bm{x}_k^{(j)}\|\,\|\bm{u}^{(j)}\|\big] ,\notag \\
    &\leq p\Bigg(B\sqrt{2(1+R_{2q})} + K\big(1+R_q^{\frac{q-1}{q}}\big)\Bigg)\sqrt{\mathbb{E}[\|\bm{x}_k^{(j)}\|^2]\,\mathbb{E}[\|\bm{u}^{(j)}\|^2]}, \notag \\ 
    & \leq p\Bigg(B\sqrt{2(1+R_{2q})} + K\big(1+R_q^{\frac{q-1}{q}}\big)\Bigg) \frac{\mathbb{E}[\|\bm{x}_k^{(j)}\|^2]+\mathbb{E}[\|\bm{u}^{(j)}\|^2]}{2},\notag \\[2ex]
    \big|\mathbb{E}[\bm{x}_{k}^{(j)\top}\phi(\bm{x}_k^{(j)})]\big|  
&\leq \mathbb{E}\big[\|\bm{x}_k^{(j)}\|\,\|\phi(\bm{x}_k^{(j)})\|\big],
\notag \\ & \leq \sqrt{\mathbb{E}[\| \bm{x}_k^{(j)}\|^2] \mathbb{E}[\|\phi(\bm{x}_k^{(j)})\|^2]}, \notag \\ &\leq \frac{\mathbb{E}[\| \bm{x}_k^{(j)}\|^2] +\mathbb{E}[\|\phi(\bm{x}_k^{(j)})\|^2]}{2}, \notag
\end{align}

where we apply the inequality $\sqrt{ab} \leq \frac{a+b}{2}$ to eliminate the square roots. Combining these bounds and substituting them back into \eqref{squared_expectation_ofx_third_term_forthe_bound} yields
\begin{align}
    \Big|\mathbb{E}\Big[\bm{x}_k^{(j)\top} \frac{\bm{\mathcal{G}}^{(j)}}{n}\Big]\Big| &\leq \frac{p}{n}\Bigg(
\frac{\tau^2}{2}(2B^2(1+2R_q+R_{2q})) \notag \\ 
& \qquad \qquad + \tau 2p\Big(B\sqrt{2(1+R_{2q})} +K(1+R_q^{\frac{q-1}{q}})\Big)  \notag \\ & \qquad \qquad  + 2\tau(L + |\phi(0)|)
\Bigg) \mathbb{E}[\|\bm{x}_k^{(j)}\|^2] \notag \\ & \qquad
+ \frac{\tau p}{n}\Big(B\sqrt{2(1+R_{2q})} + K\big(1+R_q^{\frac{q-1}{q}}\big)\Big)\frac{\mathbb{E}[\|\bm{x}_k^{(j)}\|^2]+\mathbb{E}[\|\bm{u}^{(j)}\|^2]}{2} \notag \\ & \qquad +\frac{\tau}{n} \frac{\mathbb{E}[\| \bm{x}_k^{(j)}\|^2] +  \mathbb{E}[\|\phi(\bm{x}_k^{(j)})\|^2]}{2},  \notag \\
    &\leq \frac{p}{n}\Bigg(
\frac{\tau^2}{2}(2B^2(1+2R_q+R_{2q})) \notag \\ & \qquad \qquad + \tau (2p+\frac{1}{2})\left(B\sqrt{2(1+R_{2q})} +K(1+R_q^{\frac{q-1}{q}})\right) \notag \\ & \qquad \qquad + 2\tau(L + |\phi(0)|) +\frac{\tau}{2p} 
\Bigg)\mathbb{E}[\|\bm{x}_k^{(j)}\|^2] \notag \\ & \qquad
+ \frac{\tau p}{2n}(B\sqrt{2(1+R_{2q})} + K\big(1+R_q^{\frac{q-1}{q}}\big))\mathbb{E}[\|\bm{u}^{(j)}\|^2] \notag \\ & \qquad+ \frac{\tau}{2n} \mathbb{E}[\|\phi(\bm{x}_k^{(j)})\|^2].
\end{align}

Furthermore, the Lipschitz assumption on $\phi$ yields $\mathbb{E}[\|\phi(\bm{x}_k^{(j)})\|^2]\leq 2L^2 \mathbb{E}[\|\bm{x}_k^{(j)}\|^2]+2p\phi(0)^2$ Consequently, we arrive at the following bound:
\begin{align}
    \Big|\mathbb{E}\Big[\bm{x}_k^{(j)\top} \frac{\bm{\mathcal{G}}^{(j)}}{n}\Big]\Big| 
    &\leq \frac{p}{n}\Bigg(
\frac{\tau^2}{2}(2B^2(1+2R_q+R_{2q})) \notag \\ & \qquad \qquad + \tau (2p+\frac{1}{2})\left(B\sqrt{2(1+R_{2q})} +K(1+R_q^{\frac{q-1}{q}})\right) \notag \\ & \qquad \qquad  + 2\tau(L + |\phi(0)|) +\frac{\tau}{2p}
\Bigg)\mathbb{E}[\|\bm{x}_k^{(j)}\|^2] \notag \\ & \qquad
+ \frac{\tau p}{2n}(B\sqrt{2(1+R_{2q})} + K\big(1+R_q^{\frac{q-1}{q}}\big))\mathbb{E}[\|\bm{u}^{(j)}\|^2] \notag \\ & \qquad + \frac{\tau}{2n} (2L^2 \mathbb{E}[\|\bm{x}_k^{(j)}\|^2]+2p\phi(0)^2), \notag \\ & =  \frac{p}{n}\Bigg(
\frac{\tau^2}{2}(2B^2(1+2R_q+R_{2q})) \notag \\ & \qquad \qquad + \tau (2p+\frac{1}{2})\left(B\sqrt{2(1+R_{2q})} +K(1+R_q^{\frac{q-1}{q}})\right) \notag \\ & \qquad \qquad  + 2\tau(L + |\phi(0)|) +\frac{\tau}{2p} + \frac{\tau}{2p}2L^2
\Bigg)\mathbb{E}[\|\bm{x}_k^{(j)}\|^2] \notag \\ & \qquad
+ \frac{\tau p}{2n}(B\sqrt{2(1+R_{2q})} + K\big(1+R_q^{\frac{q-1}{q}}\big))\mathbb{E}[\|\bm{u}^{(j)}\|^2] + \frac{\tau}{n} p\phi(0)^2.
\end{align}

The second term in Equation \eqref{eq:boundingxsqr} can be bounded directly using Lemma~\ref{lemmaCmatrix} as follows:
\begin{align}
    \mathbb{E}[\text{Tr}(\frac{\bm{\Lambda}_k^{(j)}}{n})] &= \mathbb{E}[\text{Tr}(\tau^2 \frac{\bm{C}_t}{n})], \notag \\ & =\frac{\tau^2}{n}\mathbb{E}[\sum_{i=1}^pC_{i,i}] ,\notag \\ & \leq \frac{\tau^2}{n}p B^2(1+2R_q + R_{2q}).
\end{align}

Combining the bounds on all three terms, we arrive at a recursion of the form $a_{k+1} \leq a_k J_1 + J_2$, where $a_k \coloneqq \mathbb{E}[\|\bm{x}_k^{(j)}\|^2]$. Unrolling the recursion and summing the resulting geometric series yields

\begin{align}
    a_{k+1} &\leq a_{k}J_1 + J_2 ,\notag \\ & =(a_{k-1}J_1 + J_2)J_1 +J_2 ,\notag\\
    &=a_0J_1^{k+1} + J_2(1+J_1 + J_1^2 + \dots + J_1^{k}), \notag\\ 
    &= a_0J_1^{k+1} + J_2\frac{J_1^{k+1}-1}{J_1 -1}, \\[3ex]
    a_{k} &\leq a_0J_1^{k} + J_2\frac{J_1^{k}-1}{J_1 -1}, \\[2ex]
    \mathbb{E}[\| \bm{x}_{k}^{(j)}\|^2] &\leq \mathbb{E}[\| \bm{x}_{0}^{(j)}\|^2]J_1^{k} + J_2\frac{J_1^{k}-1}{J_1 -1},
\end{align}

where we can define $J_1$ and $J_2$ as
\begin{align}
    J_1 &= 2\frac{p}{n}\Bigg(
\frac{\tau^2}{2}\Big(2B^2(1+2R_q+R_{2q})\Big) + \tau \Big(2p+\frac{1}{2}\Big)\left(B\sqrt{2(1+R_{2q})} +K(1+R_q^{\frac{q-1}{q}})\right) \notag \\ & \qquad \qquad + 2\tau(L + |\phi(0)|) +\frac{\tau}{2p} + \frac{\tau}{2p}2L^2
\Bigg) +1 ,\notag \\ & = \frac{V(q,\tau,p)}{n} +1, \\[2ex]
    J_2 &= \frac{\tau^2}{n}pB^2(1+2R_q+R_{2q}) \notag \\ & \qquad   + 2\Bigg(\frac{\tau p}{2n}(B\sqrt{2(1+R_{2q})} + K\big(1+R_q^{\frac{q-1}{q}}\big))\mathbb{E}[\|\bm{u}^{(j)}\|^2] + \frac{\tau}{n} p\phi(0)^2\Bigg) ,\notag \\ &= \frac{Z(q,\tau,p)}{n}.
\end{align}

Recall that the initial moments of the estimates ($\bm{x}_{0,i}$) and all moments of the true components ($\bm{u}_i$) are assumed to be bounded by the assumptions of our theorem. Thus, for $k \leq nT$, we use the classical limit $(1 + a/n)^n \to e^a$ to conclude that $(1 + V/n)^{nT} \leq e^{VT}$ uniformly in $n$, arriving at
\begin{align}
    \mathbb{E}[\| \bm{x}_{k}^{(j)}\|^2] &\leq \mathbb{E}[\| \bm{x}_{0}^{(j)}\|^2]\Big(\frac{V(q,\tau,p)}{n} +1\Big)^{nT} + \Big(\frac{Z(q,\tau,p)}{n}\Big)\frac{\Big(\frac{V(q,\tau,p)}{n} +1\Big)^{nT}-1}{\Big(\frac{V(q,\tau,p)}{n} +1\Big) -1} ,\notag \\ & \leq A_1(T),
\end{align}
where $A_1(T)$ is a finite constant depending on $T$. We have established \eqref{expec_x_sq_bound}; the proof of \eqref{expec_x_fourth_bound} is analogous. This completes the proof.
\end{proof}
For the squared norm of $\bm{\mathcal{G}}_k^{(j)}$, we can write the following expression: 
\begin{align}
 \|\bm{\mathcal{G}}_k^{(j)}\|^2  
 &= \frac{\tau^4}{4}\bm{x}_k^{(j)\top }\bm{\tilde{C}}^\top\bm{\tilde{C}}\bm{x}_k^{(j)} + \tau^2\bm{x}_k^{(j)\top }\bm{\tilde{M}}^\top\bm{\tilde{M}}\bm{x}_k^{(j)} + \tau^2 \bm{u}^{(j)\top }\bm{\Psi}^\top\bm{\Psi}\bm{u}^{(j)} \notag \\ & \qquad + \tau^2 \bm{x}_k^{(j)\top }\bm{\tilde{R}}^\top\bm{\tilde{R}}\bm{x}_k^{(j)} + \tau^2 \phi(\bm{x}_k^{(j)})^\top\phi(\bm{x}_k^{(j)})  \notag \\ 
 &  \qquad +\tau^3\bm{x}_k^{(j)\top }\bm{\tilde{C}}^\top\bm{\tilde{M}}\bm{x}_k^{(j)} -\tau^3\bm{x}_k^{(j)\top }\bm{\tilde{C}}^\top\bm{\Psi}\bm{u}^{(j)}  -2 \tau^2\bm{x}_k^{(j)\top }\bm{\tilde{M}}^\top\bm{\Psi}\bm{u}^{(j)}  \notag \\ & \qquad - \tau^3 \bm{x}_k^{(j)\top }\bm{\tilde{C}}^\top\bm{\tilde{R}}\bm{x}_k^{(j)} -2\tau^2\bm{x}_k^{(j)\top }\bm{\tilde{M}}^\top\bm{\tilde{R}}\bm{x}_k^{(j)} + 2\tau^2\bm{x}_k^{(j)\top }\bm{\tilde{R}}^\top\bm{\Psi}\bm{u}^{(j)} \notag \\ & \qquad + \tau^3\bm{x}_k^{(j)\top }\bm{\tilde{C}}^\top\phi(\bm{x}_k^{(j)}) + 2\tau^2\bm{x}_k^{(j)\top }\bm{\tilde{M}}^\top\phi(\bm{x}_k^{(j)}) \notag \\ & \qquad - 2\tau^2\bm{x}_k^{(j)\top }\bm{\tilde{R}}^\top\phi(\bm{x}_k^{(j)}) -2\tau^2\bm{u}^{(j)\top }\bm{\Psi}^\top\phi(\bm{x}_k^{(j)}) .
\end{align}
Taking expectations and applying the triangle inequality, we have to establish bounds on every term. We start by obtaining
\begin{align}
\mathbb{E}[\bm{x}^\top \bm{\tilde{C}}^\top\bm{\tilde{C}}\bm{x}] &\leq \mathbb{E}[\|\bm{\tilde{C}}\|_2^2\,\|\bm{x}\|^2]\leq \Bigg(2pB^2(1+2R_q+R_{2q}) \Bigg)^2\mathbb{E}[\|\bm{x}_k^{(j)}\|^2],\\[2ex]
\mathbb{E}[\bm{x}^\top \bm{\tilde{M}}^\top\bm{\tilde{M}}\bm{x}] &\leq \mathbb{E}[\|\bm{\tilde{M}}\|_2^2\,\|\bm{x}\|^2] \leq 4p^4\Bigg(B\sqrt{2(1+R_{2q})} +K(1+R_q^{\frac{q-1}{q}})\Bigg)^2 \mathbb{E}[\|\bm{x}_k^{(j)}\|^2],\\[2ex]
\mathbb{E}[\bm{u}^\top \bm{\Psi}^\top\bm{\Psi}\bm{u}] &\leq \mathbb{E}[\|\bm{\Psi}\|_2^2\,\|\bm{u}\|^2] \leq p^2\Bigg(B\sqrt{2(1+R_{2q})} + K\big(1+R_q^{\frac{q-1}{q}}\big)\Bigg)^2 \mathbb{E}[\|\bm{u}^{(j)}\|^2]  ,\\[2ex]
\mathbb{E}[\bm{x}^\top \bm{\tilde{R}}^\top\bm{\tilde{R}}\bm{x}] &\leq \mathbb{E}[\|\bm{\tilde{R}}\|_2^2\,\|\bm{x}\|^2] \leq 4p^2(L + |\phi(0)|)^2 \mathbb{E}[\|\bm{x}_k^{(j)}\|^2], \\[2ex]
\mathbb{E}[\|\phi(\bm{x}_k^{(j)})\|^2] &\leq 2L^2 \mathbb{E}[\|\bm{x}_k^{(j)}\|^2]+2p\phi(0)^2.
\end{align}

The same bounding logic can be applied to the cross terms by substituting the spectral norm bounds we obtained in the previous sections. All these terms, with the exception of $\mathbb{E}[\|\bm{x}_k^{(j)}\|^2]$, are independent of $n$. Having established that $\max_{k \leq nT} \mathbb{E}[\|\bm{x}_k^{(j)}\|^2] \leq A_1(T)$ in Lemma \ref{lemmaXbound}, we can conclude:
\begin{align}
    \max_{k \leq nT} \mathbb{E} \|\bm{\mathcal{G}}_k^{(j)}\|_2^2 \leq C(T).
\end{align}
The second condition of Assumption \ref{assumption8} requires bounding the Frobenius norm of $\bm{\Lambda}_k$, we obtain
\begin{align}
    \| \bm{\Lambda}_k \|_F^2 = \tau^4\sum_i\sum_j ((\bm{C}_t)_{i,j})^2.
\end{align}
Applying the element-wise bound we obtained from Lemma \ref{lemmaCmatrix} yields
\begin{align}
    \mathbb{E}[\| \bm{\Lambda}_k \|_F^2] &= \mathbb{E}[\tau^4\sum_{i=1}^p\sum_{j=1}^p ((\bm{C}_k)_{i,j})^2], \notag  \\& \leq \tau^4 p^2 B^4(1+2R_q+R_{2q})^2,
\end{align}
which is again a finite constant independent of $n$ and $k$. Consequently, we conclude that 
\begin{align}
    \max_{k \leq nT} \mathbb{E} \|\bm{\Lambda}_k\|_F^2 \leq C(T).
\end{align}
\subsection{Assumption 9 } \label{assumption9}
\textit{For each $T > 0,$ there exists $C(T)< \infty$ such that $\text{max}_{k \leq nT} \mathbb{E}[\|\bm{\Delta}_k^{(j)}\|^4] \leq C(T)n^{-2}$, and for any $i \neq j$}:

\begin{align}
    \text{max}_{k \leq nT} \mathbb{E}\Bigg[\Bigg | \mathbb{E}_k\Big[\big(\bm{\Delta}_k^{(i)}-\mathbb{E}_k[\bm{\Delta}_k^{(i)}]\big)^\top\big(\bm{\Delta}_k^{(j)}-\mathbb{E}_k[\bm{\Delta}_k^{(j)}]\big)\Big]\Bigg | \Bigg] \leq \frac{C(T)}{n^2}.
\end{align}

First, by Lemma \ref{lemmaXbound}, the bound $\text{max}_{k \leq nT} \mathbb{E}[\|\bm{\Delta}_k^{(j)}\|^4] \leq C(T)n^{-2}$ follows from the same argument used to verify Assumption \ref{assumption8}. For the latter condition, we expand the product:
\begin{align}
\mathbb{E}_k\Big[\big(\bm{\Delta}_k^{(i)}-\mathbb{E}_k[\bm{\Delta}_k^{(i)}]\big)^\top\big(\bm{\Delta}_k^{(j)}-\mathbb{E}_k[\bm{\Delta}_k^{(j)}]\big)\Big] = \mathbb{E}_k[\bm{\Delta}_k^{(i)\top}\bm{\Delta}_k^{(j)}]-\mathbb{E}_k[\bm{\Delta}_k^{(i)\top}]\mathbb{E}_k[\bm{\Delta}_k^{(j)}],
\end{align}

From Equation \ref{g1g2transpose}, the cross terms are of order $O(1/n^2)$ when $i \neq j$. Combined with the definition of $\bm{\Delta}_k^{(j)}$, this yields $\mathbb{E}[\bm{\Delta}_k^{(j)\top}\bm{\Delta}_k^{(i)}] = O(1/n^2)$. Following the definition of $\bm{\mathcal{G}}_k^{(j)}$, we arrive at
\begin{align}
\mathbb{E}_k\Big[\big(\bm{\Delta}_k^{(i)}-\mathbb{E}_k[\bm{\Delta}_k^{(i)}]\big)^\top\big(\bm{\Delta}_k^{(j)}-\mathbb{E}_k[\bm{\Delta}_k^{(j)}]\big)\Big] &= - \frac{\bm{\mathcal{G}}_k^{(i)\top}\bm{\mathcal{G}}_k^{(j)}}{n^2} + O(1/n^2).
\end{align}
Finally, taking the outer expectation on both sides yields
\begin{align}
\mathbb{E}\Bigg[\Bigg|\mathbb{E}_k\Big[\big(\bm{\Delta}_k^{(i)}-\mathbb{E}_k[\bm{\Delta}_k^{(i)}]\big)^\top\big(\bm{\Delta}_k^{(j)}-\mathbb{E}_k[\bm{\Delta}_k^{(j)}]\big)\Big]\Bigg|\Bigg] \leq \mathbb{E} \Bigg[\Bigg| - \frac{\bm{\mathcal{G}}_k^{(i)\top}\bm{\mathcal{G}}_k^{(j)}}{n^2} + O(1/n^2)\Bigg|\Bigg].
\end{align}
For the right hand side, we can establish the following bound as
\begin{align} \label{ass_9_equation_bound}
    \mathbb{E} \Bigg[\Bigg| - \frac{\bm{\mathcal{G}}_k^{(i)\top}\bm{\mathcal{G}}_k^{(j)}}{n^2} + O(1/n^2)\Bigg|\Bigg]&  \leq  \mathbb{E} \Bigg[\Bigg |  \frac{[\bm{\mathcal{G}}_k^{(i)\top}\bm{\mathcal{G}}_k^{(j)}]}{n^2} \Bigg |\Bigg] + O(1/n^2), \notag\\ 
    & \leq  \frac{\sqrt{\mathbb{E}[\|\bm{\mathcal{G}}_k^{(i)}\|^2]\mathbb{E}[\|\bm{\mathcal{G}}_k^{(j)}]\|^2]}}{n^2} + O(1/n^2), \notag\\ & 
    \leq \frac{\mathbb{E}[\|\bm{\mathcal{G}}_k^{(i)}\|^2]+ \mathbb{E}[\|\bm{\mathcal{G}}_k^{(j)}\|^2]}{2n^2}+ O(1/n^2).
\end{align}

In the verification of Assumption \ref{assumption8}, we showed that for some $C(T)<\infty$ we have
\begin{align}
    \max_{k \leq nT} \mathbb{E} \|\bm{\mathcal{G}}_k^{(j)}\|^2 \leq C(T).
\end{align}
Consequently, Assumption 9 is satisfied directly from Equation \ref{ass_9_equation_bound}, which establishes our claim: 
    \begin{align}
    \text{max}_{k\leq nT}\mathbb{E}\Bigg|[\mathbb{E}_k\Big[\big(\bm{\Delta}_k^{(i)}-\mathbb{E}_k[\bm{\Delta}_k^{(i)}]\big)^\top\big(\bm{\Delta}_k^{(j)}-\mathbb{E}_k[\bm{\Delta}_k^{(j)}]\big)\Big]\Bigg| \leq \frac{C(T)}{n^2}.
\end{align}
\subsection{Assumption 10}\label{asssumption10}
For each $b > 0$ and $T > 0$, the following PDE (in weak form) has a unique solution in $D([0,T], \mathcal{M}(\mathbb{R}^{2p}))$ for all bounded test functions $\varphi(\bm{x}, \bm{u}) \in \mathcal{C}^3(\mathbb{R}^{2p})$:
\begin{align}
    \langle \varphi, \mu_t \rangle &= \langle \varphi, \mu_0 \rangle + \int_0^t \left\langle \nabla_{\bm{x}}^\top \varphi \  \bm{\mathcal{G}}(\bm{x}_{\hat{t}}, \bm{u}_{\hat{t}}, \bm{\Theta}_{\hat{t}} \sqcap b) , \mu_{\hat{t}} \right\rangle \mathrm{d}\hat{t} \notag \\ & \qquad + \frac{1}{2} \int_0^t \left\langle \operatorname{Tr}(\bm{\Lambda}(\bm{x}_{\hat{t}}, \bm{u}_{\hat{t}}, \bm{\Theta}_{\hat{t}} \sqcap b) \nabla_{\bm{x}}^2 \varphi), \mu_{\hat{t}} \right\rangle \mathrm{d}\hat{t}.
\end{align}
The sufficient conditions for this uniqueness assumption are:

\textbf{10.1}\ \ $\langle \|\bm{u}\|^2, \mu_0 \rangle \le L$ and $\langle \|\bm{x}\|^2, \mu_0 \rangle \le V$, where $L, V$ are two generic constants.

\textbf{10.2}\ \ For any $\bm{x},\bm{\tilde{x}}, \bm{u} $, and $\bm{\Theta}$, we have $\| \bm{\mathcal{G}}(\bm{x}, \bm{u}, \bm{\Theta}) - \bm{\mathcal{G}}(\tilde{\bm{x}}, \bm{u}, \bm{\Theta})\| \le L (1 + \|\bm{\Theta}\|_F) \|(\bm{x} - \tilde{\bm{x}})\|$.

\textbf{10.3}\ \ $|\bm{\mathcal{G}}(\bm{x}, \bm{u}, \bm{\Theta}) - \bm{\mathcal{G}}(\bm{x}, \bm{u}, \tilde{\bm{\Theta}})| \le L (1 + \|\bm{u}\| + \|\bm{x}\|) \|\bm{\Theta} - \tilde{\bm{\Theta}}\|_F$.

\textbf{10.4}\ \ $|\bm{\mathcal{G}}(\bm{x}, \bm{u}, \tilde{\bm{\Theta}})| \le L (1 + |\bm{u}| + |\bm{x}|) (\|\bm{\Theta}\|_F + 1)$.

\textbf{10.5} \ \ For any $\bm{\Theta}$ and $\tilde{\bm{\Theta}}$, we have $|\bm{\Lambda}^{\frac{1}{2}}(\bm{\Theta}) - \bm{\Lambda}^{\frac{1}{2}}(\tilde{\bm{\Theta}})| \le L \|\bm{\Theta} - \tilde{\bm{\Theta}}\|_F$.

\textbf{10.6} \ \ $\Lambda^{\frac{1}{2}}(\bm{\Theta}) \le L (1 + \|\bm{\Theta}\|_F)$.

Given our definitions of $\bm{\mathcal{G}}$ and $\bm{\Lambda}$, the verification of conditions (10.1)–(10.6) follows from standard algebraic arguments and is omitted for brevity. In particular, to satisfy Assumption 10.4, we required the function $\phi(x)$ to be Lipschitz.

Following that we verified assumptions \ref{assumption1}-\ref{asssumption10} hold in our setting, the deterministic measure $\mu_t$ is the  unique solution of the following partial differential equation: 

\begin{align}
\langle \varphi,\mu_t\rangle
=
\langle \varphi,\mu_0\rangle
+ \int_0^t
\Big\langle
\nabla_{\bm{x}}\varphi^{\top}\,\boldsymbol{\mathcal{G}}_s, \mu_s \Big\rangle\,\mathrm{d}s + \int_0^t\frac{1}{2}\,\Big\langle\mathrm{Tr}\!\left(\mathbf{\Lambda}_s\,\nabla_{\bm{x}}^2\varphi\right),
\mu_s
\Big\rangle\,\mathrm{d}s.
\end{align}

Furthermore, if $\mu_t$ admits a density $P_t(\bm{x},\bm{u})$, the following strong form PDE holds: 

\begin{equation}
\frac{\partial P_t}{\partial t}
=
-\nabla_{\bm{x}}\big(\boldsymbol{\mathcal{G}}_t\,P_t\big)
+ \frac{1}{2}\,\mathrm{Tr}\!\left(\mathbf{\Lambda}_t\,\nabla_{\bm{x}}^2 P_t\right).
\end{equation}

\subsection{Marginal probability densities} \label{marginaldensities}

For completeness, we provide the marginal probability densities of the first and second component estimates in Figure \ref{fig:pdf_x1} and Figure \ref{fig:pdf_x2}. These results supplement the joint density analysis of $(x_1, x_2)$ presented in the main text (Figure \ref{fig:1a} \ref{fig:1b}, \ref{fig:1c}) offering a detailed view of the individual coordinate distributions.
 \begin{figure}[t]
    \centering
    \begin{subfigure}[t]{\textwidth}
        \centering
        \includegraphics[width=\textwidth,height=0.20\textheight,keepaspectratio]{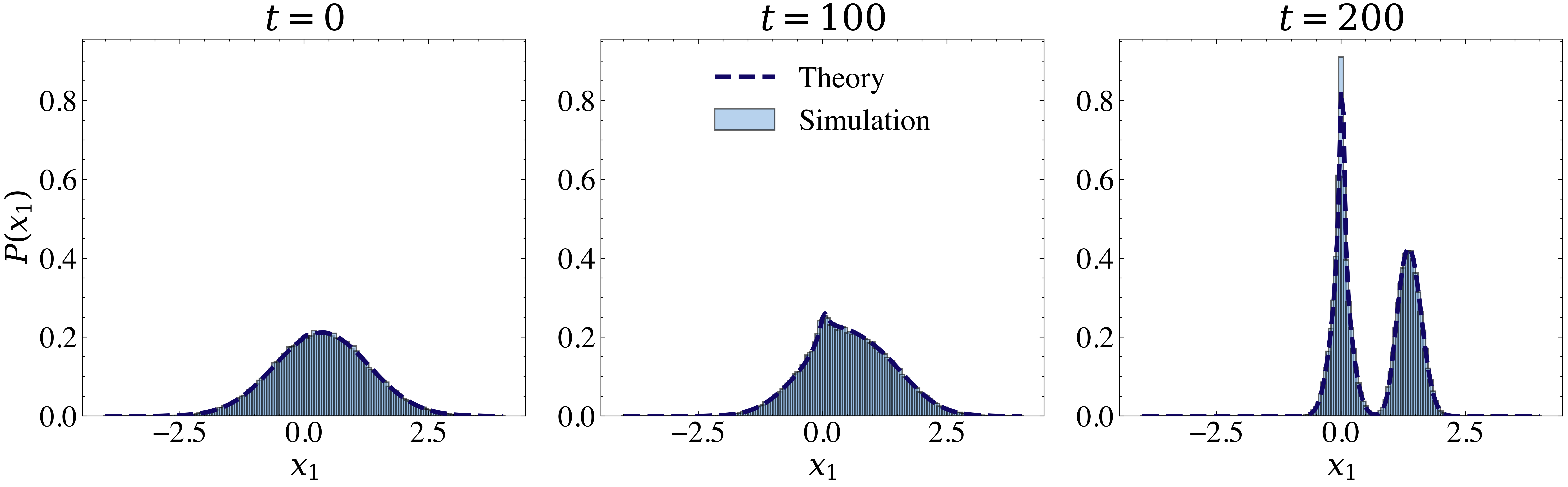}
        \caption{Marginal density of the first component \(x_1\) at times \(t = 0, 100,\) and \(200\).}
        \label{fig:pdf_x1}
    \end{subfigure}

    \vspace{0.8em}

    \begin{subfigure}[t]{\textwidth}
        \centering
        \includegraphics[width=\textwidth,height=0.20\textheight,keepaspectratio]{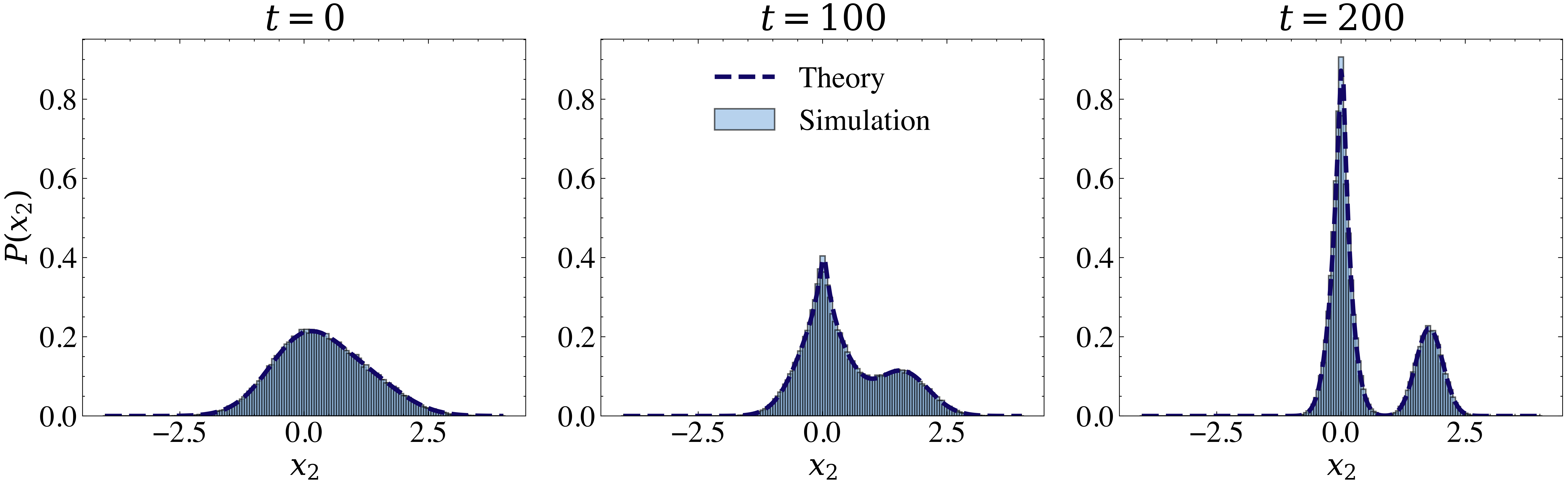}
        \caption{Marginal density of the second component \(x_2\) at times \(t = 0, 100,\) and \(200\).}
        \label{fig:pdf_x2}
    \end{subfigure}

    \caption{
        Evolution of the limiting marginal densities, corresponding to the total density in Figure \ref{figure1} in the main text. The setting corresponds to sparse component vectors $\mathbf{u}_1$ and $\mathbf{u}_2$, for $\phi(x) = 0.1sgn(x)$. Here we use the setting of Example~\ref{example:cubic}: we set $\beta_1=1$ for $c_1$, and set $\beta_2=0$ for $c_2$. In Monte Carlo simulations, component vectors are drawn from distributions ($\mathbb{P}(u=1/\sqrt{\rho_i}) = \rho_i$ and $\mathbb{P}(u=0) = 1-\rho_i$) with sparsity levels $\rho_1 = 0.5$ and $\rho_2 = 0.3$. The dashed dark blue curves correspond to the PDE prediction, while the light blue histograms represent Monte Carlo simulations. Here $n=5000$ and $\tau = 0.01$.}
    \label{fig:pdf_comparison}
\end{figure}

\begin{figure}[h!]
  \centering
  \begin{subfigure}[t]{0.30\textwidth}
    \centering
    \includegraphics[width=\linewidth]{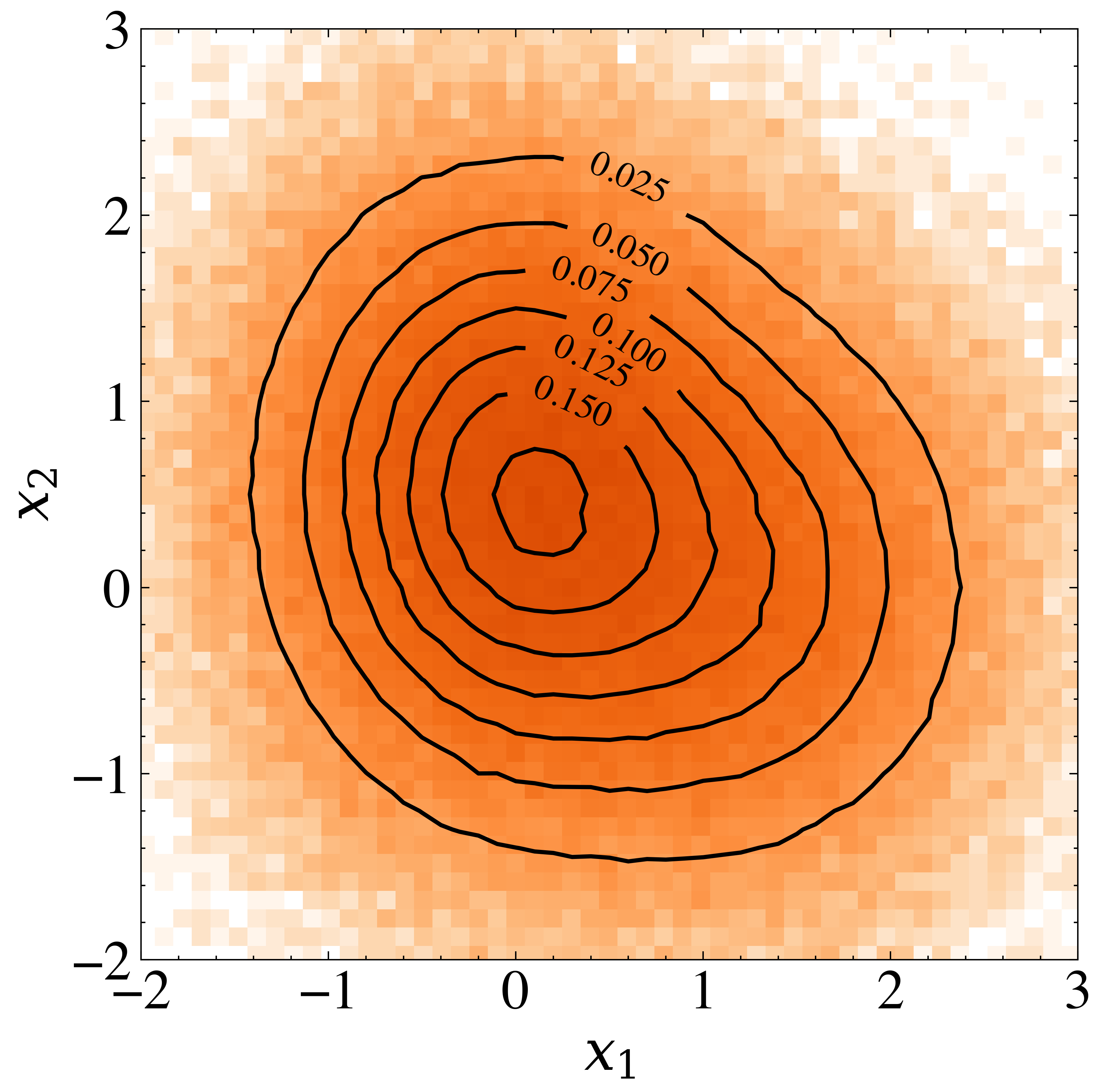}
    \caption{$t=0$}
    \label{fig:nonregul_heatmap_1}
  \end{subfigure}\hfill
  \begin{subfigure}[t]{0.30\textwidth}
    \centering
    \includegraphics[width=\linewidth]{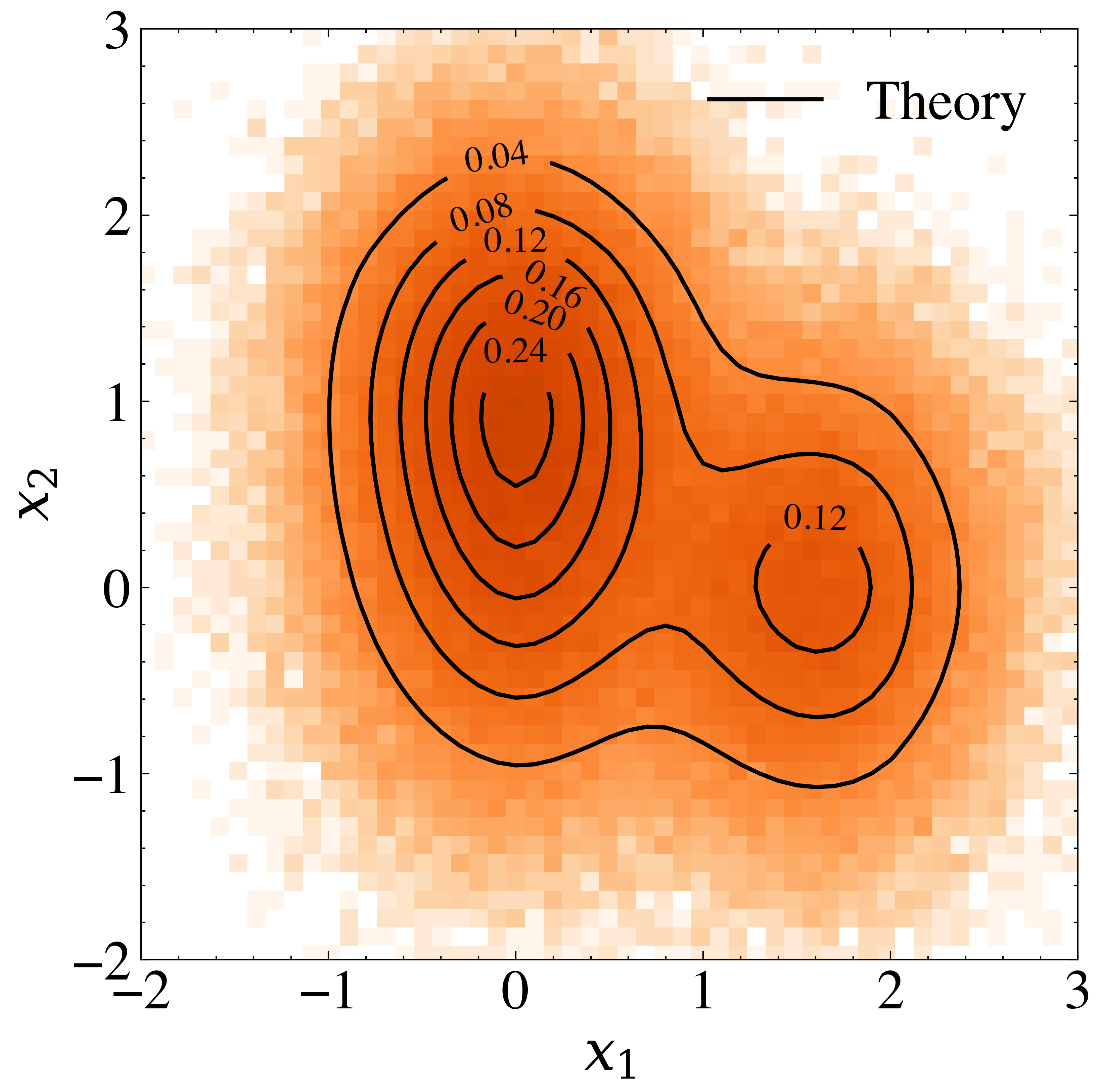}
    \caption{$t=100$}
    \label{fig:nonregul_heatmap_2}
  \end{subfigure}\hfill
  \begin{subfigure}[t]{0.385\textwidth}
    \centering
    \includegraphics[width=\linewidth]{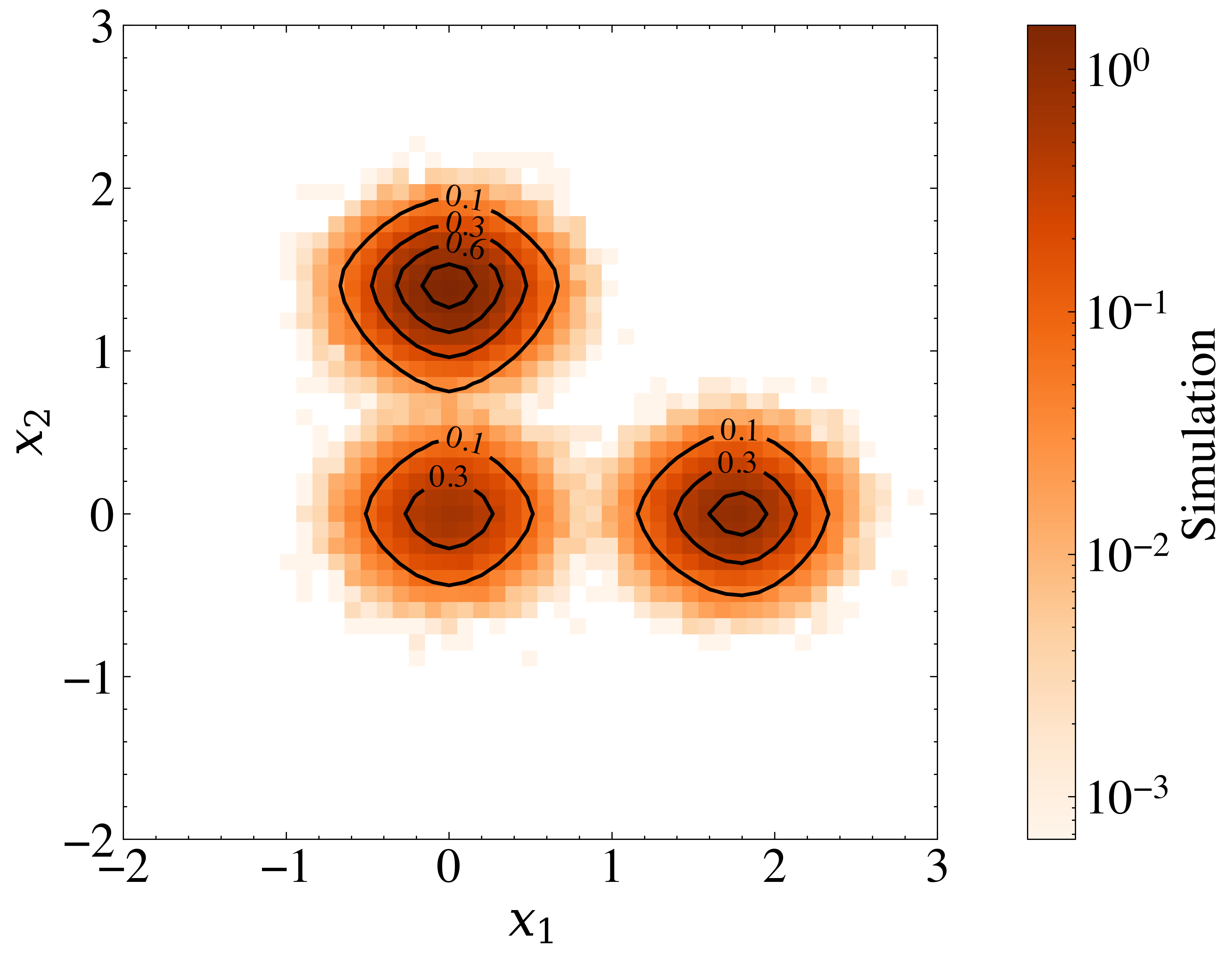}
    \caption{$t=200$}
    \label{fig:nonregul_heatmap_3}
  \end{subfigure}
  \caption{Evolution of the joint limiting probability density for $p=2$. Comparison between theoretical predictions (contours) and Monte Carlo simulations (heatmaps) for $P_t(x_1,x_2,u_1,u_2)$ at times $t=0,100,200$. The setting corresponds to sparse component vectors $\bm{u}_1$ and $\bm{u}_2$ for $\phi(x)= 0$. Here we use the setting of Example~\ref{example:cubic}. For $c_1$, we set $\beta_1=1$, for $c_2$, we set $\beta_2=0$. In simulations, component vectors are drawn from distributions ($\mathbb{P}(u=1/\sqrt{\rho_i}) = \rho_i$ and $\mathbb{P}(u=0) = 1-\rho_i$) with sparsity levels $\rho_1 = 0.5$ and $\rho_2 = 0.3$. This choice enables a clear visualization of the convergence behavior , with the components evolving toward three distinct regions in the $(\mathbf{x}_1,\mathbf{x}_2)$ space. Here, $n=1000$, $\tau = 0.01$}
  \label{nonregul_heatmap}
\end{figure}

\subsection{Derivation of Corollary \ref{corrolaryode}}\label{proofofcorrolary}
 \begin{figure}[h!]
    \centering
    \begin{subfigure}[t]{\textwidth}
        \centering
\includegraphics[width=\textwidth,height=0.20\textheight,keepaspectratio]{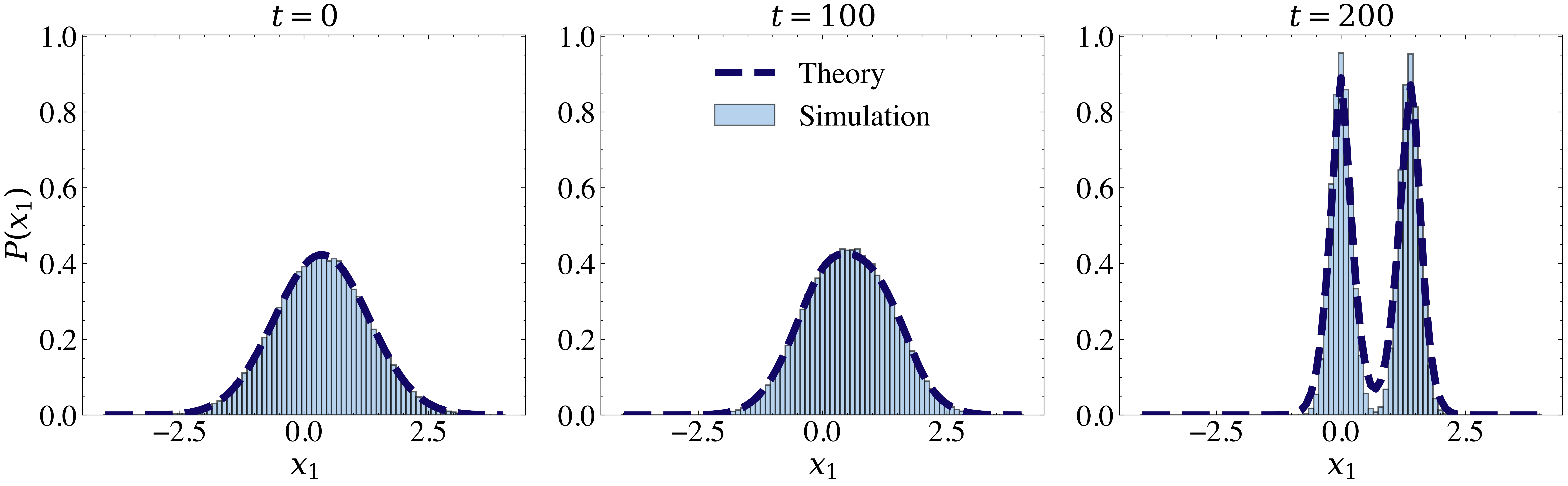}
        \caption{Marginal density of the first component \(x_1\) at times \(t = 0, 100,\) and \(200\).}
        \label{fig:non_regul_marginal1}
    \end{subfigure}

    \vspace{0.8em}

    \begin{subfigure}[t]{\textwidth}
        \centering
        \includegraphics[width=\textwidth,height=0.20\textheight,keepaspectratio]{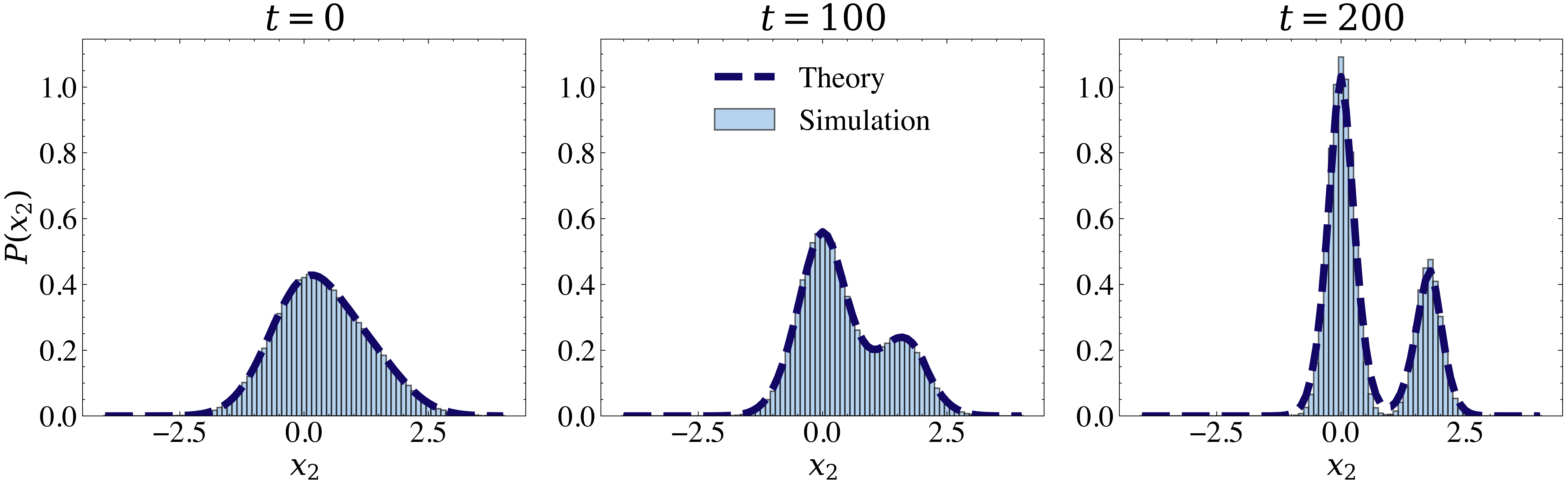}
        \caption{Marginal density of the second component \(x_2\) at times \(t = 0, 100,\) and \(200\).}
        \label{fig:non_regul_marginal2}
    \end{subfigure}
    \caption{
        Evolution of the limiting marginal densities corresponding to the total density in Figure \ref{nonregul_heatmap}. The setting corresponds to sparse component vectors $\mathbf{u}_1$ and $\mathbf{u}_2$, for $\phi(x) = 0$. Here we use the setting of Example~\ref{example:cubic}. For $c_1$, we set $\beta_1=1$, for $c_2$, we set $\beta_2=0$. In Monte Carlo simulations, component vectors are drawn from sparse distributions ($\mathbb{P}(u=1/\sqrt{\rho_i}) = \rho_i$ and $\mathbb{P}(u=0) = 1-\rho_i$) with sparsity levels $\rho_1 = 0.5$ and $\rho_2 = 0.3$. The dashed dark blue curves correspond to the PDE prediction, while the light blue histograms represent Monte Carlo simulations. Here $n = 1000$ and $\tau = 0.01$.}
    \label{fig:regularized_pdf}
\end{figure}

By selecting an appropriate test function $\varphi$ in \eqref{eq:weak_pde_main}, we can derive a closed-form system of ODEs. Specifically, we consider the case $\phi(x) = 0$, which consequently enforces $R_{i,j} = 0$. We provide additional total and marginal trajectories for the limiting PDE under $\phi(x) = 0$ in Figure \ref{nonregul_heatmap} and \ref{fig:regularized_pdf}, where our theory demonstrates close agreement with numerical simulations. To explicitly obtain the macroscopic evolution for the indices $(i,j)$, we instantiate the test function as

\begin{equation}
    \varphi_{l,m}(\bm{x},\bm{u}) := x_l u_m.
\end{equation}

Then the weak pairing recovers the overlap matrix: 
\begin{equation}
    \langle \varphi_{l,m},\mu_t\rangle = \int x_l u_m\,\mu_t(\mathrm{d}\bm{x},\mathrm{d}\bm{u}) = Q_{t,l,m}.
\end{equation}

Moreover, since $\varphi_{l,m}$ is linear in $\bm{x}$, we have $\nabla_{\bm{x}}^2\varphi_{l,m} \equiv 0$, so the diffusion term in \eqref{eq:weak_pde_main} vanishes. Therefore, the evolution reduces to

\begin{align}
\frac{d}{dt} Q_{t,l,m} &= \int u_m \, \omega_l \, \mu_t(\mathrm{d}\bm{x},\mathrm{d}\bm{u}).
\end{align}

This follows directly from \eqref{drfit_appendix} by setting $\phi(x) = 0 $ and $ R_{i,j} = 0$, yielding the ODE in summation form:
\begin{align}
\frac{d}{dt} Q_{lj}
&=
-\tau^2
\left(
\frac{1}{2} Q_{lj}\,\langle \gamma_l^2\rangle
+
\sum_{i<l} Q_{i,j}\,\langle \gamma_i \gamma_l\rangle
\right)
+\tau\,\psi_{l}^{(j)} -  \tau Q_{lj} \bm{Q}_l^\top \bm{\psi}_l  \notag \\
&\qquad \ \ \
-\tau
\sum_{i< l} Q_{i,j}
\Big(
\bm{Q}_i^\top \bm{\psi}_l
+
\bm{Q}_l^\top \bm{\psi}_i
\Big).
\end{align}

Expressing the above expression in matrix form using the previously defined matrices $\bm{C}_t$, $\bm{M}_t$ and $\bm{\Psi}_t$  yields the final closed-form ODE for the matrix $\bm{Q}_t$:
\begin{align}\label{eq:Q_ODE_appendix}
\frac{d}{dt}\,\bm{Q}
&=
-\frac{\tau^2}{2}
\mathcal{T}(\bm{C}_t)\bm{Q}
-\tau \mathcal{T}(\bm{M}_t)\bm{Q}+ \tau\,\bm{\Psi}_t^\top.
\end{align}

\section{Derivation of the ODE for cubic nonlinearity: $f(x) = \pm x^3$}\label{app:derivationforcube}

For the cubic nonlinearity $f(x)= \pm x^3$ and $p = 2$,  we explicitly derive the system of coupled ODEs. We begin by computing the expectations with respect to the random variables $c_{i,t}$ and $e_i$ for $i \in \{1,2\}$. For notational brevity, we omit the index $t$ and focus on the positive case; the derivation for $f(x)=-x^3$ follows immediately by symmetry. We start with the expression for $\gamma_i$:

\begin{align}
\gamma_i &= \Bigg(c_{1}Q_{i,1} + c_{2}Q_{i,2} + e_{1}\sqrt{1-Q_{i,1}^2 -Q_{i,2}^2}\Bigg)^3 ,\notag \\
&=
(c_{1}Q_{i,1})^3
+ (c_{2}Q_{i,2})^3
+ \Big(e_{1}\sqrt{1-Q_{i,1}^{2}-Q_{i,2}^{2}}\Big)^3 \notag \\[4pt]
&\qquad
+ 3(c_{1}Q_{i,1})^2(c_{2}Q_{i,2})
+ 3(c_{1}Q_{i,1})^2 e_{1}\sqrt{1-Q_{i,1}^{2}-Q_{i,2}^{2}} \notag \\[4pt]
&\qquad
+ 3(c_{2}Q_{i,2})^2(c_{1}Q_{i,1})
+ 3(c_{2}Q_{i,2})^2 e_{1}\sqrt{1-Q_{i,1}^{2}-Q_{i,2}^{2}} \notag \\[4pt]
&\qquad
+ 3\Big(e_{1}\sqrt{1-Q_{i,1}^{2}-Q_{i,2}^{2}}\Big)^2(c_{1}Q_{i,1})
+ 3\Big(e_{1}\sqrt{1-Q_{i,1}^{2}-Q_{i,2}^{2}}\Big)^2(c_{2}Q_{i,2}) \notag \\[4pt]
&\qquad
+ 6(c_{1}Q_{i,1})(c_{2}Q_{i,2})e_{1}\sqrt{1-Q_{i,1}^{2}-Q_{i,2}^{2}}.
\end{align}

For the derivative $\gamma_i^{'}$ we obtain the following:
\begin{align}
\gamma_i^{'} = 3\Bigg(c_{1}Q_{i,1} + c_{2}Q_{i,2} + e_{1}\sqrt{1-Q_{i,1}^2 -Q_{i,2}^2}\Bigg)^2.
\end{align}
Let $m_{i,j} := \langle c_i^j \rangle$ denote the $j$-th moment of the variable $c_i$, also recall that $\langle c_i \rangle = 0$  and $\langle c_i^2 \rangle = 1$, this yields the following expectations:
\begin{align}
\langle \gamma_i^{'} \rangle =  3(Q_{i,1}^2 + Q_{i,2}^2 + 1-Q_{i,1}^2 -Q_{i,2}^2) = 3,
\end{align}
\begin{align}\label{cgamma_i}
\langle c_1\gamma_i \rangle =  Q_{i,1}^3m_{1,4}  + 3Q_{i,2}^2Q_{i,1} + 3(1-Q_{i,1}^2 -Q_{i,2}^2)Q_{i,1} = Q_{i,1}^3(m_{1,4} - 3) + 3 Q_{i,1},
\end{align}
\begin{align}
\langle c_2\gamma_i \rangle =  Q_{i,2}^3(m_{2,4} - 3) + 3 Q_{i,2}.
\end{align}
Furthermore, we require the expectations of the squared terms $\gamma_i^2$ and cross term $\gamma_1\gamma_2$, which we calculate as follows:
\begin{align}
\langle \gamma_1^2 \rangle &=Q_{1,1}^6m_{1,6} + Q_{1,2}^6 m_{2,6} + 15(1-Q_{1,1}^2 -Q_{1,2}^2)^3 + 15Q_{1,2}^4(1-Q_{1,2}^2)m_{2,4} \notag \\
&\qquad + 15Q_{1,1}^4(1-Q_{1,1}^2)m_{1,4} 
+ 45(1-Q_{1,1}^2-Q_{1,2}^2)^2(Q_{1,1}^2 + Q_{1,2}^2) \notag \\
&\qquad 
+90Q_{1,1}^2Q_{1,2}^2(1-Q_{1,1}^2-Q_{1,2}^2) + 20Q_{1,2}^3Q_{1,1}^3 m_{1,3}m_{2,3} \  ,
\end{align}

\begin{align}
\langle \gamma_2^2 \rangle &=Q_{2,1}^6m_{1,6} + Q_{2,2}^6 m_{2,6} + 15(1-Q_{2,1}^2 -Q_{2,2}^2)^3 + 15Q_{2,2}^4(1-Q_{2,2}^2)m_{2,4} \notag \\
&\qquad
+ 15Q_{2,1}^4(1-Q_{2,1}^2)m_{1,4} 
 + 45(1-Q_{2,1}^2-Q_{2,2}^2)^2(Q_{2,1}^2 + Q_{2,2}^2)  \notag \\
&\qquad +90Q_{2,1}^2Q_{2,2}^2(1-Q_{2,1}^2-Q_{2,2}^2) +  20Q_{2,2}^3Q_{2,1}^3 m_{1,3}m_{2,3} \ ,
\end{align}

\begin{align}
\langle \gamma_1 \gamma_2 \rangle &= \Bigg\langle \Big(c_{1}Q_{1,1} + c_{2}Q_{1,2} + e_{1}\sqrt{1-Q_{1,1}^{2}-Q_{1,2}^{2}}\Big)^3 \notag \\ & \qquad \qquad  \times\Big(c_{1}Q_{2,1} + c_{2}Q_{2,2} + e_{2}\sqrt{1-Q_{2,1}^{2}-Q_{2,2}^{2}}\Big)^3 \Bigg\rangle ,\notag  \\
&= Q_{1,1}^3 \Bigg(Q_{2,1}^3m_{1,6}+Q_{2,2}^3m_{1,3}m_{2,3}+ 3(1-Q_{2,1}^2)Q_{2,1}m_{1,4}\Bigg) \notag \\
&\qquad + Q_{1,2}^3\Bigg(Q_{2,2}^3m_{2,6} +Q_{2,1}^3m_{1,3}m_{2,3} + 3(1-Q_{2,2}^2)Q_{2,2}m_{2,4}\Bigg) \notag \\
&\qquad
+3Q_{1,1}^2Q_{1,2}\Bigg(Q_{2,2}^3m_{2,4}+3Q_{2,1}^2Q_{2,2}m_{1,4} \notag \\ & \qquad \qquad \qquad \qquad \quad + 3Q_{2,2}^2Q_{2,1}m_{1,3}m_{2,3} + 3(1-Q_{2,1}^2-Q_{2,2}^2)Q_{2,2}\Bigg) \notag \\
&\qquad
+3Q_{1,2}^2Q_{1,1}\Bigg(Q_{2,1}^3m_{1,4}+3Q_{2,2}^2Q_{2,1}m_{2,4} \notag \\ & \qquad \qquad \qquad \qquad \quad + 3Q_{2,1}^2Q_{2,2}m_{1,3}m_{2,3} + 3(1-Q_{2,1}^2-Q_{2,2}^2)Q_{2,1}\Bigg) \notag \\
&\qquad
+ 3Q_{1,1}\Bigg(1-Q_{1,1}^2-Q_{1,2}^2\Bigg)\Bigg(Q_{2,1}^3m_{1,4}+3(1-Q_{2,1}^2)Q_{2,1}\Bigg) \notag \\
&\qquad
+3Q_{1,2}\Bigg(1-Q_{1,1}^2-Q_{1,2}^2\Bigg)\Bigg(Q_{2,2}^3m_{2,4} + 3(1-Q_{2,2}^2)Q_{2,2}\Bigg) \ .
\end{align}

To streamline the presentation of the ODEs, we define the following functions representing the expectations of $\gamma_i\gamma_j$:
\begin{align}
    F(Q_{1,1},Q_{2,1},Q_{1,2},Q_{2,2}) &\coloneqq \langle \gamma_1 \gamma_2 \rangle, \\[1em]
    W(Q_{i,1},Q_{i,2}) &\coloneqq \langle \gamma_i^2 \rangle.
\end{align}
We also define the function $D$ for notational convenience, which appears in the drift coefficient of the second estimate:
\begin{align}
    D(Q_{1,1},Q_{2,1},Q_{1,2},Q_{2,2}) &= Q_{1,1}Q_{2,1}^3(m_{1,4} -3) + Q_{1,2}Q_{2,2}^3(m_{2,4}-3) \notag \\ 
      &\qquad + Q_{2,1}Q_{1,1}^3(m_{1,4}-3) + Q_{2,2}Q_{1,2}^3(m_{2,4}-3).
\end{align}
Finally, for the drift and diffusion terms we arrive at the following final expressions:
\begin{align}
\omega_{2} &= \tau\Big(3x_{2}^{(j)} + u_1^{(j)}Q_{2,1}^3(m_{1,4}-3) + u_2^{(j)}Q_{2,2}^3(m_{2,4}-3)\Big) - \frac{x_{2}^{(j)}\tau^2}{2}W(Q_{2,1},Q_{2,2}) \notag \\
&\qquad - x_{1}^{(j)}\tau^2F(Q_{1,1},Q_{2,1},Q_{1,2},Q_{2,2}) -x_{1}^{(j)}\tau D(Q_{1,1},Q_{2,1},Q_{1,2},Q_{2,2}) \notag \\
&\qquad -x_{2}^{(j)}\tau\Big(3 + Q_{2,1}^4[m_{1,4}-3] +  Q_{2,2}^4[m_{2,4}-3]\Big) ,\notag \\
&= \tau\Big(u_1^{(j)}Q_{2,1}^3(m_{1,4}-3) + u_2^{(j)}Q_{2,2}^3(m_{2,4}-3)\Big) - \frac{x_{2}^{(j)}\tau^2}{2}W(Q_{2,1},Q_{2,2}) \notag \\
&\qquad - x_{1}^{(j)}\tau^2F(Q_{1,1},Q_{2,1},Q_{1,2},Q_{2,2}) 
-x_{1}^{(j)}\tau D(Q_{1,1},Q_{2,1},Q_{1,2},Q_{2,2}) \notag \\
&\qquad -x_{2}^{(j)}\tau\Big(Q_{2,1}^4[m_{1,4}-3] +  Q_{2,2}^4[m_{2,4}-3]\Big) ,
\end{align}
Consequently, for the first estimate we get
\begin{align}
\omega_{1} &= \tau\Big(u_1^{(j)}Q_{1,1}^3(m_{1,4}-3) + u_2^{(j)}Q_{1,2}^3(m_{2,4}-3)\Big) \notag \\
&\qquad - \frac{x_{1}^{(j)}\tau^2}{2}W(Q_{1,1}, Q_{1,2}) -x_{1}^{(j)}\tau\Big( Q_{1,1}^4(m_{1,4} - 3) + Q_{1,2}^4(m_{2,4}-3)\Big).
\end{align}
Combining these results, we arrive at the following system of coupled ODEs for $f(x)= +x^3$:
\begin{align}
\frac{d}{dt} Q_{1,1}
&= \tau Q_{1,1}^3(m_{1,4}-3) - \frac{Q_{1,1}\tau^2}{2}W(Q_{1,1}, Q_{1,2}) \notag \\ & \qquad - \tau Q_{1,1}\Big(Q_{1,1}^4(m_{1,4} - 3) + Q_{1,2}^4(m_{2,4}-3)\Big) ,\\
\frac{d}{dt} Q_{1,2}
&=\tau Q_{1,2}^3(m_{2,4}-3) - \frac{Q_{1,2}\tau^2}{2}W(Q_{1,1}, Q_{1,2}) \notag \\ & \qquad - \tau Q_{1,2}\Big(Q_{1,1}^4(m_{1,4} - 3) + Q_{1,2}^4(m_{2,4}-3)\Big) ,\\
\frac{d}{dt} Q_{2,1}
&= \tau Q_{2,1}^3 (m_{1,4}-3)
   - \frac{\tau^2}{2} Q_{2,1}\, W(Q_{2,1}, Q_{2,2})
   \notag \\ & \qquad - \tau^2 Q_{1,1} F(Q_{1,1}, Q_{2,1}, Q_{1,2}, Q_{2,2}) \notag \\
&\qquad
   - \tau Q_{1,1}\, D(Q_{1,1}, Q_{2,1}, Q_{1,2}, Q_{2,2}) \notag \\ & \qquad -\tau Q_{2,1}\Big(Q_{2,1}^4(m_{1,4}-3) +  Q_{2,2}^4(m_{2,4}-3)\Big),\\ 
\frac{d}{dt} Q_{2,2}
&= \tau\big(Q_{2,2}^3 (m_{2,4}-3)\big)
   - \frac{\tau^2}{2} Q_{2,2}\, W(Q_{2,1}, Q_{2,2}) \notag \\ & \qquad
   - \tau^2 Q_{1,2}\, F(Q_{1,1}, Q_{2,1}, Q_{1,2}, Q_{2,2}) \notag\\
&\qquad
   - \tau Q_{1,2}\, D(Q_{1,1}, Q_{2,1}, Q_{1,2}, Q_{2,2}) \notag\\ & \qquad -\tau Q_{2,2}\Big(Q_{2,1}^4(m_{1,4}-3) +  Q_{2,2}^4(m_{2,4}-3)\Big).
\end{align}

Consequently, it follows directly for $f(x)= \pm x^3$:
\begin{align}\label{app:ODE_CUBIC}
\frac{dQ_{1,1}}{dt} 
&=  \pm \tau Q_{1,1}^3(m_{1,4}-3) - \frac{Q_{1,1}\tau^2}{2}W(Q_{1,1}, Q_{1,2}) \notag\\ & \qquad \mp \tau Q_{1,1}\Big(Q_{1,1}^4(m_{1,4} - 3) + Q_{1,2}^4(m_{2,4}-3)\Big) ,\\
\frac{dQ_{1,2}}{dt} 
&=\pm \tau Q_{1,2}^3(m_{2,4}-3) - \frac{Q_{1,2}\tau^2}{2}W(Q_{1,1}, Q_{1,2}) \notag\\& \qquad \mp \tau Q_{1,2}\Big(Q_{1,1}^4(m_{1,4} - 3) + Q_{1,2}^4(m_{2,4}-3)\Big),\\
\frac{dQ_{2,1}}{dt} 
&= \pm \tau Q_{2,1}^3 (m_{1,4}-3)
   - \frac{\tau^2}{2} Q_{2,1}\, W(Q_{2,1}, Q_{2,2}) \notag\\ & \qquad
   - \tau^2 Q_{1,1}\, F(Q_{1,1}, Q_{2,1}, Q_{1,2}, Q_{2,2}) \notag\\
&\qquad
   \mp \tau Q_{1,1}\, D(Q_{1,1}, Q_{2,1}, Q_{1,2}, Q_{2,2}) \notag\\ & \qquad\mp \tau Q_{2,1} \Big(Q_{2,1}^4(m_{1,4}-3) +  Q_{2,2}^4(m_{2,4}-3)\Big) ,\\
\frac{dQ_{2,2}}{dt} 
&= \pm \tau Q_{2,2}^3 (m_{2,4}-3)
   - \frac{\tau^2}{2} Q_{2,2}\, W(Q_{2,1}, Q_{2,2}) \notag\\ & \qquad
   - \tau^2 Q_{1,2}\, F(Q_{1,1}, Q_{2,1}, Q_{1,2}, Q_{2,2}) \notag \\
&\qquad
   \mp \tau Q_{1,2}\, D(Q_{1,1}, Q_{2,1}, Q_{1,2}, Q_{2,2}) \notag\\ & \qquad \mp \tau Q_{2,2} \Big(Q_{2,1}^4(m_{1,4}-3) +  Q_{2,2}^4(m_{2,4}-3)\Big).
\end{align}

\section{Steady state analysis}\label{app:steadystate}

\subsection{Learnability boundary} \label{app:boundaryproof}

For the cubic branch, the steady-state condition $\dot{\bm{Q}}_t = 0$ in \eqref{app:ODE_CUBIC} yields a seventh-order polynomial in $Q_{j,j}$. By substituting $q = Q_{j,j}^2$ for the diagonal entries---consistent with the first regime---the dynamics reduce to the form $\dot{Q}_{j,j} = Q_{j,j} P(q)$, where $P(q) = \xi q^3 + \eta  q^2 + \zeta q + \varpi$ is a cubic polynomial with $\varpi =15$. A non-trivial steady state, representing the learning phase where $\bm{Q} \neq 0$, exists if and only if $P(q) = 0$ admits at least one positive real root.

For a cubic polynomial $ P(q) = \xi q^3 + \eta  q^2 + \zeta q +  \varpi = 0$  where $\varpi  > 0 $ , to have at least one positive root, we have to consider two cases : $\xi > 0 $ vs. $\xi < 0$. 
 \paragraph{Case1 ;  $\xi > 0 $}

Since $P(0) = \varpi > 0$ and $\lim_{q \to \infty} P(q) = \infty$, the curve begins positive and ends positive. For a positive real root to exist, the local minimum must dip below the q-axis. The stationary points are found where $P'(q) = 0$:
    \begin{equation}
        3\xi q^2 + 2\eta q + \zeta = 0 \implies q_{\text{min}} = \frac{-\eta  + \sqrt{\eta ^2 - 3\xi\zeta}}{3\xi} \ .
    \end{equation}
    (Requires $\eta ^2 - 3\xi\zeta > 0$ for real turning points).

    For roots to exist, the value at the minimum must be non-positive:
    \begin{equation}
        P(q_{\text{min}}) \leq 0 \ .
    \end{equation}
    Substituting $q_{\text{min}}$ into $P(q)$ yields the condition that the cubic discriminant $\Delta$ must be non-negative.
    \begin{equation}
        \Delta =         \eta ^2 \zeta^2 - 4\xi\zeta^3 - 4\eta ^3\varpi - 27\xi^2\varpi^2 + 18\xi \eta \zeta\varpi \geq 0 \ .
    \end{equation}
We also require that this real positive solution lies between our physical meaningful space.  We analyze this by transforming the system into the reciprocal function $y = 1/q$.

    Let $R(y) = q^3 P(1/q)$. The coefficients reverse order:
    \begin{equation}
        R(y) =\varpi y^3 + \zeta y^2 + \eta  y + \xi \ .
    \end{equation}

    The slope of the reciprocal function is
    \begin{align}
        R'(y) &= 3\varpi y^2 + 2\zeta y + \eta  , \\
    R'(1) &= 3\varpi + 2\zeta + \eta  \ .
    \end{align}

    The condition $3\varpi + 2\zeta + \eta  < 0$ implies $R'(1) < 0$.
    Since $\varpi> 0$, $R(y) \to +\infty$ as $y \to \infty$. A negative slope at $y=1$ indicates that the local dip (and thus the root $y^*$) must occur at a value greater than 1:
    \begin{equation}
        y^* > 1 \implies \frac{1}{q^*} > 1 \implies q^* < 1 \ .
    \end{equation}
    Thus, the condition defines the region where the positive root is confined to the interval $(0, 1)$.

\paragraph{Case 2: $\xi< 0$}
In this regime, since $P(0) = \varpi > 0$ and $\lim_{q \to \infty} P(q) = -\infty$, the Intermediate Value Theorem guarantees the existence of at least one positive real root $q^* > 0$. 

To ensure this root lies within the meaningful interval $q^* \in (0, 1)$, we examine the reciprocal polynomial. Since $R(y) \to \infty$ as $y \to \infty$, the condition $R(1) < 0$ is sufficient to ensure that the root $y^*$ (where $R(y^*) = 0$) satisfies $y^* > 1$, which implies $q^* < 1$.

The condition $R(1) < 0$ translates to:
\begin{equation}
    \varpi + \zeta+ \eta  + \xi < 0
\end{equation}
However, \textbf{Case 2 is physically inadmissible in our context}. Given that $\xi+\eta +\zeta+\varpi = (m_6 - 15) + 15 < 0$, it follows that $m_6 < 0$. Since the sixth moment $m_6$ represents an even power expectation, it is non-negative by definition ($m_6 \geq 0$). This contradiction eliminates Case 2, leaving Case 1 as the sole determinant of our learnability boundary.

\subsection{Competition boundary} \label{app:subsec_sompeition_boundary}
To investigate the effect of initialization on competition dynamics, we begin by analyzing the dynamics of the first row of $\bm{Q} \in \mathbb{R}^{2\times2}$, namely $Q_{1,1}$ and $Q_{1,2}$, as these govern the evolution of the second row ($Q_{2,1}$ and $Q_{2,2}$). The first row satisfies the ODE system:
\begin{align}
    \frac{dQ_{1,1}}{dt} &=   \tau\Big( \langle c_1\gamma_1 \rangle- Q_{1,1}\langle \gamma_1'\rangle\Big) - \frac{\tau^2}{2}Q_{1,1}\langle \gamma_1^2\rangle \notag \\ & \qquad - \tau Q_{1,1} \Big(Q_{1,1}(\langle c_1\gamma_1 \rangle - Q_{1,1}\langle \gamma_1^{'} \rangle) + Q_{1,2}(\langle c_2\gamma_1 \rangle - Q_{1,2}\langle \gamma_1^{'} \rangle)\Big),
\end{align}
\begin{align}
    \frac{dQ_{1,2}}{dt} &= \tau\Big( \langle c_2\gamma_1 \rangle- Q_{1,2}\langle \gamma_1'\rangle\Big) - \frac{\tau^2}{2}Q_{1,2}\langle \gamma_1^2\rangle \notag \\ & \qquad - \tau Q_{1,2} \Big(Q_{1,1}(\langle c_1\gamma_1 \rangle - Q_{1,1}\langle \gamma_1^{'} \rangle) + Q_{1,2}(\langle c_2\gamma_1 \rangle - Q_{1,2}\langle \gamma_1^{'} \rangle)\Big).
\end{align}

To understand the boundary between the basins of attraction for the two components, we analyze the relative growth rates of the order parameters $Q_{1,1}$ and $Q_{1,2}$. Dividing the update equations for $\dot{Q}_{1,1}$ and $\dot{Q}_{1,2}$ by $Q_{1,1}$ and $Q_{1,2}$ respectively yields the logarithmic derivatives. Physically, the term $\frac{\dot{Q}}{Q}$ represents the relative growth rate of the component, rather than its absolute speed:
\begin{equation}
    \frac{d}{dt} \ln(Q_{1,i}) = \frac{\dot{Q}_{1,i}}{Q_{1,i}}.
\end{equation}
This normalization allows us to compare the competitive advantage of one component over the other, independent of their current magnitudes. To find the separatrix, we examine the evolution of the ratio between the two components. We compute the difference between their relative growth rates:
\begin{equation}
    \frac{d}{dt} \ln\left(\frac{Q_{1,2}}{Q_{1,1}}\right) = \frac{\dot{Q}_{1,2}}{Q_{1,2}} - \frac{\dot{Q}_{1,1}}{Q_{1,1}}.
\end{equation}
Substituting the explicit expressions for $\frac{\dot{Q}_{1,1}}{Q_{1,1}}$ and $\frac{\dot{Q}_{1,2}}{Q_{1,2}}$:
\begin{align}
    \frac{\dot{Q}_{1,1}}{Q_{1,1}} &= \tau \Big( \frac{\langle c_1 \gamma_1\rangle}{Q_{1,1}}-\langle \gamma_1^\prime \rangle\Big) \notag \\ & \qquad -\tau\Big(Q_{1,1}(\langle c_1\gamma_1 \rangle - Q_{1,1}\langle \gamma_1^{'} \rangle) + Q_{1,2}(\langle c_2\gamma_1 \rangle - Q_{1,2}\langle \gamma_1^{'} \rangle)\Big) - \frac{\tau^2}{2}\langle \gamma_1^2\rangle ,\\
    \frac{\dot{Q}_{1,2}}{Q_{1,2}} &= \tau \Big(\frac{\langle  c_2 \gamma_1\rangle}{Q_{1,2}} - \langle \gamma_1^\prime \rangle\Big) \notag \\ & \qquad- \tau \Big(Q_{1,1}(\langle c_1\gamma_1 \rangle - Q_{1,1}\langle \gamma_1^{'} \rangle) + Q_{1,2}(\langle c_2\gamma_1 \rangle - Q_{1,2}\langle \gamma_1^{'} \rangle)\Big) - \frac{\tau^2}{2}\langle \gamma_1^2\rangle.
\end{align}
Subtracting these two equations eliminates the second and the third term, yielding
\begin{equation}
    \frac{\dot{Q}_{1,2}}{Q_{1,2}} - \frac{\dot{Q}_{1,1}}{Q_{1,1}} = \tau \left( \frac{\langle c_2\gamma_1 \rangle}{Q_{1,2}} - \frac{\langle c_1 \gamma_1\rangle}{Q_{1,1}} \right).
\end{equation}
The separatrix is defined as the boundary where the ratio between the components is stationary ($\frac{d}{dt} (Q_{1,2}/Q_{1,1}) = 0$). Setting the difference to zero yields
\begin{equation}
    \tau \left( \frac{\langle c_2\gamma_1 \rangle}{Q_{1,2}} - \frac{\langle c_1 \gamma_1\rangle}{Q_{1,1}} \right) = 0 \implies \frac{\langle c_2\gamma_1 \rangle}{Q_{1,2}} =\frac{\langle c_1 \gamma_1\rangle}{Q_{1,1}}.
\label{sep_line}
\end{equation}
This equation remains valid for an arbitrary nonlinearity, as each $\langle c_i\gamma_1\rangle$ is governed by its respective function $\gamma(\cdot)$. Consequently, this relationship defines the separatrix of the system. For cubic nonlinearity $f(x) = \pm x^3$, we can substitute our result in Equation \eqref{cgamma_i}, obtaining
\begin{align}
    \frac{Q_{1,1}^3(m_{1,4}-3)}{Q_{1,1}} = \frac{Q_{1,2}^3(m_{2,4}-3)}{Q_{1,2}},
\end{align}

Finally, we arrive at the following line equation for the competition boundary: 
\begin{align} 
    Q_{1,2} = \pm \sqrt{\frac{(m_{1,4}-3)}{(m_{2,4}-3)}}Q_{1,1}.
\end{align}

Any deviation from this line creates a positive feedback loop where the dominant component grows faster, driving the system away from the line and towards one of the stable fixed points. Thus, this line acts as the separatrix between the two outcomes.

\subsection{Further analysis of the decoupled regime}
In the decoupled initialization regime with $p=2$ and cubic nonlinearity, we further analyze the stability of the fixed points using Jacobian analysis, numerically. In this regime, Jacobian matrix is diagonal with all off-diagonal entries equal to zero. Thus, the eigenvalues are simply the partial derivatives of the functions governing the ODEs, evaluated at the fixed points. Choosing a representative entry $Q_{1,1}$ we evaluate the partial derivative of the governing function ($h_{1,1}  = \dot{Q}_{1,1}$) for the first component:
\begin{align}
\frac{1}{\tau}\frac{\partial h_{1,1}}{\partial Q_{1,1}} &= -3Q_{1,1}^2 (m_{1,4} - 3 )- \frac{\tau}{2} \left( W(Q_{1,1}, 0) + Q_{1,1}\left. \frac{\partial W(Q_{1,1},Q_{1,2})}{\partial Q_{1,1}} \right|_{Q_{1,2} = 0}\right)\notag \\ & \qquad + 5Q_{1,1}^4(m_{1,4}-3) \ ,
\end{align}
where 
\begin{align}
\frac{\partial W(Q_{1,1},Q_{1,2})}{\partial Q_{1,1}}\Bigg|_{Q_{1,2} = 0} =  - 90 Q_{1,1}^{5} m_{1,4} + 6 Q_{1,1}^{5} m_{1,6} + 180 Q_{1,1}^{5} + 60 Q_{1,1}^3 m_{1,4} - 180 Q_{1,1}^3 ,
\end{align}
and
\begin{align}
 W(Q_{1,1}, 0) &=Q_{1,1}^6m_{1,6} + 15(1-Q_{1,1}^2)^3  + 15Q_{1,1}^4(1-Q_{1,1}^2)m_{1,4} \notag \\ & \qquad
+ 45(1-Q_{1,1}^2)^2(Q_{1,1}^2).
\end{align}

Finally, for the partial derivative expression we arrive at 
\begin{align}
\frac{1}{\tau}\frac{\partial h_{1,1}}{\partial Q_{1,1}} &= Q_{1,1}^6 \frac{7\tau}{2} (-m_{1,6} +15 m_{1,4} - 30 ) + Q_{1,1}^4 (m_{1,4} - 3)  (-\frac{75\tau}{2}   + 5 )  \notag \\ & \qquad- 3Q_{1,1}^2(m_{1,4}-3) - \frac{15\tau}{2} \ .
\end{align}
We numerically determine the fixed points by identifying the roots where the evolution equations vanish. To assess stability, we evaluate the sign of the partial derivative $\frac{\partial h_{1,1}}{\partial Q_{1,1}}$ at these points. Consequently, a fixed point is stable if
\begin{equation}
    \left. \frac{\partial h_{1,1}}{\partial Q_{1,1}} \right|_{Q_{1,1}^*} < 0 \ .
\end{equation}

\section{Experimental details: ICA for hyperspectral remote sensing }\label{app:experimental}
\subsection{Pre-processing}
We utilize the Indian Pines hyperspectral dataset, denoted as a tensor $\mathcal{Y} \in \mathbb{R}^{H \times W \times B}$, where $B=224$. We use the \emph{corrected} version of the dataset, in which water absorption and high-noise spectral bands have been removed to ensure signal quality, resulting in an spectral dimension of $n=200$. The data are flattened into a matrix $\bm{Y}_{\text{raw}} \in \mathbb{R}^{N \times n}$, where $N$ is the total number of pixels, and centered by subtracting the mean, yielding $\bm{Y}_c$. To whiten the data, we compute the empirical covariance matrix $\bm{\Sigma} = \frac{1}{N} \bm{Y}_{\text{c}}^\top \bm{Y}_{\text{c}}$ and perform Singular Value Decomposition (SVD):
\begin{equation}
    \bm{\Sigma} = \bm{V}_s \bm{S} \bm{V}_s^\top.
\end{equation}
where $\bm{V}_s$ is an orthogonal matrix containing the singular vectors (representing the principal components of the data), and $\bm{S}$ is a diagonal matrix of the corresponding singular values (representing the variance along each component). The whitening matrix is constructed as $\bm{W} = \mathbf{S} ^{-1/2} \bm{V}_s^\top$. The whitened dataset is obtained via the linear transformation $\tilde{\bm{Y}} = \mathbf{Y}_{\text{c}} \bm{W}^\top$, ensuring that the covariance of $\tilde{\bm{Y}}$ is the identity matrix $\bm{I}$.

We derive a set of ground truth vectors $\bm{u}_i$ from the labeled hyperspectral classes. Let $\mathcal{C}_i$ denote the set of pixel indices belonging to class $i$, and $\tilde{\bm{y}}_l$ denote $l-th$ row of the matrix $\bm{\tilde{Y}}$. We compute the class centroids in the whitened space:
\begin{equation}
    \bm{u}_i = \frac{1}{|\mathcal{C}_i|} \sum_{l \in \mathcal{C}_i} \tilde{\mathbf{y}}_l.
\end{equation}
Since raw class centroids may be correlated, we enforce orthogonality to satisfy the ICA independence assumption. For this, we construct the matrix $\bm{U} = [\bm{u}_1, \dots, \bm{u}_p]$ and apply QR decomposition. The resulting orthogonal components are then re-scaled to maintain a norm of $\sqrt{n}$. Having established the independent ground truth components, we construct the data as described in~\eqref{data_generation}. For $p=2$, we select Classes~6 and~13, corresponding to Grass Trees and Wheat in the Indian Pines scene, as our two ground-truth components. In the setting of Example~\ref{example:cubic}, we set $\beta_1 = 1$ for Class 13 and $\beta_2 = 0.6$ for Class 6, with learning rate $\tau = 0.01$.

\subsection{Post-processing}
After obtaining the learned components which we reshape back to the spatial grid to yield activation maps $\mathbf{A} \in \mathbb{R}^{H \times W \times 2}$. Since online ICA recovers the independent components only up to permutation and sign, we align each learned component with a ground-truth class by computing a matching score against the centroid-derived targets. To enable qualitative assessment of recovery, the activation maps are plotted using a globally shared color scale. The corresponding ground-truth binary mask is then displayed as a localized inset overlaid directly on the learned activation map to facilitate direct visual comparison.

\section{Competition in other orthogonalization schemes and nonlinearities} \label{app:sym_orth}

\begin{figure}[h]
  \centering
  \begin{subfigure}[t]{0.33\textwidth}
    \centering
    \includegraphics[width=\linewidth]{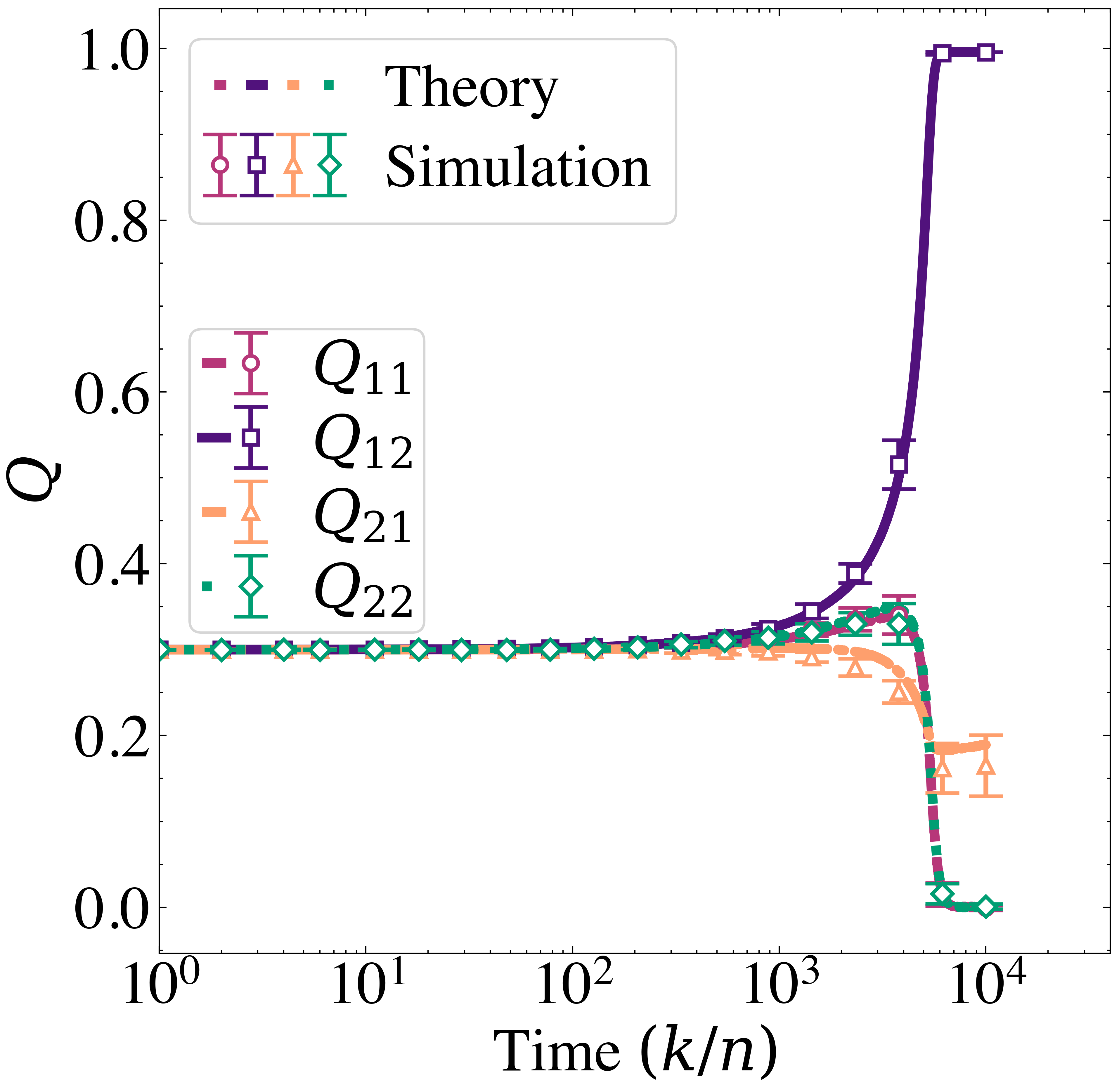}
    \caption{$f(x) =\tanh(x)$ with GS} 
    \label{fig:tanh}
  \end{subfigure}\hfill
    \begin{subfigure}[t]{0.33\textwidth}
    \centering
    \includegraphics[width=\linewidth]{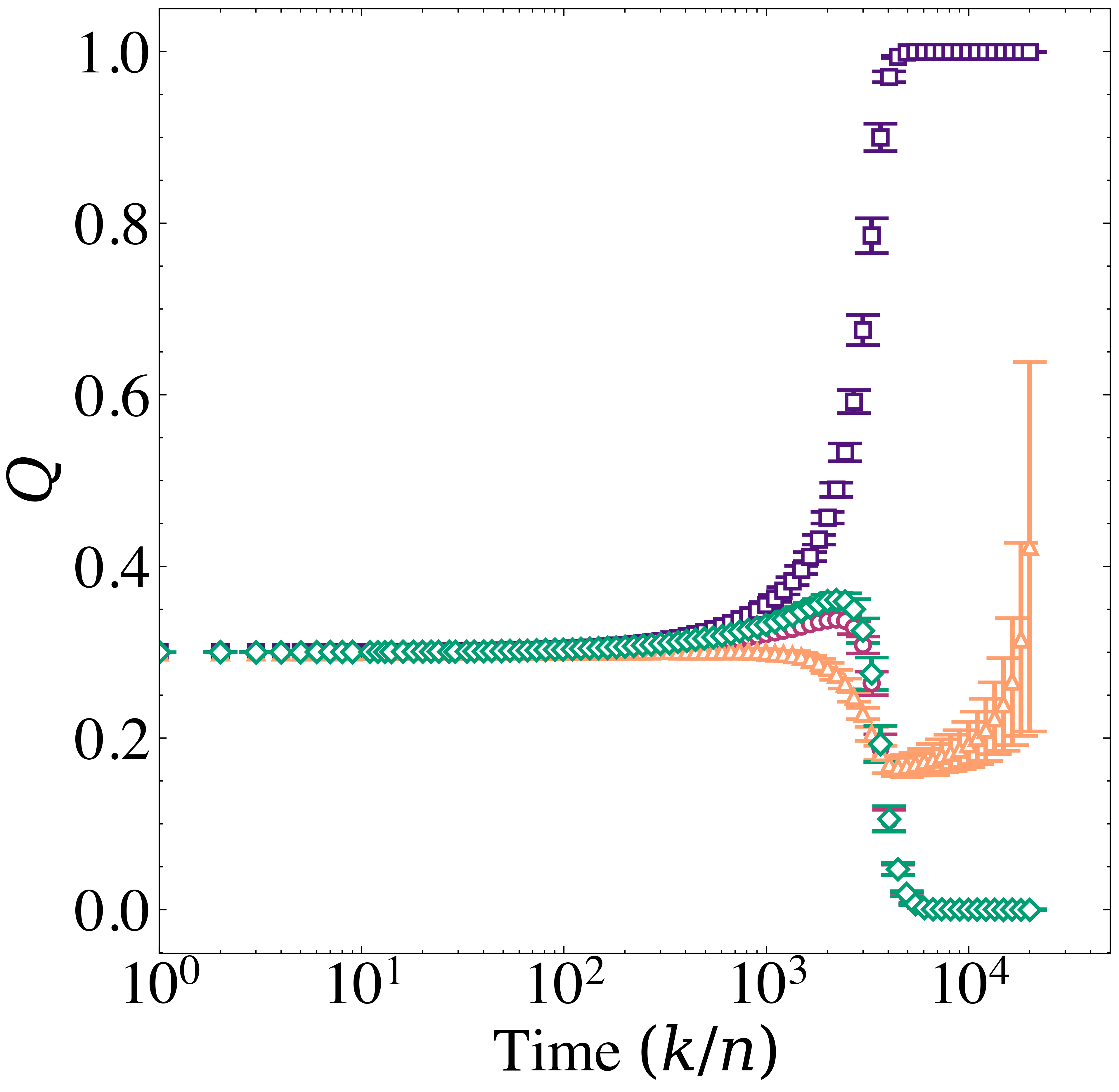}
    \caption{QR with Householder (sequential)} 
    \label{fig:QR}
  \end{subfigure}\hfill
  \begin{subfigure}[t]{0.33\textwidth}
    \centering
    \includegraphics[width=\linewidth]{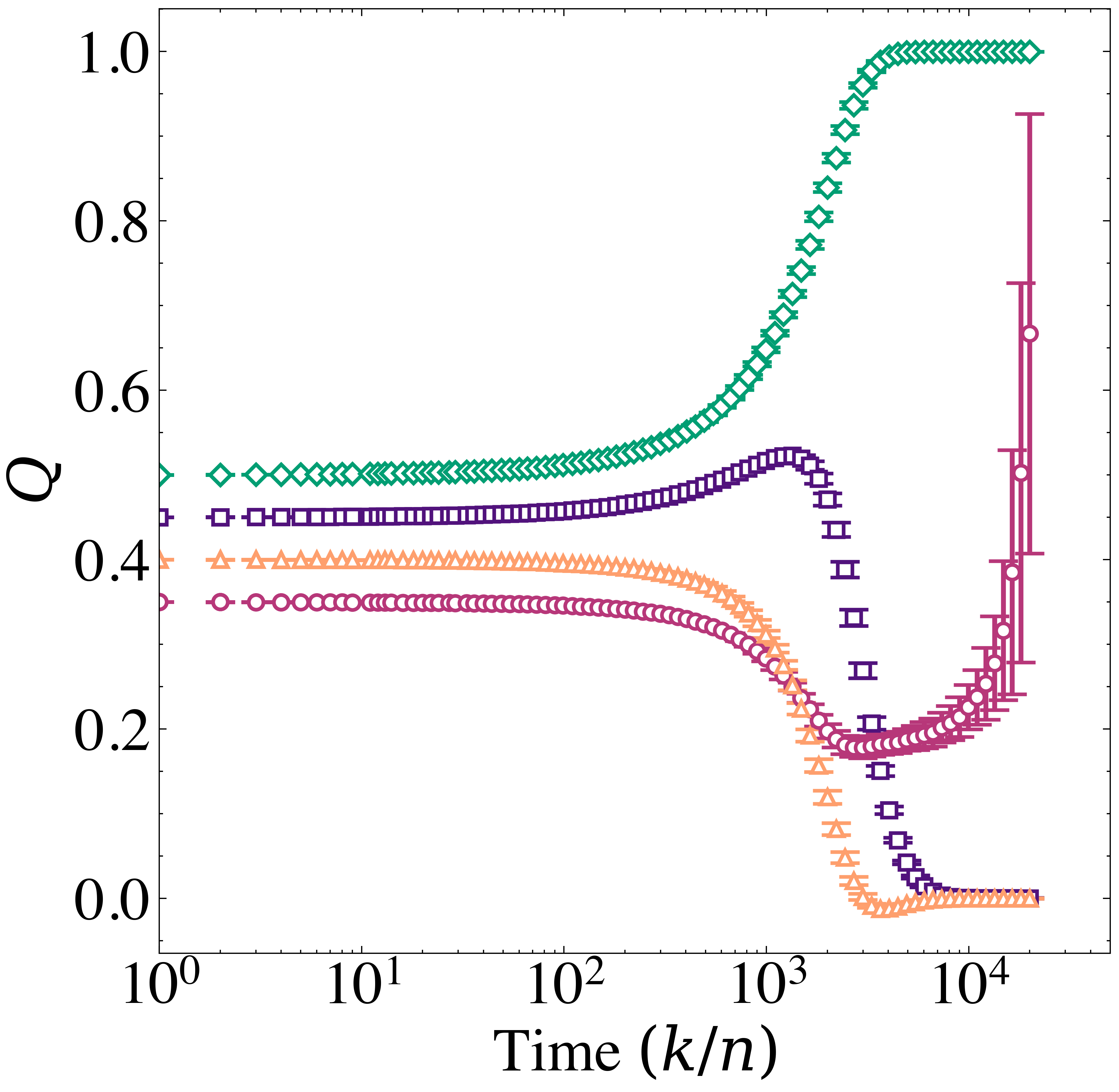}
    \caption{Löwdin via polar (symmetric)} 
    \label{fig:löwdin}
  \end{subfigure}\hfill
\caption{Competition is also visible across other nonlinearities and orthogonalization schemes. In a) our limiting ODE and the simulations are plotted with $f(x) = \tanh(x)$ with Gram-Schmidt process, with $\tau = 0.01$. In b), QR via Householder transformations, with $ \tau = 0.001, Q_{0,i,j} = 0.3 $ In c) Löwdin orthogonalization was used with $\tau = 0.001$ and $Q_{0,1,1}, Q_{0,1,2},Q_{0,2,1},Q_{0,2,2} =  0.35,0.45,0.4,0.5$. In all of the above simulations $n=1000,\beta_1 = 0.2, \beta_2 = 1$ were used, with 10 Monte Carlo averages, $2$ standard deviation error bars for simulations.} 
\end{figure}

The identified regimes of decoupling and competition are not unique to specific configurations; rather, they are observable across other orthogonalization schemes and nonlinearities. In this section, we provide supplementary plots of the learning dynamics to demonstrate that our results are not restricted to Gram-Schmidt orthogonalization or to cubic nonlinearities. 

Orthogonality may be enforced through several methodologies: sequential schemes such as QR decomposition (utilizing Householder or Gram-Schmidt), symmetric approaches via Löwdin (polar) orthogonalization, Cayley-transform updates, or Riemannian retractions on the Stiefel manifold.\cite{lodvin,Edelman1998,WenYin2013,Boumal2023}. 
We simulate two orthogonalization schemes, specifically QR with Householder, which is a sequential method, also Löwdin orthogonalization, where estimate vectors are orthogonalized concurrently rather than sequentially as in the QR approach. In Figure \ref{fig:QR}, and Figure \ref{fig:löwdin} we can clearly observe the competition between components, as one of the estimates is first forced to unlearn, then eventually wins the competition. 

Next, given that our Theorem \ref{theorem} holds for any nonlinearity satisfying the underlying assumptions, we can evaluate other score functions and directly compare theoretical predictions with empirical simulations. Using Equation \eqref{eq:Q_ODE}, we analyze the case where $f(x) = \tanh(x)$. Under the competition regime initializations, we obtain the results shown in Figure \ref{fig:tanh}. We observe strong agreement between our theoretical ODE trajectories and empirical Monte Carlo simulations, further validating that our theory and regime identification hold for arbitrary nonlinearities. 

\end{document}